\documentclass[twoside,11pt]{article}

\usepackage{blindtext}

\usepackage[preprint]{jmlr2e}
\usepackage{amsmath}
\newtheorem{assumption}[theorem]{Assumption}
\usepackage{mathtools}

\newcommand{\br}{\mathbb{R}}

\newcommand{\bn}{\mathbb{N}}

\newcommand{\bs}{\mathbb{S}}
\newcommand{\bE}{\mathbb{E}}

\newcommand{\ct}{\mathcal{T}}
\newcommand{\cd}{\mathcal{D}}

\newcommand{\cx}{\mathcal{X}}

\newcommand{\ch}{\mathcal{H}}
\newcommand{\cp}{\mathcal{P}}

\newcommand{\lam}{\lambda}

\newcommand{\set}[1]{\left\{#1\right\}}

\newcommand{\cl}{\mathrm{cl}}

\newcommand{\sgn}{\mathrm{sgn}}

\newcommand{\supp}{\operatorname{supp}}
\newcommand{\curv}[1]{\left\{#1\right\}}

\newcommand{\unif}{\operatorname{Unif}}
\newcommand{\vol}{\operatorname{Vol}}

\newcommand{\mt}{m,\tau}
\newcommand{\bl}{\beta,\lambda}

\usepackage{xcolor}

\usepackage{lastpage}
\jmlrheading{volume}{2026}{1-\pageref{LastPage}}{month/year}{month/year}{21-0000}{Shuang Liang, Tom Jacobs and Guido Mont\'ufar}

\ShortHeadings{Implicit Bias of SGD in Multivariate ReLU Networks}{Liang, Jacobs, Mont\'ufar} 
\firstpageno{1}

\begin{document}

\title{Implicit Bias of SGD in Multivariate ReLU Networks: Effective Width Collapse}

\author{\name Shuang Liang \email liangshuang@g.ucla.edu \\
       \addr Department of Statistics \& Data Science\\
       University of California, Los Angeles  \\
       Los Angeles, CA 90095, USA
       \AND
       \name Tom Jacobs \email tom.jacobs@cispa.de \\
       \addr CISPA Helmholtz Center for Information Security 
       \AND 
       \name Guido Mont\'ufar \email montufar@math.ucla.edu \\
       \addr Departments of Mathematics and Statistics \& Data Science\\
       University of California, Los Angeles; and\\
       Max Planck Institute for Mathematics in the Sciences}

\editor{My editor}

\maketitle

\begin{abstract}%   <- trailing '%' for backward compatibility of .sty file
We study the implicit bias of noisy stochastic gradient descent in training wide two-layer ReLU networks for multivariate regression. In a mean-field regime, the training dynamics are approximated by a Wasserstein gradient flow that converges to a unique stationary measure. We characterize the structure of this stationary measure and the predictor it represents. We show that, despite the network being infinitely overparameterized, the learned predictor admits an effectively finite representation: the input weights and biases align along finitely many directions, leading to an effective width collapse. In particular, the solution function is continuous piecewise affine, with affine regions determined by the cells of a finite hyperplane arrangement. The number of learned directions, and hence hyperplanes, is bounded above by $2\cp-1$, where $\cp$ denotes the number of linear dichotomies realizable on the training inputs. We further establish a non-redundancy property of the learned representation by proving that each learned direction induces a unique ternary activation pattern on the training data. 
Consequently, the complexity of the learned predictor is governed by the combinatorial geometry of the training data. 
\end{abstract}

\begin{keywords}
    implicit bias, stochastic gradient descent, mean-field limit, ReLU neural networks, feature learning. 
\end{keywords}

\section{Introduction}

Overparameterized neural networks generally admit infinitely many interpolating solutions for a given training data set. 
Yet optimization algorithms such as stochastic gradient descent (SGD) consistently select solutions that generalize remarkably well. 
Understanding the principles governing this selection mechanism is a central question in modern learning theory. 
One approach to this question is through the notion of \emph{implicit bias}, which posits that optimization algorithms not only minimize the training error but also systematically favor certain minimizers over others. 
For example, in linear models, gradient-based optimization is known to select minimum-norm solutions. 
More generally, a large body of work has characterized the implicit bias of gradient descent and SGD through variational principles involving margin maximization, reproducing kernel Hilbert space norms, total variation of curvatures, and other notions of complexity \citep[e.g.,][]{zhang2019type, chizat2020implicit, Lyu2020Gradient, ji2020directional}. 
Despite this substantial progress, our understanding of the implicit bias of nonlinear neural networks remains incomplete, particularly in feature-learning regimes.

A natural framework for studying implicit bias in feature-learning dynamics is the mean-field regime \citep{mei2018mean,chizat2018global,rotskoff2018trainability,sirignano2020mean}. 
In this regime, a wide two-layer network is represented by a probability measure over neurons, and the training dynamics are described by a Wasserstein gradient flow on the space of probability measures. 
Moreover, when stochastic noise is incorporated into the dynamics, the resulting evolution admits a unique stationary measure that characterizes the long-time behavior of noisy SGD. 
This raises the following fundamental question: 
\emph{What is the structure of the stationary measure and of the solution function represented by this stationary measure?}

\begin{figure}[t]
    \centering
    \includegraphics[width=1\linewidth]{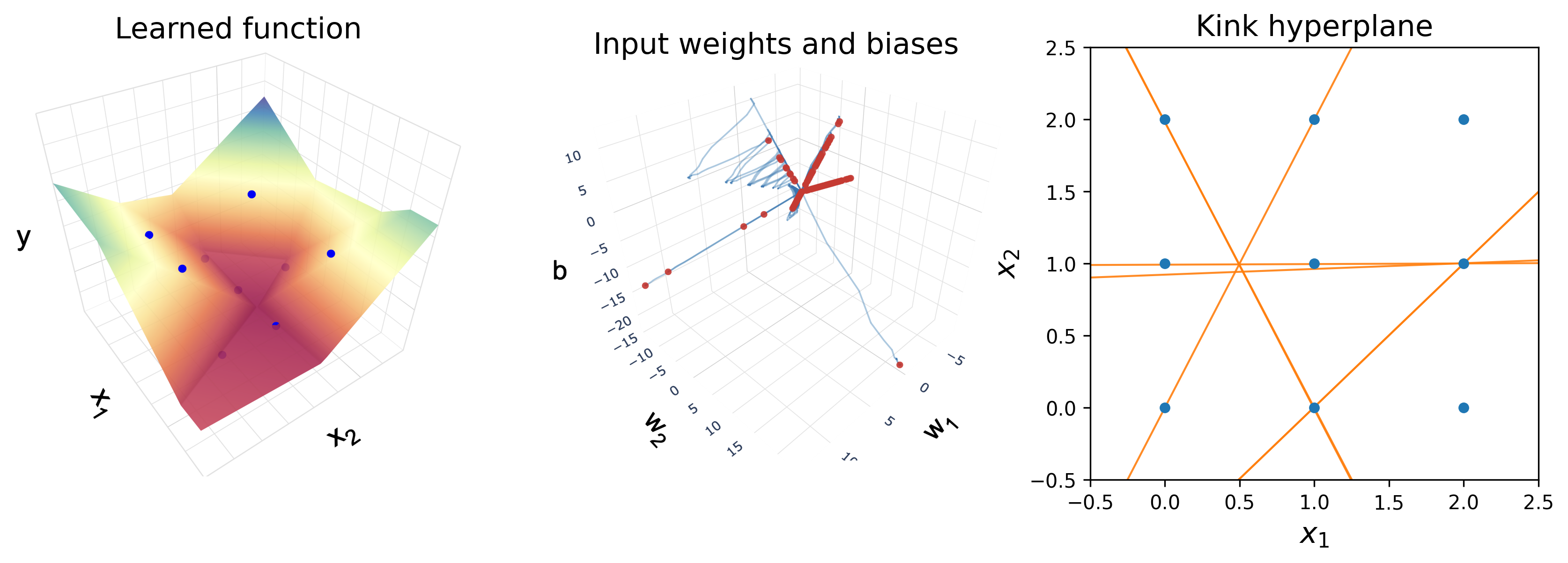}
    \caption{A two-layer ReLU network with $N=1000$ hidden neurons and two input dimensions, trained via SGD with weight decay. 
    Left: the learned function is piecewise affine and approximately fits the training data (blue dots). 
    Middle: the training trajectories of the input weights and biases 
    $(w^i_1,w^i_2,b^i)\in \br^3$, $i=1,\ldots, N$ align along a small number of directions, and the effective width of the network collapses. 
    Right: the kink hyperplanes of the trained function (orange lines) form a non-redundant arrangement in relation to the input data (blue dots).}
    \label{fig:intro}
\end{figure}

\paragraph{Main Contributions} 
We study multivariate regression with wide two-layer ReLU networks trained by noisy stochastic gradient descent with weight decay in the mean-field regime. 
Our main results, illustrated in Figure~\ref{fig:intro}, are as follows: 
\begin{itemize} 

\item 
In Theorem~\ref{thm:pwl}, we show that, as the noise level tends to zero, the solution selected by noisy SGD 
converges, irrespective of the strength of weight decay, to a continuous piecewise affine function. 
More precisely, this function is affine on each connected component of the complement of a finite hyperplane arrangement. 

\item 
We further show that the number of hyperplanes is bounded above by $2\mathcal P-1$, where $\mathcal P$ denotes the number of linear dichotomies realizable on the training inputs. 
Thus, the linear-region complexity of the learned solution is controlled by the combinatorial geometry of the training data set. 

\item 
Next, in Theorem~\ref{thm:feature}, we show that the input weights and biases of the trained network collapse onto at most $2\cp - 1$ distinct directions. Consequently, the trained network has effective width at most $2\cp - 1$. 

\item 
Finally, in Theorem~\ref{thm:nonredun}, we establish a strong non-redundancy property of these effective neurons. 
Specifically, distinct effective neurons induce distinct ternary activation patterns on the training inputs.
\end{itemize}

Our work builds on the analysis of \citet{shevchenko2022mean}, who characterized the solution functions selected by noisy SGD in the mean-field regime for univariate ReLU networks. 
We extend this analysis to the multivariate setting and, in addition, characterize the structure of the stationary measure, thereby describing the network parameters at convergence. 
In Section~\ref{sec:related-works} we place our results in the context of recent advances on mean-field dynamics, noisy SGD, implicit bias, and sparse or finite-support solutions. 
To our knowledge, our results provide the first derivation of effective width collapse as an implicit bias of noisy SGD for multivariate regression with general training data, together with a problem-dependent characterization of the learned solution.

\paragraph{Organization} 
The remainder of the article is organized as follows. 
Section~\ref{sec:related-works} gives an overview of related work. 
Section~\ref{sec:preliminaries} provides the necessary preliminaries, introducing the mean-field analysis of two-layer networks and discussing the implementation of ReLU activation functions within this framework. 
Section~\ref{sec:main-results} presents our main results, and Section~\ref{sec:discussion} discusses their interpretation, technical aspects, and limitations. 
Section~\ref{sec:proof} outlines the proof strategy and key components. 
Section~\ref{sec:experiments} presents numerical experiments illustrating the main results. 
Supporting results and technical proofs are collected in the appendices.

\section{Related Work}  
\label{sec:related-works}

%This section briefly reviews prior work on the implicit bias of wide neural networks, sparsity induced by optimization dynamics, and sparsity induced by explicit regularization. 

\paragraph{Implicit Bias for Wide Neural Networks} 
The implicit bias of gradient-based parameter optimization in wide neural network depends significantly on the training regime, in particular, on the scale of network parametrization and initialization relative to the network width 
\citep[see, e.g.,][]{JMLR:v22:20-1123,yang2021tensor}. 
A significant line of work has studied the implicit bias in the \emph{lazy} or \emph{kernel} regime \citep{jacot2018neural,du2018gradient, arora2019fine,lee2019wide}, where the network parameters remain close to their initialization throughout training and no feature learning takes place. 
In this regime, the implicit bias is characterized as minimization of a reproducing kernel Hilbert space norm \citep{zhang2019type}, 
whose explicit form has been investigated in series of works \citep{williams2019gradient, sahs2022shallow, heiss2023implicit,jin2023implicit,liang2025implicit}.

A distinct and important training regime is the mean-field regime, where the network parameters are modeled as an interacting particle system whose evolution is described by a nonlinear PDE over probability measures \citep{mei2018mean, chizat2018global, rotskoff2018trainability, sirignano2020mean}. 
This setting is of particular interest because it captures many nontrivial dynamical phenomena, including feature learning. 
However, compared to the kernel regime, much less is known about the convergence of training and the resulting implicit bias. 
The challenges in analyzing the mean-field dynamics are discussed by \citet{ma2020towards}. 
Nevertheless, several important advances have been made in characterizing the implicit bias in the mean-field regime. 
In a classification setting, \citet{chizat2020implicit} showed that if the network parameters converge in direction, the converged direction is a maximizer for an $\mathcal{F}_1$ norm maximum margin problem. 
\citet{pellegrini2020analytic} provided an analysis for hinge loss and linearly separable data with a spherically symmetric input distribution. Assuming infinitely many training samples, they showed that the ReLU network implements a linear classifier throughout training. 

The work most closely related to ours is that of \citet{shevchenko2022mean}, whose analysis provides the main foundation for the present work. 
They studied noisy SGD in a setting where the global convergence of the corresponding mean-field PDE to a stationary measure had previously been established by \citet{mei2018mean}. 
For univariate ReLU networks, they showed that, in the vanishing-noise limit, the stationary measure corresponds to a piecewise linear function with at most three knots between any two consecutive training inputs. 
A detailed comparison between their results and ours is provided in Section~\ref{sec:discussion}.

\paragraph{Explaining Sparsity from Training Dynamics} 
The phenomenon that training dynamics lead to sparse features has been widely observed empirically \citep{maennel2018gradient,JMLR:v22:20-1123, zhou2022empirical}. 
One attempt to explain this phenomenon is through \emph{early alignment} \citep{maennel2018gradient}, whereby, under infinitesimal initialization, the network parameters align along a small number of directions during the initial phase of training. 
Motivated by this phenomenon, a line of work has investigated the full training dynamics and 
established effective width collapse \citep{phuong2021the, lyu2021gradient, wang2022the, boursier2022gradient, chistikov2023learning, min2024early, boursier2025early}. 
Building on the connection between gradient flow and maximum-margin optimization in homogeneous networks for classification \citep{Lyu2020Gradient,ji2020directional}, 
\citet{frei2023implicit,kou2024implicit} demonstrated an implicit bias toward low-rank input weights. 
Moreover, \citet{safran2022effective} showed that, for univariate classification problems whose labels are generated by a teacher network of width $r$, the learned function has effective width $O(r)$. 
For Gaussian cluster data and linear classification loss, \citet{vasudeva2026rich} showed that, with infinitely many training samples, the trained ReLU network yields a linear decision boundary.

Besides \citet{shevchenko2022mean}, existing works establishing effective width collapse as a consequence of the training dynamics rely on restrictive assumptions on the training data. 
In classification, these include 
orthogonally separable \citep{phuong2021the,wang2022the},
symmetric separable \citep{lyu2021gradient}, 
nearly orthogonal \citep{frei2023implicit, kou2024implicit}, and positively correlated  \citep{min2024early} training data. 
In multivariate regression, 
\citet{boursier2022gradient} assume that the training inputs are mutually orthogonal, 
\citet{chistikov2023learning} consider labels generated by a single neuron, and 
\citet{boursier2025early} analyze a training set consisting of three data points. 
In addition, these works typically require the parameter initialization to have sufficiently small scale or to satisfy a balancedness condition. 
% %
By contrast, we establish effective width collapse under minimal assumptions on the training data and parameter initialization. 
The tradeoff is that our analysis relies on the network being sufficiently wide, allowing us to exploit the mean-field framework.

\paragraph{Explaining Sparsity from Optimal Solutions}
Our results complement prior work on sparse or finite-support solutions of static optimization problems that capture the learning objective. 
The setting closest to ours is that of shallow ReLU networks with unbounded width and Euclidean norm regularization of the network parameters. 
By the homogeneity of the ReLU activation, this is equivalent to regularizing the $\mathcal{F}_1$ functional norm \citep[see, e.g.,][]{bach2017breaking}. 
For univariate ReLU networks with a free skip connection, \citet{boursier2023penalising} showed that, the minimum-$\mathcal{F}_1$-norm interpolator is unique and can be expressed as the sum of an affine function and a ReLU network of width at most $M-1$, where $M$ is the number of training samples. 
For multivariate data, \citet{de2020sparsity, del2026dual} 
showed that every minimum-$\mathcal{F}_1$-norm interpolator admits a representation as a ReLU network with effective width bounded by a constant $K$ that depends on the input dimension and sample size. 
By contrast, we derive an effective-width bound for the solution selected by noisy SGD. 
Our bound is strictly sharper than those of \citet{de2020sparsity, del2026dual}; see a detailed comparison in Section~\ref{sec:discussion}.

A different regularizer, which penalizes only the Euclidean or $\ell_p$ norm of the network weights but not the biases, has been studied both for unbounded-width networks \citep{savarese2019infinite, ongie2019function, parhi2021banach,debarre2022sparsest} and finite-width networks \citep{JMLR:v22:20-1447, nakhleh2026global}. 
In both settings, it has been shown that there exist optimal solutions that can be represented as ReLU networks with width at most $K$, where $K$ is a data-dependent constant. 

In classification, another widely studied optimization problem is the maximum-margin problem. 
This line of work is motivated by the observation of \citet{Lyu2020Gradient, ji2020directional} that, under suitable conditions, training homogeneous neural networks converges in direction to a KKT point of a maximum-margin problem. 
Subsequent work has investigated the structure of the maximum-margin solutions and their connection to  sparse feature representations \citep{frei2023benign,timor2023implicit, tsilivis2025flavors, jacobs2026implicitbiasmirrorflow, jacobs2026never}. 

While these works provide a detailed understanding of the solutions of the associated optimization problem, it remains largely open under what conditions a training algorithm converges to such solutions. 
Our results bridge this gap by showing that effective width collapse arises not only as a property of static variational problems, but also as a consequence of noisy SGD training dynamics.

\section{Preliminaries} 
\label{sec:preliminaries}

In this section, we recall preliminary results from the mean-field theory of wide neural networks and introduce a bounded, smoothed approximation of the ReLU neuron, which is needed to apply these mean-field results.

\subsection{Mean-Field Analysis for Wide Two-Layer Networks} 
\label{sec:mean field two layer wide network}

Consider fully connected networks with $d$ input units, $N$ hidden units, and one output unit, defined by 
\begin{equation}\label{eq:model}
    h_\sigma(x,\Theta)= \frac{1}{N}\sum_{i=1}^N\sigma(x,\theta_i) , 
\end{equation}
where $\sigma\colon \br^d \times \br^D\to \br$ is the unit (or neuron) activation, $\theta_i\in \br^D$ denotes the parameter vector of the $i$th hidden unit, and $\Theta=(\theta_i)_{i=1}^N \in \br^{ND}$ denotes the parameter of the entire network. 
To recover a ReLU network, we set $\sigma(x,\theta)=a (w^\top x+b)_+$, where $\theta=(a,w,b)\in \br\times \br^{d}\times\br = \br^{d+2}$ consists of the output weight $a$, the input weights $w$, and the bias $b$, and where $(u)_+=\max(u,0)$. 
Assume that the parameters of the hidden units are initialized as i.i.d.\ samples from a fixed measure $\rho_0\in \cp_2^a(\br^D)$. 
Here and throughout, $\cp_2^a(\br^D)$ denotes the space of absolutely continuous probability measures on $\br^D$ with finite second moments.

Consider a finite training data set $\set{(x_j,y_j)}_{j=1}^M\subset \br^d \times \br$. 
We consider the mean squared error with $\ell_2$-regularization: 
$$
L(\Theta)=\frac{1}{M}\sum_{j=1}^M (h_\sigma(x_j,\Theta)-y_j)^2 + \frac{\lambda}{N} \|\Theta\|^2. 
$$
To minimize $L$, we train the neural network using noisy stochastic gradient descent (noisy SGD), with parameter updates of the form 
\begin{equation}
\label{eq:SGD}
    \theta_{i}^{k+1} = \theta_i^k - 2 s_{k,N} \big(h_\sigma(\tilde{x}_k,\Theta^k)-\tilde{y}_k\big) \nabla_{\theta_i} \sigma(\tilde{x}_k, \theta_i^k) - 2s_{k,N} \lambda \theta_i^k + \sqrt{\frac{2s_{k,N}}{\beta}}g_i^k, 
\end{equation} 
where $s_{k,N}$ is the step size, $(\tilde{x}_k,\tilde{y}_k)_{k\in {\bn_0}}$ are i.i.d.\ samples drawn from the empirical distribution $\frac{1}{M}\sum_{j=1}^M\delta_{(x_j,y_j)}$ induced by the training data, and $(g_i^k)_{i\in \set{1,\ldots,N},k\in \bn}$ are i.i.d.\ Gaussian noise drawn from $N(0,I_D)$. The parameter $\beta^{-1}$ controls the noise level and is commonly referred to as the \emph{temperature}. In this work, we focus on the vanishing-temperature regime, $\beta^{-1} \to 0$, in which case \eqref{eq:SGD} recovers the standard SGD algorithm \citep{robbins1951stochastic}.

The network model~\eqref{eq:model} depends on the parameter $\Theta$ only through the empirical measure 
$\tfrac{1}{N}\sum_{i=1}^N\delta_{\theta_i}$. 
In particular, the ordering of the hidden units does not affect the network output. 
This motivates a reformulation of the model in terms of probability measures $\rho\in \cp(\br^D)$: 
$$
h_\sigma(x,\rho) = \int \sigma(x,\theta) \, d\rho.
$$ 
When $\rho = \tfrac{1}{N}\sum_{i=1}^N \delta_{\theta_i}$, this representation recovers the finite-width network~\eqref{eq:model}.

Let $\hat\rho_k^{(N)} = \tfrac{1}{N} \sum_{i=1}^N \delta_{\theta_i^k}$ denote the empirical distribution of the network parameters after $k$ steps of noisy SGD \eqref{eq:SGD}. 
Consider a step size scheme $s_{k,N} = \varepsilon \xi(k\varepsilon)$, 
where 
$\varepsilon=\varepsilon_N>0$ depends on the width $N$ 
and $\xi\colon [0,\infty) \to [0,\infty)$ is a sufficiently regular function, e.g., $\varepsilon_N=N^{-1}$ and $\xi\equiv 1$, so $s_{k,N}= N^{-1}$. 
It is known that, for any fixed training time $t\geq0$, 
the empirical parameter measure $\hat\rho_{\lfloor t/\varepsilon \rfloor}^{(N)}$ converges weakly to $\rho_t$ as $\varepsilon \to 0$ and $N\to\infty$, 
where $\rho_t$ is the solution of the following partial differential equation with initial condition $\rho_0$ (the parameter initialization distribution): 
\begin{equation}\label{eq:pde}
    \begin{aligned}
        \partial_t \rho_t &= 2\xi(t)\nabla_\theta \cdot \Big(\rho_t \nabla_\theta \Psi^\lambda_\sigma(\theta;\rho_t) \Big) + 2\xi(t) \beta^{-1} \Delta_\theta\rho_t\\
        \Psi^\lambda_\sigma(\theta;\rho) &= \frac{1}{M}\sum_{j=1}^M\Big(h_\sigma(x_j,\rho)-y_j\Big) \sigma(x_j,\theta) +\frac{\lambda}{2}\|\theta\|^2.
    \end{aligned}
\end{equation}
Here $\nabla_\theta \, \cdot$ denotes the divergence operator and $\Delta_\theta$ the Laplacian. 
The mean-field potential $\Psi^\lambda_\sigma(\theta;\rho)$ is interpreted as the loss experienced by an individual neuron when all other neurons are represented collectively by the distribution $\rho$. 
Thus, $\hat\rho_{\lfloor t/\varepsilon \rfloor}^{(N)}$ may be viewed as a discrete-time, finite-particle approximation of $\rho_t$. 
Note that in the step size scheme $s_{k,N}=\varepsilon \xi(k\varepsilon)$, $\varepsilon$ specifies the time discretization scale and $\xi$ controls the speed of evolution of the PDE~\eqref{eq:pde}.

Let $\mathcal{H}$ denote the set of admissible densities of probability measures in $\mathcal{P}_2^a(\mathbb{R}^D)$:
$$
\ch = \set{\rho \colon \br^{D}\to [0,+\infty) \colon \rho \text{ is measurable}, \int \rho(\theta)d\theta =1, M(\rho)<+\infty}, 
$$
where $M(\rho)=\int \|\theta\|^2\rho(\theta)d\theta$ is the second moment. 
For notational simplicity, we use $\rho$ to denote both a measure and its density when the latter exists.
Then PDE~\eqref{eq:pde} can be interpreted as a Wasserstein gradient flow on $\ch$, minimizing the following free energy: 
\begin{equation}\label{eq:energy}
    F_\sigma^{\bl}(\rho) = \frac{1}{2M}\sum_{j=1}^M\Big(h_\sigma(x_j,\rho)-y_j\Big)^2 +  \frac{\lambda}{2}M(\rho)-
\frac{1}{\beta}H(\rho) , 
\end{equation}
where $H(\rho)=-\int\rho(\theta)\log(\rho(\theta))d\theta$ is the entropy. 
Under suitable regularity assumptions on the activation function $\sigma$ and the initialization measure $\rho_0$, 
it has been shown that the free energy~\eqref{eq:energy} admits a unique minimizer $\rho^\ast$ in $\ch$, and that PDE~\eqref{eq:pde} converges weakly to this minimizer. 
Moreover, the minimizer satisfies the following condition: 
\begin{equation}\label{eq:minimizer}
    \rho^\ast(\theta) =Z_{\beta,\lambda,\rho^\ast}^{-1}  \exp(-\beta \Psi^\lambda_\sigma(\theta;\rho^\ast)), 
\end{equation}
where $Z_{\beta,\lambda, \rho}=\int \exp(-\beta \Psi^\lambda_\sigma(\theta;\rho)) d\theta $ is the partition function. 
This condition follows from the Euler-Lagrange equation characterizing minimizers of the free energy $F^{\bl}_\sigma$ \citep[see][Lemma 10.3]{mei2018mean}. 
Notably, since $\rho^\ast$ appears on both sides of~\eqref{eq:minimizer}, 
the equation constitutes a self-consistency condition for the stationary measure. 
As the right-hand side has the form of a Boltzmann distribution, \eqref{eq:minimizer} is referred to as a \emph{Boltzmann fixed point condition}.

The aforementioned results are summarized in the following theorem.

\begin{theorem}[\citealt{mei2018mean}]\label{thm:mei}
    Consider a neural network~\eqref{eq:model} with activation $\sigma$, trained using noisy SGD~\eqref{eq:SGD} with step size scheme $s_{k,N}=\varepsilon \xi(k\varepsilon)$, initialization distribution $\rho_0$, and $\beta\ge 1$, $\lambda>0$. 
    Assume that: 
    (i) $\xi\colon [0,\infty)\to [0,\infty)$ is bounded and Lipschitz continuous, with $\int_{0}^\infty \xi(t)dt=+\infty$;
    (ii) $\rho_0\in \cp_2^a(\br^D)$ and is sub-Gaussian; 
    (iii) 
    $\sigma$ is bounded and for any fixed $x\in \br^d$, $\sigma(x,\cdot)\in C^4(\br^D,\br)$ and $\nabla^k_\theta \sigma(x,\cdot)$ is bounded for $k=0,\ldots,4$. 
    Then, the free energy $F_\sigma^{\bl}$ admits a unique minimizer $\rho^\ast$ in $\ch$, which satisfies~\eqref{eq:minimizer}. 
    Furthermore, for any fixed $x\in \br^d$, the following holds almost surely with respect to the random initialization, the data and noise sampling in 
    the noisy SGD algorithm: 
    \begin{equation*}
        \lim_{t\to+\infty} \lim_{\substack{ N \to +\infty\\ \varepsilon \to 0}}
        \left|\frac{1}{N}\sum_{i=1}^N\sigma(x, \theta_i^{\lfloor t/\varepsilon\rfloor})
        -h_\sigma(x,\rho^\ast)\right|=0,
    \end{equation*}
    where the limit in $(N,\varepsilon)$ is taken along any sequence $(N,\varepsilon_N)$ satisfying that $(N,\varepsilon_N)\to(+\infty,0)$, 
    $\varepsilon_N \log (N/\varepsilon_N) \to 0$, and 
    $N/ \log (N/\varepsilon_N) \to \infty$.
\end{theorem}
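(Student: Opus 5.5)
This theorem is a restatement of results of \citet{mei2018mean}. My plan is to assemble it from three pieces: existence and uniqueness of the free-energy minimizer, convergence of the PDE~\eqref{eq:pde} to that minimizer, and propagation of chaos linking noisy SGD to the PDE. I would follow the structure of \citet[Theorem 4 and Lemma 10.3]{mei2018mean} and check that assumptions (i)--(iii) are exactly what is needed at each stage.

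\emph{Step 1 (the minimizer).} First I show that $F_\sigma^{\bl}$ is bounded below and coercive on $\ch$. The loss term is nonnegative. For the remaining terms, the standard bound $-H(\rho)\ge -C - c\,M(\rho)$, obtained by comparing with a Gaussian via Gibbs' inequality, shows that $\frac{\lambda}{2}M(\rho)-\beta^{-1}H(\rho)$ controls $M(\rho)$ once $\lambda>0$. Sublevel sets are then tight and uniformly integrable. Lower semicontinuity holds for each term: the loss term because $\sigma$ is bounded and continuous, the second moment by Fatou, and $-H$ on sets of bounded second moment. Together these give existence of a minimizer.

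Uniqueness follows from convexity. The map $\rho\mapsto h_\sigma(x_j,\rho)$ is linear, so the loss term is convex. $M$ is linear and $-H$ is strictly convex. Hence $F_\sigma^{\bl}$ is strictly convex on the convex set $\ch$.

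To derive~\eqref{eq:minimizer}, I perturb the minimizer as $\rho^\ast+\epsilon(\nu-\rho^\ast)$ and compute the first variation. This gives $\Psi^\lambda_\sigma(\theta;\rho^\ast)+\beta^{-1}\log\rho^\ast(\theta)=\text{const}$ $\rho^\ast$-a.e. A separate argument is needed to show $\rho^\ast>0$ everywhere; otherwise mass could be moved to the zero set to decrease the entropy term at first order. With positivity in hand, exponentiating gives the Boltzmann condition.

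\emph{Step 2 (PDE convergence).} The PDE~\eqref{eq:pde} is the Wasserstein gradient flow of $F_\sigma^{\bl}$, time-changed by $\xi$. Well-posedness follows from a fixed-point argument, using that $\nabla_\theta\Psi$ is Lipschitz in $\theta$ and in $\rho$ (in $W_2$) under (iii). Along the flow one has the energy dissipation identity
\[
\frac{d}{dt}F_\sigma^{\bl}(\rho_t) = -2\xi(t)\int \big\|\nabla_\theta \Psi^\lambda_\sigma(\theta;\rho_t)+\beta^{-1}\nabla_\theta\log\rho_t\big\|^2\rho_t\,d\theta .
\]
Since $\int\xi=\infty$ and $F$ is bounded below, there is a sequence of times along which the relative Fisher information vanishes. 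Tightness from the moment bounds, together with uniqueness of the stationary point (every stationary point satisfies the Boltzmann condition, and strict convexity makes it the unique minimizer), then gives $\rho_t\rightharpoonup\rho^\ast$. Monotonicity of $F$ upgrades this to convergence of the full trajectory.

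\emph{Step 3 (SGD to PDE).} I would couple the $N$ SGD particles with $N$ i.i.d.\ copies of the nonlinear McKean--Vlasov SDE that share the same Brownian increments, with $g_i^k$ playing the role of the increments. The gap between the two is then bounded by a Grönwall argument. This gap has three sources: the time discretization error, which is $O(\sqrt{\varepsilon})$ and uses $C^4$ bounds and Lipschitz $\xi$; the mean-field fluctuation $\|\hat\rho-\rho\|$ acting through bounded $\sigma$, which is $O(1/\sqrt N)$; and the martingale noise from sampling data points. The martingale term is handled with concentration (Azuma), combined with a union bound over the $t/\varepsilon$ steps. This union bound is what produces the $\log(N/\varepsilon)$ conditions, and Borel--Cantelli then gives almost sure convergence. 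Finally, $\frac1N\sum_i\sigma(x,\theta_i)$ converges to $h_\sigma(x,\rho_t)$ by bounded Lipschitz test functions, and letting $t\to\infty$ and using Step 2 gives the claim.

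The main obstacle is Step 3. Sharp, high-probability, time-uniform-on-$[0,t]$ coupling bounds with exactly the stated rate conditions require careful control of the discretization error and the martingale concentration. The positivity argument in Step 1 is a secondary subtlety. In practice I would cite \citet{mei2018mean} for Step 3 and verify only that their hypotheses match (i)--(iii).
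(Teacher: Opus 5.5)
Your route matches the paper's, which gives no proof of its own and simply imports the result from \citet{mei2018mean}. Your Steps~1--2 are a reasonable sketch of their Lemmas~10.2--10.4 and of the long-time convergence argument. Step~3, however, fails as described, and ``verifying that the hypotheses match (i)--(iii)'' would not catch it: as displayed, \eqref{eq:SGD} and \eqref{eq:pde} are normalized inconsistently by a factor of two.

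Take $s_{k,N}=\varepsilon\xi(k\varepsilon)$ and $\sqrt{\varepsilon}\,g_i^k$ as the Brownian increment, exactly as you propose. Then the noise term is $\sqrt{2s_{k,N}/\beta}\,g_i^k=\sqrt{2\xi/\beta}\,\Delta W$. The coupled McKean--Vlasov SDE is therefore $d\bar\theta_t=-2\xi(t)\nabla_\theta\Psi^\lambda_\sigma(\bar\theta_t;\rho_t)\,dt+\sqrt{2\xi(t)/\beta}\,dW_t$. Its law solves $\partial_t\rho_t=2\xi\nabla_\theta\cdot(\rho_t\nabla_\theta\Psi^\lambda_\sigma)+\xi\beta^{-1}\Delta_\theta\rho_t$, which has half the diffusion of \eqref{eq:pde}. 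The fixed point of this PDE is $\propto\exp(-2\beta\Psi^\lambda_\sigma(\theta;\rho))$, i.e.\ the minimizer of $F^{2\beta,\lambda}_\sigma$, not of $F^{\beta,\lambda}_\sigma$. Coupling instead to the SDE with $\sqrt{4\xi/\beta}\,dW_t$ cannot close the Gr\"onwall bound, because the two limiting laws genuinely differ.

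As a sanity check, take $y\equiv 0$. The fixed point above is then $N(0,(2\beta\lambda)^{-1}I_D)$, whereas the paper itself identifies $\rho^\ast\propto\exp(-\frac{\beta\lambda}{2}\|\theta\|^2)$, with covariance $(\beta\lambda)^{-1}I_D$. So the dynamics \eqref{eq:SGD} as written tracks the minimizer at inverse temperature $2\beta$. Your argument proves the statement only after correcting the normalization: either use noise $\sqrt{4s_{k,N}/\beta}\,g_i^k$, or take $\rho^\ast$ at inverse temperature $2\beta$. This is harmless downstream, where only $\beta\to+\infty$ matters, but your proof has to make it explicit.

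There are also two smaller gaps in Step~2.
\begin{itemize}
\item Monotonicity of $F$ upgrades subsequential convergence to convergence of the whole trajectory only once you know $F(\rho_{t_k})\to F(\rho^\ast)$ along the times of vanishing dissipation. Lower semicontinuity gives only $\liminf_k F(\rho_{t_k})\ge F(\rho^\ast)$. You can get the reverse inequality from flat convexity of the loss-plus-weight-decay part, which yields $F(\rho_t)-F(\rho^\ast)\le\beta^{-1}\mathrm{KL}(\rho_t\,\|\,\nu_t)$ with $\nu_t\propto\exp(-\beta\Psi^\lambda_\sigma(\cdot;\rho_t))$. Combine this with a log-Sobolev inequality for $\nu_t$ that is uniform in $t$ (Holley--Stroock, since $\sigma$ is bounded). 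That bounds the right-hand side by a multiple of the dissipation.
\item Assumption (ii) does not ensure $H(\rho_0)>-\infty$, so $F(\rho_0)$ may be infinite and the integrated dissipation bound is then vacuous. Start the argument at some $t_0>0$, after the diffusion has regularized $\rho_{t_0}$.
\end{itemize}
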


Thus, the function $h_\sigma(\cdot,\rho^\ast)$ characterizes the implicit bias of noisy SGD training, 
and a fundamental goal of the present work is to 
understand its structure. 
This is challenging because the stationary measure $\rho^\ast$ is known only implicitly through the Boltzmann fixed point condition~\eqref{eq:minimizer}, a self-consistency equation that does not explicitly characterize either $\rho^\ast$ or the function it represents.

\subsection{Bounded and Smoothed Approximation of ReLU Neurons} 
\label{sec:relu}

Theorem~\ref{thm:mei} requires that the neuron map $\sigma(x,\theta)$ is bounded and smooth in the parameter $\theta$, assumptions that are not satisfied by the ReLU neuron 
\begin{equation}\label{eq:relu-neuron}
    \sigma(x,\theta)=a(w^\top x+b)_+. 
\end{equation}
In this subsection, we introduce a bounded and smooth approximation of the ReLU neuron that enables the application of Theorem~\ref{thm:mei}. 
Our construction follows the approximation introduced by \citet{shevchenko2022mean}. 

For $m,\tau>0$, let $(\cdot)^{m,\tau}_+$ denote the $C^4$ smooth, bounded approximation of the univariate ReLU activation, defined as follows: 
$$
(u)_+^{\mt} = \begin{cases}
    S_\tau(u)=\log(1+e^{\tau u})/\tau,& u < S_\tau^{-1}(m^2)\\
    \phi_{\mt}(u), & u\ge S_\tau^{-1}(m^2),
\end{cases}\quad u\in \br. 
$$
Thus, $(u)_+^{m,\tau}$ agrees with the softplus $S_\tau(u)=\tau^{-1}\log(1+e^{\tau u})$ until it reaches the value $m^2$, after which it is smoothly capped by the tail $\phi_{\mt}$. 
The tail is chosen such that $\|(\cdot)_+^{m,\tau}\|_\infty\leq 2m^2$ and $(\cdot)_+^{m,\tau}\in C^4(\mathbb R)$ with bounded derivatives up to the fourth order.

We also introduce a smooth, bounded approximation of the identity map in order to make the neuron output~\eqref{eq:relu-neuron} bounded in output weights $a$. 
For $m,\tau>0$ and $n\in \bn$, let 
$$
v^{\mt} = \begin{cases}
    v,& \|v\|< m-\tau^{-1}\\
    \psi_{\mt}(v), &\|v\|\ge m-\tau^{-1}, 
\end{cases}\quad v\in \br^n. 
$$
The tail $\psi_{\mt}$ is chosen so that
$\|(\cdot)^{\mt}\| \leq m$ 
and 
$(\cdot)^{\mt}\in C^4(\br^n,\br^n)$ with bounded derivatives up to the fourth order. 
In addition, we assume that the tail functions $\phi_{\mt}$ and $\psi_{\mt}$ satisfy suitable regularity properties; in particular, for every $m>0$ the limits $\lim_{\tau\to+\infty}(u)_+^{\mt}$ and $\lim_{\tau\to+\infty}v^{\mt}$ exist. 
For brevity, we defer the detailed construction to Appendix~\ref{app:relu}.

Define the \emph{$m$-truncated, $\tau$-smoothed ReLU neuron} with $m,\tau>0$: 
$$
\sigma^{\mt}(x,\theta)= a^{\mt} ( x^\top w^{\mt}+b )_+^{\mt},\quad x\in \br^d,\theta=(a,w,b)\in \br^{d+2}. 
$$
The truncation of the input weights $w$ is introduced only for technical reasons. 
By construction, the approximated ReLU neuron converges pointwise to the ReLU neuron as $\mt\to+\infty$, i.e., $\lim_{\mt\to+\infty} \sigma^{\mt}(x,\theta)=a(w^\top x+b)_+$ for every $(x,\theta)$. 
Moreover, $\sigma^{\mt}$ satisfies the assumptions of Theorem~\ref{thm:mei}. 
Let $\rho_{\bl,\mt}^\ast$ denote the stationary measure of the mean-field PDE~\eqref{eq:pde} associated with the neuron $\sigma^{\mt}$, 
regularization parameter $\lambda$, and temperature $\beta^{-1}$. 
With a slight abuse of notation, let 
$$
h^\ast_{\beta,\lambda,\mt}(\cdot)=h_{\sigma^{\mt}}(\cdot, \rho_{\bl,\mt}^\ast)
$$
denote the corresponding network function.

The goal of this work is to study the stationary measure $\rho^\ast_{\bl,\mt}$ and the corresponding solution function $h^\ast_{\bl,\mt}$ 
in order to characterize the implicit bias of noisy SGD in both parameter space and function space. 
We impose the following assumptions on $(\beta,\lambda,\mt)$.

\begin{assumption}\label{assum}
    Suppose $(\beta,\lambda,m,\tau)$ satisfies $\lambda,\tau>0$, and 
    \begin{equation*}
        \beta>\max\left\{1,\frac{1}{\lambda}, \frac{1}{\lambda} \log \frac{1}{\lambda}\right\}, \  m > \max\left\{C_0,\ \frac{1}{4}\left( \frac{C_1}{\sqrt{\lambda}}+ \sqrt{\frac{C_1^2}{\lambda}+4} \right)^2,\exp(2C_2\beta +2)\right\} , 
    \end{equation*}
    where $C_0,C_1,C_2$ are constants independent of $(\beta,\lambda, m,\tau)$. 
\end{assumption}

The next result removes the smoothing parameter $\tau$ by showing that both the stationary measure and the corresponding solution function have pointwise limits as $\tau\to+\infty$. 
Its proof is inspired by \citet[][Lemma A.4]{shevchenko2022mean} and is deferred to Appendix~\ref{app:tau}. 

\begin{lemma}[Zero-smoothing limit]
\label{lem:zerosmooth-h}
\label{prop:tau} 
    Under Assumption~\ref{assum}, 
    the following 
    pointwise limits exist: 
    $$
    \rho^\ast_{\bl,m}(\cdot)=\lim_{\tau\to+\infty} \rho^\ast_{\bl,\mt}(\cdot),\qquad
    h^\ast_{\bl,m}(\cdot)=\lim_{\tau\to+\infty}h^\ast_{\bl,\mt}(\cdot).
    $$
    Moreover, $h^\ast_{\bl,m}(\cdot)\in C^2(\br^d)$. 
\end{lemma}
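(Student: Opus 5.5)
The proof reduces the infinite-dimensional fixed point to a finite-dimensional one and then passes to the limit $\tau\to+\infty$ there. By the Boltzmann condition~\eqref{eq:minimizer}, $\rho^\ast_{\bl,\mt}$ is determined by the residual vector $r_\tau=(h^\ast_{\bl,\mt}(x_j)-y_j)_{j=1}^M\in\br^M$. Indeed,
$$
\rho^\ast_{\bl,\mt}(\theta)=Z^{-1}\exp\Big(-\beta\Big(\tfrac1M\textstyle\sum_j r_{\tau,j}\,\sigma^{\mt}(x_j,\theta)+\tfrac\lambda2\|\theta\|^2\Big)\Big).
$$
Define the map $G_\tau\colon\br^M\to\br^M$ by $G_\tau(r)_j=\int\sigma^{\mt}(x_j,\theta)\,\rho_r^{\tau}(d\theta)-y_j$, where $\rho^\tau_r$ is the Gibbs density associated with $r$. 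Then $r_\tau$ is a fixed point of $G_\tau$. Since $\rho^\ast$ is the unique minimizer of $F$, this fixed point is unique: any fixed point yields a Gibbs measure satisfying the Euler–Lagrange condition, and $F$ is strictly convex along linear interpolation (squared loss composed with a linear map, plus a linear term, plus strictly convex negative entropy), so stationary points are minimizers.

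First I establish uniform bounds. Because $|\sigma^{\mt}|\le 2m^3$ uniformly in $\tau$, we have $\|r_\tau\|\le 2m^3+\max_j|y_j|$, so all $r_\tau$ lie in a fixed compact ball $K$. The Gibbs weights are then dominated uniformly in $\tau$ and $r\in K$ by $C\exp(-\beta\lambda\|\theta\|^2/2)$, and the partition functions are bounded above and away from zero.

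Second, pass to the limit in the map. As $\tau\to\infty$, $\sigma^{\mt}(x,\theta)\to\sigma^m(x,\theta):=a^m(x^\top w^m+b)_+^m$ pointwise; the limits exist by the regularity assumptions on the tails. The convergence is in fact locally uniform on compacts in $(r,\theta)$ and bounded. Dominated convergence then gives $G_\tau\to G_\infty$ uniformly on $K$. Here $G_\infty$ is the analogous map built from $\sigma^m$, and it is continuous.

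Third, I show that $G_\infty$ has a unique fixed point $r_\infty$. Existence follows from Brouwer's theorem on $K$, or from limits of $r_\tau$. Uniqueness follows from the same convexity argument applied to the free energy with the limiting activation $\sigma^m$, which is bounded and measurable; a fixed point of $G_\infty$ gives a Gibbs measure minimizing that strictly convex functional. I expect uniqueness to be the main obstacle. If the convexity route is delicate for the nonsmooth $\sigma^m$, the fallback is monotonicity: the map $r\mapsto$ (vector of Gibbs means) is the gradient of $-\beta^{-1}\log Z(r)$ up to scaling, and this log-partition function is convex in $r$. Hence $r\mapsto r-(\text{means}-y)$ has a strictly monotone structure, which forces uniqueness.

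Given uniqueness, every subsequential limit of $r_\tau$, which exists by compactness, is a fixed point of $G_\infty$ by uniform convergence. Hence $r_\tau\to r_\infty$. Pointwise convergence of $\rho^\ast_{\bl,\mt}(\theta)$ follows from pointwise convergence of the exponent together with convergence of $Z$, again by dominated convergence. The same argument gives $h^\ast_{\bl,\mt}(x)\to h^\ast_{\bl,m}(x):=\int\sigma^m(x,\theta)\rho^\ast_{\bl,m}(\theta)\,d\theta$ for every $x$.

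For $C^2$ regularity I write
$$
h^\ast_{\bl,m}(x)=\int a^m\,(x^\top w^m+b)_+^m\,\rho^\ast_{\bl,m}(\theta)\,d\theta .
$$
The limiting activation $(u)_+^m$ is the ReLU capped smoothly, so it is not differentiable at $u=0$. I therefore integrate out the bias: substitute $b\mapsto b-x^\top w^m$, which moves the $x$-dependence into the density, $\rho^\ast_{\bl,m}(a,w,b-x^\top w^m)$. This density is a smooth Gaussian factor times $\exp$ of a bounded term. Since the bias enters $\sigma^m$ only through $u=x_j^\top w^m+b$, the density is differentiable in $b$ almost everywhere, with derivative bounds involving only the Lipschitz constant of $(\cdot)_+^m$ and of $(\cdot)_+^m$'s one-sided derivative, which has a jump.

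If this is insufficient for two derivatives, I instead differentiate under the integral in the original form. The first derivative is $\int a^m\,\mathbf 1\{u>0\}\,w^m\rho\,d\theta$ for $u$ near $0$. The second derivative produces a surface integral over $\{x^\top w^m+b=0\}$, which is finite and continuous in $x$ because $\rho$ is continuous and Gaussian-dominated. Continuity of this surface term in $x$ is the remaining technical check, since $\rho$ itself has kinks on the finitely many sets $\{x_j^\top w^m+b=0\}$.
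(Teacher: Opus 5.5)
Your argument for the existence of the pointwise limits is correct, but it follows a different route from the paper's.

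The paper works at the level of densities. The uniform Gaussian-tail and truncated-entropy bounds give weak-$L^1$ compactness of $\{\rho^\ast_{\bl,\mt}\}_\tau$ (Lemma A.1 of \citet{shevchenko2022mean}). A liminf inequality for the free energy uses Portmanteau for $M(\rho)$, lower semicontinuity of $-H$, and dominated convergence for the loss. Combined with testing the minimality of $\rho^\ast_{\bl,\mt}$ against $\rho^\ast_\alpha$, this shows every weak limit minimizes $F^{\bl}_{\sigma^m}$. Uniqueness of that minimizer, imported from \citet{mei2018mean}, identifies the limit. Only afterwards does the paper pass residuals and $Z$ to the limit to get pointwise convergence of the densities.

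You instead use from the outset that the Boltzmann density depends on $\rho^\ast$ only through the $M$ residuals. Compactness is then free, since the residuals stay bounded because $|\sigma^{\mt}|\le 2m^3$ uniformly in $\tau$. The limit passage becomes dominated convergence for a map on $\br^M$. Uniqueness of the limiting fixed point follows from your monotonicity observation: with $\mu(r)=-\frac{M}{\beta}\nabla_r\log Z(r)$ the vector of Gibbs means and $\log Z$ convex, two fixed points $r=\mu(r)-y$, $r'=\mu(r')-y$ satisfy $\|r-r'\|^2=\langle\mu(r)-\mu(r'),r-r'\rangle\le 0$.

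Your route is more elementary and self-contained: no weak-$L^1$ compactness, no entropy lower semicontinuity, and no external uniqueness result. The paper's route identifies the limit directly as the minimizer $\rho^\ast_\alpha$ of $F^{\bl}_{\sigma^m}$, which later results (e.g.\ Lemmas~\ref{lem:Mbound} and~\ref{lem:hess-Z}) rely on. Your convexity remark recovers this, since any Boltzmann fixed point $\rho_r$ satisfies $F^{\bl}_{\sigma^m}(\rho)-F^{\bl}_{\sigma^m}(\rho_r)\ge\beta^{-1}\mathrm{KL}(\rho\,\|\,\rho_r)$.

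For the $C^2$ claim, your second route is the paper's, but you leave it unfinished. The paper obtains the surface term rigorously by computing the distributional Hessian of $h^\ast_{\bl,m}$ via Fubini against a test function. It uses $D_x^2\sigma^m(x,\theta)=a^m w^m(w^m)^\top\bigl(\delta(x^\top w^m+b)+\mathbb{I}_{\{x^\top w^m+b\ge m^2\}}\phi_m''(x^\top w^m+b)\bigr)$, which yields \eqref{eq:Dx}. A continuous distributional Hessian then gives $C^2$.

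As written, you do not justify the surface term: the indicator in $\nabla_x h$ cannot be differentiated pointwise under the integral. You also drop the cap term. Two fixes are available:
\begin{itemize}
\item Do the distributional computation as the paper does.
\item Substitute $b\mapsto b-x^\top w^m$ in the gradient formula $\nabla_x h(x)=\int a^m w^m\,((x^\top w^m+b)_+^m)'\,\rho\,d\theta$ rather than in $h$. The second derivative then falls only once on the locally Lipschitz density. Putting both derivatives on $\rho$, as in your first route, produces deltas.
\end{itemize}

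Finally, the ``remaining technical check'' is not an obstacle. $\rho^\ast_{\bl,m}$ is continuous and bounded by $Ce^{-\beta\lambda\|\theta\|^2/2}$, so dominated convergence gives continuity of $x\mapsto\int a^m w^m(w^m)^\top\rho^\ast_{\bl,m}(a,w,-x^\top w^m)\,da\,dw$. The kinks of $\rho^\ast_{\bl,m}$ affect only its derivatives, not its continuity. The cap term is continuous because $(\cdot)_+^m\in C^4((0,\infty))$ and its second derivative vanishes on $(0,m^2]$.
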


We are interested in the limits of 
$\rho^\ast_{\bl,m}$ and $h^\ast_{\bl,m}$ as $m,\beta\to+\infty$, where $m\to+\infty$ recovers the ReLU activation and $\beta\to+\infty$ corresponds to the vanishing-temperature limit of noisy SGD. 
The existence of these limits 
\begin{equation}\label{eq:mblimit}
    \lim_{m,\beta\to+\infty}\rho^\ast_{\bl,m}(\cdot),\quad 
    \lim_{m,\beta \to+\infty}h^\ast_{\bl,m}(\cdot) 
\end{equation}
remains open even in the univariate setting \citep{shevchenko2022mean}. 
Several obstacles arise. As $\beta\to+\infty$, the free energy~\eqref{eq:energy} generally loses strict convexity and may therefore have multiple minimizers. Moreover, as $m\to+\infty$, the activation function becomes unbounded, causing the free energy functional to potentially lose lower semicontinuity. 
Consequently, the standard arguments used to establish the existence of minimizers no longer apply \citep[see, e.g.,][]{jordan1998variational}. 
These difficulties are intrinsic to the free energy and the unboundedness of the ReLU activation, rather than artifacts of the particular approximation introduced above.

Motivated by these obstacles, we do not assume that the limits in \eqref{eq:mblimit} exist. 
Instead, we analyze all accumulation points arising in the joint limit $\beta,m\to+\infty$. 
Specifically, we study the structural properties of any $\rho^\ast$ and $h^\ast$ for which there exists a sequence $\set{(\beta_n,\lambda_n,m_n)}$ satisfying $\beta_n,m_n\to+\infty$ and 
$h^\ast =\lim_{n\to+\infty}h^\ast_{\beta_n,\lambda_n,m_n}$ and $\rho^\ast =\lim_{n\to+\infty}\rho^\ast_{\beta_n,\lambda_n,m_n}$. 
The precise notion of these limits will be specified further below.

\section{Main Results}
\label{sec:main-results}

In this section, we present our main results on the implicit bias of SGD when training wide two-layer ReLU networks for multivariate regression. 

\subsection{Implicit Bias Toward Piecewise Affine Functions} 
\label{sec:pwl}

For a set of input data points $\cx=\set{x_j}_{j=1}^M \subset \br^d$, a linear \emph{dichotomy} is an unordered partition $\cx=\cx_+\sqcup \cx_-$ such that 
$\cx_+$ and $\cx_-$ are strictly separable by an affine hyperplane; that is, there exist $w\in \br^d$ and $b\in \br$ satisfying $x_j^\top w+b>0$ for $x_j \in \cx_+$ and $x_j^\top w+b<0$ for $x_j \in \cx_-$. 
Note that a dichotomy is strict and unordered. It should be distinguished from a binary sign pattern on $\cx$, where the two classes carry specified labels and interchanging the labels gives a different sign pattern, and from a linear trichotomy of $\cx$, where the separation hyperplane is allowed to pass through some of the input points. 
Let $\cp(\cx)$ denote the total number of realizable linear dichotomies of $\cx$.

The following theorem shows that the function obtained by training a ReLU network with noisy SGD in the vanishing-temperature regime is piecewise affine. 

\begin{theorem}[SGD Selects a Piecewise Affine Solution]\label{thm:pwl}
    Consider a training data set $\{(x_j,y_j)\}_{j=1}^M \subset \br^d\times \br$, with $d\geq 1$ and $x_i\neq x_j$ whenever $i\neq j$. 
    Let $\cx=\set{x_j}_{j=1}^M$ denote the set of training inputs. 
    For each $(\bl,\mt)$, let $h^\ast_{\bl,m,\tau}(x)$ denote the network function associated with the $m$-truncated, $\tau$-smoothed ReLU neuron, evaluated at the stationary measure $\rho^\ast_{\bl,\mt}$ of the mean-field PDE~\eqref{eq:pde}. 
    Let $h^\ast_{\bl,m}=\lim_{\tau\to+\infty} h^\ast_{\bl,\mt}$, whose existence follows from Lemma~\ref{lem:zerosmooth-h}. 
    Consider any sequence $(\beta_n,\lambda_n,m_n)$ satisfying Assumption~\ref{assum} for $n\geq 1$, with $m_n,\beta_n\to+\infty$ and $\lambda_n\to \bar \lambda$ for an arbitrary $\bar \lambda>0$ as $n\to+\infty$, 
    and such that $\lim_{n\to+\infty} h^\ast_{\beta_n,\lambda_n,m_n}(x)$ exists for every $x\in \br^d$. 
    Then there exist hyperplanes 
    $\pi_k = \set{x\in \br^d\colon x^\top w_k+b_k=0}$, $k=1,\ldots,K$, 
    with 
    \begin{equation}\label{eq:pwl-num}
    K\leq 2\cp(\cx)-1, 
    \end{equation}
    such that the limit function $h^\ast(\cdot)=\lim_{n\to+\infty} h^\ast_{\beta_n,\lambda_n,m_n}(\cdot)$ is affine on every connected component of $\br^d \setminus \bigcup_{k=1}^K \pi_k$. 
    Moreover, $h^\ast$ is Lipschitz continuous and satisfies 
    \begin{equation}\label{eq:lip}
        \|h^\ast\|_{\operatorname{Lip}} \leq 2(d+2)+\frac{2}{\bar \lambda M}\sum_{j=1}^M y_j^2.  
    \end{equation}
\end{theorem}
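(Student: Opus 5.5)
The plan is to follow the strategy of \citet{shevchenko2022mean} and work through the Boltzmann fixed point condition~\eqref{eq:minimizer}.

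\textbf{Step 1: reduce to the potential and residuals.} The stationary density (after $\tau\to\infty$, via Lemma~\ref{lem:zerosmooth-h}) has the form $\rho^\ast_{\bl,m}\propto\exp(-\beta\Psi)$ with
\[
\Psi(\theta)=\frac1M\sum_j r_j\, a^m (x_j^\top w^m+b)_+^m+\frac{\lambda}{2}\|\theta\|^2 ,
\]
where $r_j=h^\ast_{\bl,m}(x_j)-y_j$ are the residuals. The function $h^\ast$ depends on $\rho^\ast$ only through the $M$ numbers $r_j$. Comparing the free energy at $\rho^\ast$ with a Gaussian competitor gives $\frac1{2M}\sum_j r_j^2\le\frac1{2M}\sum_j y_j^2+O(\beta^{-1}\log\beta)$, so the residual vectors stay bounded. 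Along a subsequence I take $r^{(n)}\to\bar r$.

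\textbf{Step 2: localize the mass.} For fixed $(w,b)$, the $a$-marginal of the density is essentially Gaussian with mean
\[
a^\ast(w,b)=-\frac{1}{\lambda M}\sum_j r_j (x_j^\top w+b)_+ .
\]
Integrating out $a$ therefore yields an effective potential on $u=(w,b)$:
\[
\Phi(u)=\frac{\lambda}{2}\|u\|^2-\frac{1}{2\lambda}\Big(\frac1M\sum_j r_j(x_j^\top w+b)_+\Big)^2 .
\]
This is positively $2$-homogeneous in $u$. In polar coordinates $u=s\omega$ with $\omega\in\bs^d$, we get $\Phi=s^2\big(\tfrac\lambda2-\tfrac1{2\lambda}g(\omega)^2\big)$, where
\[
g(\omega)=\frac1M\sum_j r_j(x_j^\top\omega_w+\omega_b)_+ .
\]
As $\beta\to\infty$, Laplace asymptotics show that the radial mass concentrates at scales of order $m$ exactly on the directions maximizing $|g|$, where self-consistency forces $\max|g|=\lambda$. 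All other directions receive exponentially small mass relative to them, and their contribution to $h$ vanishes in the limit. The truncation at $m$, combined with the lower bound on $m$ in Assumption~\ref{assum}, keeps the radial integral finite and controls the error terms. Consequently $h^\ast(x)=c_0+\sum_k c_k (x^\top w_k+b_k)_+$, where $(w_k,b_k)$ ranges over the limiting maximizers of $|\bar g|$ on the sphere, together with a possible degenerate direction $w=0$ that contributes a constant.

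\textbf{Step 3: count the maximizers.} On each cone of $(w,b)$-space on which the ternary activation pattern on $\cx$ is fixed, $\bar g$ is linear in $\omega$. A linear function restricted to the sphere intersected with a polyhedral cone has a unique maximizer of its absolute value unless it is attained on the cone's boundary. So it suffices to show that each maximizer lies in the relative interior of a full-dimensional cone, corresponding to a strict dichotomy with one of its two orientations, or lies on lower-dimensional faces in a way that can be charged to them. Each unordered dichotomy gives two sign orientations, hence at most $2\cp$ cones. The all-inactive orientation contributes nothing. This should yield $K\le 2\cp-1$.

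\textbf{Step 4: Lipschitz bound.} The bound~\eqref{eq:lip} follows from $|c_k|\|w_k\|$ summed, which is controlled by the second moment $M(\rho^\ast)$. The energy bound gives $\lambda M(\rho)\lesssim\frac1M\sum y_j^2+O(d+2)$, where the $(d+2)$ term comes from entropy, i.e.\ Gaussian fluctuations in $D=d+2$ dimensions. Since the ReLU is $1$-Lipschitz, this gives~\eqref{eq:lip}, and Lipschitz bounds pass to pointwise limits.

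\textbf{Main obstacle.} The hardest step is the uniform Laplace analysis in Step 2. The maximizing set of $|g|$ changes with $n$, and the competing near-maximal directions must be shown to carry asymptotically negligible function value, not merely negligible mass, uniformly while $m,\beta\to\infty$ jointly. I would attempt this by splitting the sphere into a neighborhood of the limiting argmax of $|\bar g|$ and its complement, bounding the complement by a gap $\delta$ times $\beta m^2$ in the exponent, and using the polyhedral structure of $\bar g$ to control the neighborhoods.
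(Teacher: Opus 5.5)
Your effective potential is the right object. On the closed sector $O_k$ one has $g(u)=\lambda (P_k^\alpha)^\top u$, so $\Phi=\frac{\lambda}{2}f_k^\alpha$, and your Step~4 is essentially the paper's Lipschitz argument.

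\textbf{Step 2 is a heuristic, and its central claim is false.} Self-consistency does not force $\max|g|=\lambda$. For $y\equiv 0$ the stationary measure is the centered Gaussian, so $r=0$ and $g\equiv 0$; every direction then maximizes $|g|$, and your Step~3 finiteness collapses. What holds, and what must be proved, is the inequality $|\bar g|\le\bar\lambda$. Concretely, you need that $\{z\in\hat O_k\colon |(P_k^\ast)^\top z|\ge 1\}$ has spherical measure zero. Its points are then tangency points, and a midpoint argument shows there is at most one of each sign per sector.

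The ingredient you never introduce is an upper bound on the partition function. The paper combines the identity $F(\rho^\ast_\alpha)=\frac{1}{2M}\sum_j y_j^2-\frac{1}{2M}\sum_j h^\ast_\alpha(x_j)^2-\beta^{-1}\log Z^\ast_\alpha$ with a lower bound on $F$ to get $Z^\ast_\alpha\le\exp(C\beta+\dots)$. It plays this against $Z^\ast_\alpha\gtrsim s(\hat O_k\cap\hat\Omega_k^\alpha)\,m^{(d+1)/2}(\lambda\beta)^{-1/2}$, obtained by integrating over the cone $\{f_k^\alpha\le 0\}$ up to radius $\sqrt m$. This comparison is exactly where $m>\exp(2C_2\beta+2)$ is used.

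\textbf{Localization does not give piecewise affinity.} Even granting localization, showing that off-cone directions carry little mass does not make their contribution to $h$ vanish. You need control of $\int|a|\,\|z\|\,d\rho_n$ uniformly in the radius. The second moments are only bounded, not uniformly integrable, which is why Theorem~\ref{thm:feature} assumes $W_2$ convergence. Moreover, the truncation acts on $w$ but not on $b$, so a neuron with $\|w\|>m$ kinks along $\{x^\top w^m+b=0\}$, not along the hyperplane of $(w,b)$.

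The paper never represents $h^\ast$ as a finite network. It bounds $\nabla_x^2 h^\ast_\alpha(x)$, which involves only neurons with $b=-x^\top w^m$, integrates out $a$ exactly, and absorbs the truncation via a convex combination in $\gamma_w$ (where $w^m=\gamma_w w$). Positivity of $f_k^\alpha$ on $G_x\cap O_k$ then makes the Hessian vanish uniformly on compacts avoiding $\bigcup_k\pi_k$, and Taylor's theorem gives affinity on each cell.

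\textbf{The count in Step 3 is not finished.} Step~3 stops at ``should yield''. A closed sector can contain two limit points, both on its boundary, so a per-sector count gives only $2(2\cp(\cx)-1)$. To reach $2\cp(\cx)-1$ you need that $f_k^\ast=f_l^\ast$ on $\hat O_k\cap\hat O_l$, so boundary points are shared by adjacent sectors. Then a double count over connected components finishes it:
\begin{itemize}
\item a sector with an interior limit point contains nothing else;
\item the all-negative sector contains nothing;
\item every other component meets at least two sectors;
\item each sector meets at most two components.
\end{itemize}
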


Theorem~\ref{thm:pwl}, together with Theorem~\ref{thm:mei}, shows that
training an infinitely wide ReLU network with noisy SGD on training inputs $\cx$ results in a continuous piecewise affine function with at most $2\cp(\cx)-1$ kink hyperplanes, as the network width $N$ tends to infinity and the temperature $\beta^{-1}$ vanishes. 
Note that $\cp(\cx)$ depends only on the affine configuration of the input points $\cx$, and is independent of both the network width $N$ and the ambient dimension $d$. 
Hence, the bound on the linear-region complexity of the learned function depends only on the geometry of the training data.

By a classical result of \citet{cover1965geometrical}, the bound~\eqref{eq:pwl-num} can be further estimated as follows:
\begin{equation}\label{eq:pwl-num-cover}
    2\cp(\cx)-1\leq 2 \sum_{i=0}^{d} \binom{M-1}{i}-1,
\end{equation}
where equality holds if and only if the training inputs $\cx$ are in general position, i.e., every subset of $d$ or fewer points is linearly independent. 
In the non-general position case, $\cp(\cx)$ can be computed by via Zaslavsky's Theorem \citep{zaslavsky1975facing} from the theory of hyperplane arrangements \citep[see, e.g.,][Theorem 2.5]{stanley2007introduction}.

We emphasize that the regularization strength $\bar \lambda = \lim_{n\to+\infty}\lambda_n$ can be an arbitrary positive real number. 
Thus, the strength of weight decay does not fundamentally affect the piecewise affine nature of the learned function. 
However, as shown in~\eqref{eq:lip}, it controls the slope of the learned function: stronger regularization leads to flatter solutions and a larger tradeoff between data fitting and regularization. 
A numerical illustration of the role of the regularization strength is provided in Figure~\ref{fig:wd}.

We reiterate that Theorem~\ref{thm:pwl} applies to any function $h^\ast$ 
that arises as the pointwise limit of a sequence $h^\ast_{\beta_n,\lambda_n,m_n}$. 
Proposition~\ref{prop:accumu} shows that this class of accumulation points is non-empty and arises naturally: every sequence $(\beta_n,\lambda_n,m_n)$ satisfying Assumption~\ref{assum}, with $m_n,\beta_n\to+\infty$ and $\lambda_n\to \bar \lambda$, has a subsequence $(\beta_{n_k},\lambda_{n_k},m_{n_k})$ such that $h^\ast_{\beta_{n_k},\lambda_{n_k},m_{n_k}}$ converges pointwise.

\subsection{Feature Learning} 
\label{sec:feature}

Theorem~\ref{thm:pwl} characterizes the implicit bias of noisy SGD in function space by showing that the algorithm is biased toward piecewise affine solution functions. 
We now examine the implicit bias in parameter space by studying the structure of the trained network parameters. 
This elucidates the feature-learning mechanism underlying the learning process.

Before stating our results, we emphasize that the piecewise affine structure of a solution function does not, in general, provide a tight characterization of the structure of the network parameters. 
For ReLU networks, the parameter fiber associated with a given function, i.e., the set of parameters realizing that function, may exhibit intricate geometric structure \citep[see, e.g.,][]{phuong2020functional, PetzkaTS20,grigsby2023hiddensymmetriesrelunetworks,grillo2026relunetworksadmitidentifiable,ramakrishnan2026completesymmetryclassificationshallow,gegenfurtner2026symmetriesthreelayerrelunetworks}. 
%}
In particular, a piecewise affine solution function could arise from sparse output weights while leaving the input weights essentially unchanged, corresponding to feature selection rather than feature learning. 
Moreover, there exist continuous piecewise affine functions that cannot be represented \emph{exactly} by any finite-width shallow ReLU network, 
such as $(u,v)\in \br^2 \mapsto \max(0,u,v)$. 
Therefore, to establish feature learning and effective width collapse, we investigate the stationary measure directly.

Consider the projection map $\omega(a,w,b)=(w,b)$, and let $\omega_\#\rho$ denote the push-forward on $\mathbb{R}^{d+1}$ of a measure $\rho \in \mathcal{P}(\mathbb{R}^{d+2})$. 
When $\rho^\ast$ is the stationary measure of the mean-field PDE, the measure $\omega_\#\rho^\ast$ is precisely the marginal measure of the input weights and biases of the trained network.

We now present our second main result, which establishes feature learning by showing that, after training, the input weights and biases align along finitely many data-dependent directions. 

\begin{theorem}[Feature Learning of SGD]\label{thm:feature}
    Consider the same assumptions and notations as in Theorem~\ref{thm:pwl}.  
    Assume additionally that $\rho_{\beta_n,\lambda_n,m_n}^\ast$ converges to some $\rho^\ast\in \mathcal{P}(\br^d)$ in $2$-Wasserstein distance as $n\to+\infty$. 
    Let 
    $
    q^\ast = \omega_\# \rho^\ast 
    $ 
    denote the marginal measure of the input weights and biases. 
    Then there exist $K \leq 2\cp(\cx) -1$ directions, represented by unit vectors $(w_k,b_k)\in\mathbb{R}^{d+1}$ with $\|(w_k,b_k)\|=1$ and associated rays 
    $$
    r_k = \set{(w,b)\in \br^{d+1}\colon (w,b)=\gamma (w_k,b_k) 
    \text{ for some } \gamma\geq0 
    },\quad k=1,\ldots, K,
    $$
    such that 
    \begin{equation}\label{eq:supp}
        \supp(q^\ast) \subset \bigcup_{k=1}^K r_k. 
    \end{equation}
\end{theorem}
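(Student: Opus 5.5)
The plan is to read off the support of the stationary measure from the Boltzmann fixed point condition~\eqref{eq:minimizer}, and to pass to the limit $\beta_n\to+\infty$ by a Laplace-principle (concentration) argument.

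\textbf{Step 1: the potential.} At fixed $(\beta,\lambda,m)$ and after $\tau\to+\infty$ (Lemma~\ref{lem:zerosmooth-h}), $\rho^\ast_{\beta,\lambda,m}\propto \exp(-\beta\Psi)$. In the region where the truncations are inactive,
\[
\Psi(a,w,b)=a\,g(w,b)+\tfrac{\lambda}{2}\bigl(a^2+\|w\|^2+b^2\bigr),
\qquad
g(w,b)=\frac1M\sum_{j=1}^M r_j\,(x_j^\top w+b)_+ ,
\]
where $r_j=h^\ast(x_j)-y_j$ are the residuals. Minimizing over $a$ gives
\[
\min_a \Psi=-\frac{g(w,b)^2}{2\lambda}+\frac{\lambda}{2}\|(w,b)\|^2 .
\]
Because $g$ is positively $1$-homogeneous, write $(w,b)=\gamma u$ with $\gamma\ge 0$ and $\|u\|=1$. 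Then
\[
\min_a\Psi=\frac{\gamma^2}{2}\Bigl(\lambda-\frac{g(u)^2}{\lambda}\Bigr).
\]
Hence the marginal $q$ concentrates, as $\beta\to\infty$, on the set of directions $u$ maximizing $|g(u)|$ over the unit sphere. The radial part is of order $\exp\bigl(-\beta\gamma^2 c(u)/2\bigr)$, and a nondegenerate limit forces $\max_u|g(u)|=\lambda$ in the limit. Otherwise the mass would either collapse to the origin, giving $h^\ast\equiv 0$, or escape to infinity, contradicting the second-moment bound implied by Wasserstein convergence. Directions with $|g(u)|<\max|g|$ receive exponentially small mass.

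\textbf{Step 2: limit residuals.} Using the assumed convergence of $h^\ast_{\beta_n,\lambda_n,m_n}$, the residuals converge, $r^{(n)}\to \bar r$, and so $g_n\to\bar g$ uniformly on the sphere. By Laplace's principle, together with uniform tail bounds from Assumption~\ref{assum} (via the choice of $m$, the truncation region carries negligible mass), any point of $\supp(q^\ast)$ other than the origin lies on a ray through a maximizer of $|\bar g|$ on the sphere $\bs^d$. I will need to justify carefully that the Laplace concentration survives the joint limit in which $g_n$ itself moves with $n$. I plan to do this with a quantitative estimate: the $q_n$-mass of directions with $|g_n(u)|\le \max|g_n|-\delta$ is at most $e^{-c\beta_n\delta}$.

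\textbf{Step 3: counting maximizers (the main obstacle).} It remains to show that $|\bar g|$ has at most $2\cp(\cx)-1$ maximizing directions on the sphere. The sphere is partitioned by the hyperplanes $\{u: x_j^\top w+b=0\}$ into open cones, one per ternary activation pattern. On each cone $g$ is linear, $g(u)=\langle v_S,u\rangle$ with $v_S=\frac1M\sum_{j\in S} r_j(x_j,1)$, so a maximizer of $|g|$ on a closed cone is unique unless $v_S$ vanishes or degenerates. I would argue as follows.

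\begin{enumerate}
\item A maximizer in the interior of a full-dimensional cone must be $\pm v_S/\|v_S\|$. So each full cone contributes at most one maximizer of $g$ and one of $-g$.
\item Lower-dimensional faces are handled by noting that a maximizer there is a maximizer of the restriction to an adjacent full cone. It is therefore the normalized projection of the corresponding $v_S$, and its strict pattern can be attached injectively to a dichotomy.
\item The cones with strict patterns correspond to ordered sign patterns, i.e.\ $2\cp(\cx)$ of them. The pattern with all points inactive gives $g\equiv 0$ and cannot host a maximizer, which removes one and yields $2\cp-1$.
\end{enumerate}

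Degenerate cases, where $\bar g\equiv 0$ or a whole face consists of maximizers, must be excluded. For this I would use the fact that an entire face of maximizers would force a strictly positive second variation of the free energy to vanish, and otherwise fall back on a perturbation/strict-convexity argument through the first-order optimality of $\bar r$. I expect this combinatorial injectivity and degeneracy exclusion to be the hardest part. The remaining steps are Laplace asymptotics that mirror the proof of Theorem~\ref{thm:pwl}, whose hyperplanes are exactly those $\pi_k$ determined by the rays $r_k$.
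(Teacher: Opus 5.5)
Your architecture matches the paper's: integrate out $a$, read the $(w,b)$-marginal off $\exp\{-\tfrac{\beta\lambda}{2}\gamma^2(1-g(u)^2/\lambda^2)\}$, pass to the limit, then count directions. However, two load-bearing steps fail as written.

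\textbf{The concentration estimate in Steps 1--2 is false.} Suppose, for example, $\max_u|g_n(u)|\le(1-\eta)\lambda_n$. Then the angular law of $q_n$ has density essentially proportional to $(1-g_n(u)^2/\lambda_n^2)^{-(d+1)/2}$, which does not depend on $\beta_n$. So directions with $|g_n|\le\max|g_n|-\delta$ keep order-one mass. That mass is not suppressed by direction; it collapses \emph{radially} to the origin at scale $\beta_n^{-1/2}$. Moreover, at finite $n$ the relevant set is the superlevel set $\{|g_n|\ge\lambda_n\}$ (the cluster set), not the argmax of $|g_n|$.

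The true statement is weighted: $\int_{\mathbb{R}\times B}\|(w,b)\|^2\,d\rho_n\to0$ for open $B$ bounded away from $0$ whose directions avoid $\{|\bar g|=\bar\lambda\}$. To prove it you must compare the unnormalized Gaussian bound $(\beta_n\lambda_n)^{-(d+4)/2}\delta^{-(d+3)/2}$ with a polynomial lower bound $Z_n\gtrsim\beta_n^{-(d+2)/2}$ on the partition function (Lemma~\ref{lem:hess-Z}, via the residual bound of Lemma~\ref{lem:hess-R}). Your plan never addresses this normalization. Also, the reduction to $\{\|\theta\|\le L\}$, where truncation is inactive, comes from uniform integrability of second moments under $W_2$ convergence, not from Assumption~\ref{assum}.

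The key fact $|\bar g|\le\bar\lambda$ on all of $\mathbb{S}^d$ is only asserted (``escape to infinity''). The paper proves it by bounding the spherical measure of $\hat O_k\cap\hat\Omega_k^{\alpha}$ by $Cm^{-d/2}$. That bound uses the upper estimate $Z\le e^{C\beta}(\cdots)$ together with the coupling $m>\exp(2C_2\beta+2)$. A tightness argument might replace this, but it is a genuine step, not a remark.

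\textbf{Step 3, which carries the bound $K\le 2\mathcal{P}(\mathcal{X})-1$, is not carried out.}
\begin{itemize}
\item Item 1 as stated allows one maximizer of $g$ and one of $-g$ in each full cone, which yields only $2(2\mathcal{P}-1)$.
\item ``Attached injectively to a dichotomy'' is not an argument, and there are only $\mathcal{P}$ dichotomies.
\item Excluding a face of maximizers via a second variation of the free energy is unsupported, and it is unnecessary.
\end{itemize}

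The missing idea is elementary once $|\bar g|\le\bar\lambda$ is known. On each closed sector $\hat O_k$, $\bar g(u)=\langle v_k,u\rangle$ is linear. So two distinct points of $\hat O_k$ with $\bar g=\bar\lambda$ would have a normalized midpoint $\hat z_0\in\hat O_k$ with $\bar g(\hat z_0)=\bar\lambda/\|z_0\|>\bar\lambda$, a contradiction. Hence each closed sector holds at most one point of each sign, so continua are impossible, and $\bar g\equiv0$ just gives $\mathrm{supp}(q^\ast)=\{0\}$.

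An interior point $u$ forces $u=\pm v_k/\|v_k\|$ with $\|v_k\|=\bar\lambda$. Its opposite-sign partner would then have to be $-u$, which lies in a different open sector. So a sector containing an interior point contains nothing else. The all-inactive sector contains nothing, and every boundary point lies in at least two sectors.

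Now count. If $I$ sectors host an interior point, the boundary points lie in the other $2\mathcal{P}-I-1$ sectors. Each of these holds at most two points, and each boundary point is counted at least twice. So there are at most $2\mathcal{P}-I-1$ boundary points, giving at most $2\mathcal{P}(\mathcal{X})-1$ points in total. This is exactly Steps 1--3 of the proof of Proposition~\ref{prop:shrink}.
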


Theorem~\ref{thm:feature} shows that the input weights and biases of the 
trained network align along at most $K \leq 2\cp(\cx)-1$ directions. 
In particular, the learned function is representable by a shallow ReLU network with $K$ hidden neurons: 
\begin{equation}\label{eq:small-network}
    \int a(x^\top w+b)_+d\rho^\ast\   = \  
    \sum_{k=1}^K a_k (x^\top w_k +b_k)_+\ \text{with } a_k = \int_{\set{(a,w,b)\colon (w,b)\in r_k}} a \|(w,b)\| d\rho^\ast. 
\end{equation}
Thus, the effective width of the network collapses to at most $2\cp(\cx)-1$.

We remark that \eqref{eq:supp} includes the degenerate case where the support is exactly the origin, i.e., $\supp(q^\ast)=\set{0}$. In this case, the represented function is identically zero, and the only learned feature is the zero feature. 
This occurs, for example, when $y_j=0$ for all $j$, in which case the stationary measure is the Gaussian $\rho^\ast(\theta)\propto \exp(-\frac{\beta \lambda}{2}\|\theta\|^2)$, which satisfies the Boltzmann fixed point condition~\eqref{eq:minimizer}. 
Outside this degenerate case, whenever $\supp(q^\ast) \neq \set{0}$, there is at least one aligned direction, and the network learns sparse, nonzero features. 
Empirically, we observe that after training, the input weights and biases often separate into two groups: one concentrated near the origin, whose contribution to the represented function is negligible, and another consisting of parameters with non-negligible norms that align along 
a few directions and constitute the learned nonzero features.

Theorem~\ref{thm:feature} applies to any measure $\rho^\ast$ that arises as the $2$-Wasserstein limit of a sequence $\rho^\ast_{\beta_n,\lambda_n,m_n}$. 
The assumption of $2$-Wasserstein convergence is used in the proof to obtain uniform integrability of the second moments (see Appendix~\ref{app:feature}). 
Establishing the existence of such a convergent sequence appears to be challenging in general. 
Nevertheless, Lemma~\ref{lem:Mbound} shows that any sequence $(\beta_n,\lambda_n,m_n)$ satisfying Assumption~\ref{assum}, with $m_n,\beta_n\to+\infty$ and $\lambda_n\to \bar\lambda>0$, admits a subsequence $(\beta_{n_k},\lambda_{n_k},m_{n_k})$ for which $\rho^\ast_{\beta_{n_k},\lambda_{n_k},m_{n_k}}$ converges weakly.

\subsection{Non-Redundancy of the Learned Features}
\label{sec:nonredun}

We have shown that the input weights and biases of the trained network align along $K\le 2\cp(\cx)-1$ directions, each corresponding to an effective hidden neuron as in~\eqref{eq:small-network}. 
We now characterize these directions and, consequently, the structure of the learned features.

For the given set $\cx=\set{x_j}_{j=1}^M$, 
define a linear \emph{ternary activation pattern} on $\cx$ as a map $A\colon \cx\to \set{-1,0,1}$ for which there exist $w\in \br^d$ and $b\in \br$ satisfying $\sgn(x_j^\top w+b)=A(x_j)$ for $x_j\in \cx$. 
We distinguish two subclasses of ternary activation patterns: 
if $A(x_j)\in\{\pm 1\}$ for every $x_j$, 
we call $A$ a \emph{binary activation pattern}. 
Otherwise, if $A(x_j)=0$ for at least one $x_j$, we call $A$ a \emph{strict ternary activation pattern}. 
We further introduce the following partial order on activation patterns: 
\begin{equation}\label{eq:partial-order}
    A' \preceq A \Longleftrightarrow A(x_j)\in \set{0, A'(x_j)},\,\forall j. 
\end{equation}
Thus, $A' \preceq A$ means that $A$ is obtained from $A'$ by replacing some (possibly none) of the $\pm1$ labels of $A'$ with zeros, while leaving all remaining labels unchanged. 
In this case, we say that $A$ is a \emph{degeneration} of $A'$. 
Any ternary activation pattern is a degeneration of itself: $A\preceq A$. 

We present our third main result, which describes the structure of the learned features of the ReLU networks after SGD training.

\begin{figure}[t]
    \centering
    \includegraphics[width=.9\linewidth]{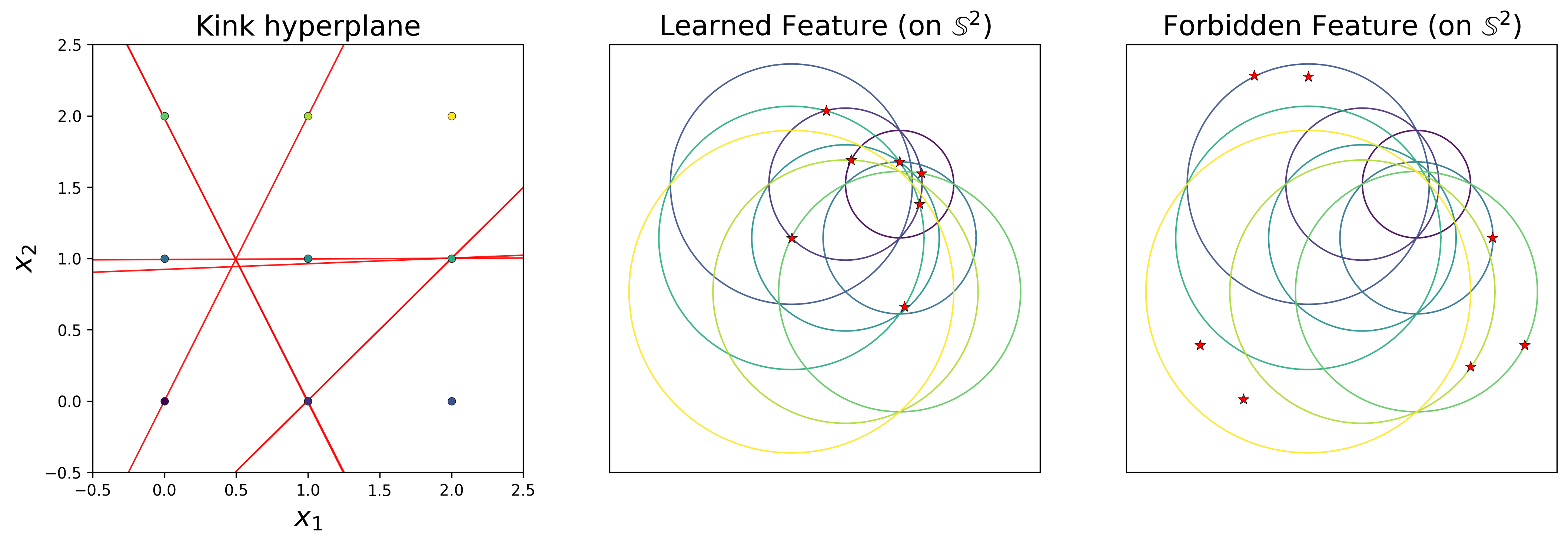}
    \caption{
    Non-redundancy of the learned features. Left: input points and kink hyperplanes of the trained network under the same setting of Figure~\ref{fig:intro}. 
    Middle: 
    the space $\bs^2$ of normalized input weights and biases after stereographic projection. 
    The red stars are the learned features, i.e., the directions that the input weights and biases align with. 
    Each direction induces a kink hyperplane on the left; 
    these kink hyperplanes may overlap.
    The colored circles represent the  
    hyperplanes 
    $\set{(w,b)\colon w^\top x_j+b=0}$, each corresponding to a training point $x_j$ on the left. 
    Right: examples of feature arrangements forbidden by Theorem~\ref{thm:nonredun}. 
    The bottom pair have the same sign pattern; 
    the top pair is comparable under the partial order; 
    the triplet on the right are degenerations of the same binary sign pattern, exceeding the allowed maximum of two.  
    }
    \label{fig:nonredun}
\end{figure}

\begin{theorem}[Non-redundancy of Learned Features]\label{thm:nonredun}
    Under the same notations and assumptions as in Theorem~\ref{thm:feature},
    let $\set{(w_k,b_k)}_{k=1}^K$ be the collection of distinct directions that the input weights and biases align with. 
    For $k=1,\ldots,K$, let $A_k$ denote the ternary sign pattern on the training inputs $\cx$ induced by the direction $(w_k,b_k)$: 
    $$
    A_k(x_j)= \sgn(x_j^\top w_k+b_k), \quad \forall j.
    $$
    Then the following holds: 
    \begin{enumerate}
        \item[(i)] 
        For any distinct $k, k'$, the two ternary sign patterns $A_k$ and $A_{k'}$ are incomparable: 
        neither $A_k\preceq A_{k'}$ nor $A_{k'}\preceq A_k$ holds. 
        In particular, $A_k\neq A_{k'}$ whenever $k\neq k'$. 

        \item[(ii)] 
        For any $k\in \set{1,\ldots,K}$, there exists at least one $j\in \set{1,\ldots,M}$ such that $A(x_j)= 1$.

        \item[(iii)] 
        Given an arbitrary linear binary activation pattern $\hat A$ of $\cx$, 
        the set
        \begin{equation}\label{eq:da}
            \cd(\hat A)=\set{A_k \colon \hat A \preceq A_k,\, k=1,\ldots,K}
        \end{equation}
        contains at most two elements. 
        Moreover, if $\cd(\hat A)$ has two elements, 
        then both of these elements are strict ternary activation patterns. 
    \end{enumerate}
\end{theorem}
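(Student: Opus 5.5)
The plan is to reduce all three statements to a variational characterization of the learned directions, and then read them off from the piecewise-linear structure of a single "correlation" function on the sphere. Let $r^\ast_j = h^\ast(x_j)-y_j$ be the limiting residuals, and write $\tilde x_j=(x_j,1)\in\br^{d+1}$. For $u=(w,b)\in\bs^d$ set
$$
g(u)=\frac{1}{M}\sum_{j=1}^M r^\ast_j\,(u^\top \tilde x_j)_+ .
$$
In the Boltzmann condition~\eqref{eq:minimizer} the potential is quadratic in the output weight $a$ once the neuron is untruncated. Integrating out $a$ therefore gives a marginal on $(w,b)=\gamma u$ of the form $\exp\bigl(\tfrac{\beta\gamma^2}{2}(\lambda^{-1}g_n(u)^2-\lambda)\bigr)$, up to truncation and smoothing corrections.

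\textbf{Step 1 (variational characterization).} I would extract from the proof of Theorem~\ref{thm:feature} (Appendix~\ref{app:feature}) the following fact. Every direction $u_k=(w_k,b_k)$ carrying nonzero mass of $q^\ast$ away from the origin is a global maximizer of $|g|$ on $\bs^d$, and $\max_{\bs^d}|g|=\bar\lambda$.

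\begin{itemize}
\item If $|g_n(u)|<\lambda_n$ strictly, the Gaussian factor in $\gamma$ kills the mass as $\beta_n\to\infty$.
\item If $|g_n(u)|>\lambda_n$, the second moment bound (Lemma~\ref{lem:Mbound}, $W_2$ convergence) is violated.
\end{itemize}

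Passing $r_{j,n}\to r^\ast_j$ uses pointwise convergence of $h^\ast_{\beta_n,\lambda_n,m_n}$ on $\cx$. Each direction also carries a sign $s_k=\sgn g(u_k)\in\{\pm1\}$, which is the sign of the corresponding effective output weight $a_k$. I expect this step to be the main obstacle. It requires controlling the truncation in $a$ and $w$, and also the $\beta\to\infty$ Laplace asymptotics, uniformly, so that support of the limit measure genuinely forces exact maximality $|g(u_k)|=\bar\lambda$. I would try to get it by a contradiction argument: a positive-mass neighborhood of a non-maximizing direction gives an exponentially small Boltzmann weight relative to the partition function.

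\textbf{Step 2 (linearity on cones).} For a ternary pattern $A$, let $C(A)=\{u\colon \sgn(u^\top\tilde x_j)\in\{0,A(x_j)\}\ \forall j\}$. This is a closed convex polyhedral cone, and $C(A)\supseteq C(A')$ whenever $A'$ is a degeneration of $A$. On $C(A)$ we have $g(u)=v_A^\top u$ with $v_A=\frac1M\sum_{A(x_j)=1}r^\ast_j\tilde x_j$. In particular $u_k$ lies in the relative interior of $C(A_k)$, and $C(A_k)\subseteq C(\hat A)$ whenever $\hat A\preceq A_k$.

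Part (ii) is then immediate. If $A_k(x_j)\le 0$ for all $j$, then $g(u_k)=0\neq\pm\bar\lambda$, since $\bar\lambda>0$.

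\textbf{Step 3 (uniqueness arguments).} For (i), suppose $A_k\preceq A_{k'}$ with $k\neq k'$, so $u_k,u_{k'}\in C(A_k)$. Let $L$ be the linear span of $C(A_k)$ and $P_L$ the orthogonal projection onto it. Because $u_k$ is a relative-interior maximizer of $|v_{A_k}^\top u|$ over $L\cap\bs^d$, the first-order condition gives $P_Lv_{A_k}=s_k\bar\lambda u_k$. Hence for every $u\in C(A_k)\cap\bs^d$,
$$
|g(u)|=\bar\lambda\,|u_k^\top u|,
$$
and this equals $\bar\lambda$ only if $u=\pm u_k$. By (ii), $-u_k\notin C(A_k)$, so $u_{k'}=u_k$, a contradiction.

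For (iii), all elements of $\cd(\hat A)$ correspond to directions lying in the single cone $C(\hat A)$, on which $g=v_{\hat A}^\top u$ is linear. Maximizing a linear function over the convex set $C(\hat A)\cap\{\|u\|\le1\}$ yields a unique maximizer when the maximum is positive, by strict convexity of the ball. So there is at most one direction with $g=\bar\lambda$ and at most one with $g=-\bar\lambda$, giving $|\cd(\hat A)|\le2$.

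If there are two such directions and one of them, say with $g=\bar\lambda$, has binary pattern $\hat A$, then it lies in the interior of $C(\hat A)$. It must therefore equal $v_{\hat A}/\bar\lambda$ with $\|v_{\hat A}\|=\bar\lambda$. But then $g\ge-\bar\lambda$ on $C(\hat A)\cap\bs^d$, with equality only at $-v_{\hat A}/\bar\lambda$. That point has pattern $-\hat A$ and so lies outside $C(\hat A)$. This rules out the second element, so both elements must be strict ternary patterns.
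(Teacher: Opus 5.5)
Your proof is correct and is essentially the paper's. It is the proof of Proposition~\ref{prop:strat}, together with the midpoint (Step~3) and antipodal-hemisphere (Step~1) arguments from the proof of Proposition~\ref{prop:shrink}, rewritten through $g$: since $g(z)=\bar\lambda\, z^\top P_k^\ast$ on each closed sector $\hat O_k$, one has $\hat\Omega^\ast=\{u\in\bs^d\colon |g(u)|\ge\bar\lambda\}$. Consequently your Step~1 needs no new Laplace asymptotics. Finiteness of $\hat\Omega^\ast$ (Proposition~\ref{prop:shrink}) and continuity of $g$ give $|g|\le\bar\lambda$ on $\bs^d$, so $|g|=\bar\lambda$ at every point of $\hat\Omega^\ast$; this is how the paper defines the $(w_k,b_k)$, and by the proof of Theorem~\ref{thm:feature} it contains all mass-carrying directions. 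Two minor corrections: your second bullet holds only for the limit $g$, since at finite $n$ the set $\{|g_n|>\lambda_n\}$ need not be empty (only its measure vanishes); and the effective output weight has sign $-s_k$, not $s_k$.
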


Theorem~\ref{thm:nonredun} shows a strong non-redundancy of the learned features, illustrated in Figure~\ref{fig:nonredun}. 
Item $(i)$ shows that each effective neuron of the trained network induces a distinct activation pattern on the training inputs. 
Moreover, the activation patterns induced by different effective neurons are incomparable with respect to the partial order $\preceq$. 
Item $(ii)$ shows that every effective neuron activates at least one training point. 
Item~$(iii)$ states that, 
from any binary activation pattern $\hat A$ on $\mathcal X$, 
one can obtain at most two learned ternary patterns $A_k$ 
by turning some of the $\pm1$ labels of $\hat A$ into zeros.
Theorem~\ref{thm:nonredun} can be directly translated into a description of the locations of the kink hyperplanes of the learned functions; the formal statement is provided in Corollary~\ref{coro:kink}.

Identifying the exact aligned directions, or the exact location of the kink hyperplanes,  
is challenging for general training data. 
We obtain the following result for symmetric training data sets of size two.

\begin{proposition}[Kink Hyperplanes for Two-Point Data]
\label{prop:toy}
    Under the same notations and assumptions as in Theorem \ref{thm:feature}, 
    let $\cx=\{(x_1,y_1),(x_2,y_2)\} \subset \br^d\times \br$ 
    be a training data set of size two, with $x_1=-x_2$ and $|y_1|=|y_2|$. 
    Let $\Pi$ be the collection of kink hyperplanes $\pi_k=\{x\in \br^{d}\colon x^\top w_k+b_k =0\}$ for $k=1,\ldots, K$. 
    The following holds: 
    
    \begin{itemize}    
        \item \textbf{Even labels $y_1=y_2$:} 
        If $\|x_1\|<1$, then $\Pi=\varnothing$ and $h^\ast$ is globally affine. 
        If $\|x_1\|\geq 1$, then $\Pi$ consists of at most two hyperplanes: 
        $\Pi \subset \set{\pi_1',\pi_2'}$ where
        $\pi'_1=\{x\colon x_1^\top x+1=0\}$ and 
        $\pi'_2=\{x\colon x_1^\top x-1=0\}$. 
        
        \item \textbf{Odd labels $y_1=-y_2$:} 
        $\Pi$ consists of at most two hyperplanes: 
        $\Pi \subset \set{\pi_1',\pi_2'}$ where $\pi_1'=\{x\colon x_1^\top x+\bar b=0\}$, $\pi_2'=\{x\colon x_1^\top x-\bar b=0\}$, and $\bar b=\min\{1,\|x_1\|^2\}$. 
        
    \end{itemize}
\end{proposition}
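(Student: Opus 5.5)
Our plan is to characterize the limiting aligned directions $(w_k,b_k)$ through a limiting first-order (variational) condition, and then to exploit the symmetry $x_2=-x_1$ to reduce everything to a one-dimensional problem. The Boltzmann fixed point condition~\eqref{eq:minimizer} says that $\rho^\ast_{\beta,\lambda,m}$ concentrates, as $\beta\to\infty$, near the minimizers of $\Psi^\lambda(\theta;\rho)$. This is the same mechanism we expect to underlie Theorem~\ref{thm:feature}. For ReLU neurons and residuals $r_j=h^\ast(x_j)-y_j$, the potential reads $\frac{1}{2}\sum_j r_j a(x_j^\top w+b)_+ + \frac{\lambda}{2}(a^2+\|w\|^2+\|b\|^2)$ up to the normalization $1/M$. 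Minimizing over $a$ gives the reduced potential $-\frac{1}{8\lambda}\bigl(\sum_j r_j (x_j^\top w+b)_+\bigr)^2\frac{1}{M^2}+\frac{\lambda}{2}\|(w,b)\|^2$. Because this is $2$-homogeneous in $(w,b)$, the nonzero directions in $\supp(q^\ast)$ must maximize
\[
G(u)=\Bigl|\sum_{j=1}^2 r_j (x_j^\top w+b)_+\Bigr| \quad\text{over unit } u=(w,b)\in\bs^d .
\]
We take as given, from the proof of Theorem~\ref{thm:feature}, that the directions $(w_k,b_k)$ lie in $\arg\max G$, and that $\max G$ equals a threshold fixed by $\bar\lambda$ whenever $q^\ast\neq\delta_0$. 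If instead $\max G$ lies below the threshold, the only feature is zero and $\Pi=\varnothing$.

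First, with $M=2$ there are two activation scenarios. Either only one point is active, or both are. We decompose $w=\alpha \hat x_1+w_\perp$ with $\hat x_1=x_1/\|x_1\|$. Setting $w_\perp\neq0$ only spends norm without changing $x_j^\top w$, so every maximizer has $w_\perp=0$. This reduces the problem to maximizing over the unit circle $(\alpha,b)$ the quantity
\[
g(\alpha,b)=\bigl|r_1(\alpha s+b)_+ + r_2(-\alpha s+b)_+\bigr|, \qquad s=\|x_1\| .
\]
Consequently every kink hyperplane has the form $x_1^\top x=c$, and what remains is to determine $c=-b/\alpha$.

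Second, we need the symmetry of the residuals. The map $x\mapsto -x$ swaps the data points. In the even case the labels are invariant, and in the odd case they are invariant under $h\mapsto -h(-\cdot)$, which is implemented by $(a,w,b)\mapsto(-a,-w,b)$. The free energy is invariant under these maps, so uniqueness of the minimizer at finite $\beta$ makes $\rho^\ast_{\beta,\lambda,m}$ invariant, and the invariance passes to the limit. Hence $r_1=r_2=r$ in the even case and $r_1=-r_2=r$ in the odd case, with $r\neq0$ unless $\Pi=\varnothing$.

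Third, we solve the circle optimization in each case.

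In the even case we use the parametrization $(\alpha,b)=(\cos\phi,\sin\phi)$.
\begin{itemize}
\item If both points are active, $g=2|r|b$ (since $b\ge s|\alpha|$), which is maximized at $\alpha=0$, $b=1$. This is the constant feature and contributes no kink.
\item If only one point is active, $g=|r|(\alpha s+b)$, which is maximized over the circle at $(\alpha,b)\propto(s,1)$, with value $|r|\sqrt{s^2+1}$. This direction requires $b<\alpha s$, i.e. $1<s^2$, so that $x_2$ is indeed inactive.
\end{itemize}
Comparing the two values, $\sqrt{s^2+1}$ versus $2$, shows that only the constant direction survives when $s<1$, so $h^\ast$ is affine. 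When $s\ge1$, the one-point directions $\pm(s,1)$ give the kinks $x_1^\top x\pm1=0$, possibly together with the constant feature.

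In the odd case the both-active configuration gives $g=2|r|s|\alpha|$. This is maximized at $\alpha=\pm1$, $b=0$, and is feasible only if $b\ge s|\alpha|$, which fails. We therefore optimize $|r|\max(s\alpha+b,\ \dots)$ under the constraint. In the both-active region the boundary value is $b=s\alpha$. In the one-active region the maximizer is $(s,1)/\sqrt{s^2+1}$ when $s\ge1$, giving kink $b/\alpha=1/s$, i.e. $x_1^\top x+1=0$. When $s<1$ the optimum sits on the boundary $b=s^2$-normalized, giving $c=\pm s^2$. In both subcases $\bar b=\min\{1,s^2\}$, and the odd symmetry supplies the mirrored hyperplane.

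The main obstacle is the first step. It requires a rigorous justification that the limiting support lies in $\arg\max G$, uniformly over the truncation and smoothing, and that the residuals of $h^\ast$ are the limits of the finite-$\beta$ residuals. Our first approach is to reuse the Laplace-type concentration estimates from the proof of Theorem~\ref{thm:feature} rather than rederive them. The case analysis on the circle, including the feasibility constraints, is elementary but must be done carefully at the boundary cases $s=1$.
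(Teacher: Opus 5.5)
Your proposal is correct and follows the paper's skeleton. The same reflection maps, $(a,w,b)\mapsto(a,-w,b)$ (resp.\ $(-a,-w,b)$ in the odd case), make the unique minimizer symmetric and force $r_2=\pm r_1$. The kink directions are then read off from the limiting vectors $P_k^\ast$.

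The step you flag as the main obstacle is already settled in the paper. Steps 2--3 in the proof of Proposition~\ref{prop:shrink} show $|z^\top P_k^\ast|\le 1$ on every closed sector $\hat O_k$. Since $|z^\top P_k^\ast|=G(z)/(M\bar\lambda)$ there, this says exactly that $G\le M\bar\lambda$ on $\bs^d$ and $\hat\Omega^\ast=\{G=M\bar\lambda\}$. Hence $\hat\Omega^\ast\subseteq\arg\max_{\bs^d}G$, and $\hat\Omega^\ast=\varnothing$ when the maximum is below $M\bar\lambda$, so no new Laplace-type estimate is needed. State this for $\hat\Omega^\ast$, which is what defines $\Pi$, rather than for $\supp(q^\ast)$. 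The support can carry mass on strictly fewer directions, so a statement about it alone does not control $\Pi$.

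The difference from the paper is in how the computation is organized. The paper works sector by sector with the double cones $\Omega_k^\ast$. It checks whether $\ell_{P_k^\ast}$ meets the interior of $O_k$, and otherwise projects it onto a bounding face $\{w^\top x_j+b=0\}$. It then eliminates candidates using cross-sector norm bounds, e.g.\ the bound on $|\kappa|$ coming from $P_1^\ast$. You instead glue the sectors into the single piecewise-linear function $G$, remove $w_\perp$ by homogeneity, and solve one optimization on a circle. This is more transparent, and it actually gives more. In the even case, the comparison of $2|r|$ with $|r|\sqrt{1+s^2}$ (an upper bound for $G$ on the one-active region by Cauchy--Schwarz) shows $\Pi=\varnothing$ for every $\|x_1\|<\sqrt3$, not only for $\|x_1\|<1$.

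A few slips to fix:
\begin{itemize}
\item The one-sided maximizers are $(\pm s,1)$, not $\pm(s,1)$.
\item $-b/\alpha$ is the offset for $\hat x_1^\top x$, not for $x_1^\top x$.
\item The extra factor $1/M^2$ in your reduced potential is spurious.
\item In the odd case with $s\ge1$, say explicitly that the both-active supremum is attained on the boundary $b=s|\alpha|$. That point lies in the closure of the one-active region, so it is dominated by the one-active maximum.
\end{itemize}
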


\begin{figure}[t]
    \centering
    \includegraphics[width=1.\linewidth]{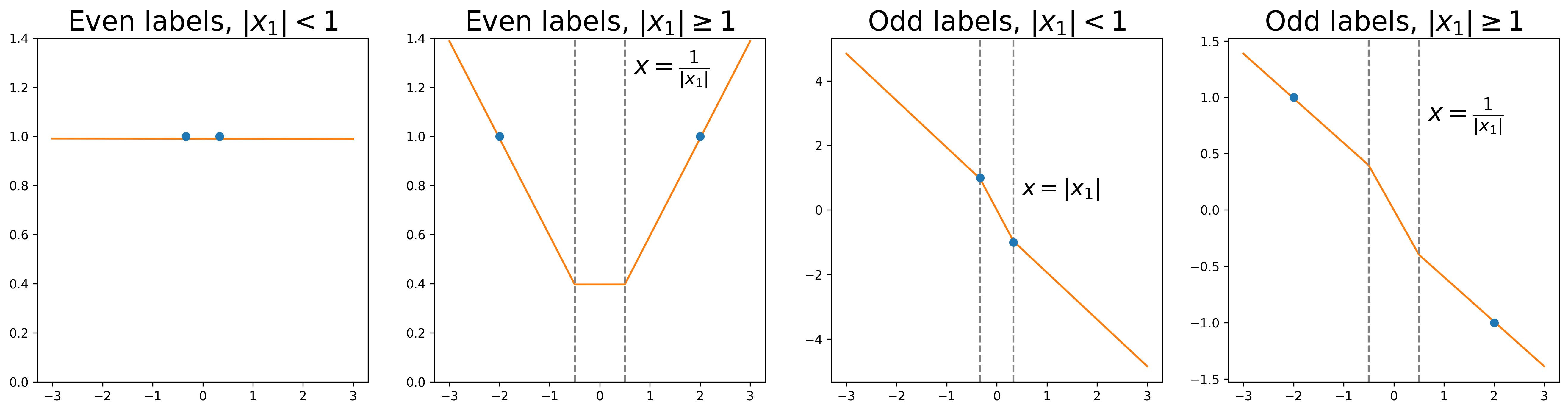}
    \caption{A two-layer univariate ReLU network with $N=1000$ hidden neurons, trained via SGD with weight decay on a symmetric data set of size two. 
    Blue dots represent the training data and orange curves the learned functions. 
    The figure illustrates how the location of the kink hyperplanes (knots) is determined by the training data, as predicted in Proposition~\ref{prop:toy}.
    }
    \label{fig:1D}
\end{figure}

Proposition~\ref{prop:toy} characterizes exactly the features extracted from simple symmetric data sets; see Figure~\ref{fig:1D} for an illustration. 
Interestingly, the solution function may or may not be globally linear depending on the norm of the training inputs. 
Another interesting phenomenon is that, the kink hyperplanes may or may not pass through the training points, which is in contrast to the linear spline interpolation solutions
observed in different training regimes~\citep{williams2019gradient, lai2023generalization} and in the optimal solutions \citep{savarese2019infinite, JMLR:v22:20-1447}. 
A similar observation was made by \citet{shevchenko2022mean} in the univariate setting: 
for the specific training set $\{(-10,2),(10,2)\}$, they showed that no function that is linear on the interval $[-10,10]$ can minimize the free energy. Consequently, the solution function must have at least one knot in the open interval $(-10,10)$. 
Here, we consider a more general setting and explicitly characterize the location of the kink hyperplanes.

We remark that characterizing the exact location of the kink hyperplanes for more general data requires a fine-grained understanding of the residuals of the learned predictor at the training points. We discuss this direction further in Section~\ref{sec:location}. A complete characterization remains an interesting and important problem for future work.

\section{Discussion of the Main Results}
\label{sec:discussion}

\paragraph{Generalization} 
The Lipschitz bound~\eqref{eq:lip} provides a standard route toward a generalization bound for the limiting solution function $h^\ast$ in Theorem~\ref{thm:pwl}. 
Specifically, under common boundedness assumptions 
of the input and output domains, 
the class of functions satisfying~\eqref{eq:lip} has controlled Rademacher complexity \citep[see, e.g.,][]{wainwright2019high}. 
Standard Rademacher-complexity arguments then yield a generalization bound \citep{bartlett2002rademacher}. 
We remark that the bound~\eqref{eq:lip} follows from the uniform second-moment bound on the stationary measures \(\rho^\ast_{\beta,\lambda,m}\) in Lemma~\ref{lem:Mbound}, which is a multivariate adaptation of Lemma A.2 in the work of \citet{shevchenko2022mean}. 

The bound on the effective width in Theorem~\ref{thm:feature} also allows one to control the network complexity, e.g., by bounding the VC-dimension via results in the work of \citet{bartlett2019nearly}. 
However, such a bound would be insufficient for generalization. 
A first issue is that the bound $2\cp(\cx)-1$ grows with the cardinality of the training set $M$. Moreover, controlling only the effective width (or the number of linear regions) is often too weak for obtaining non-vacuous generalization bounds, as piecewise linear functions with a bounded number of regions still constitute a large function class. 
A similar dependence on the sample size also appears in the bound of \citet{shevchenko2022mean} for univariate networks; we note that no generalization result is provided there either.

\citet{safran2022effective} obtained a generalization guarantee for univariate shallow ReLU networks based solely on a bound on the number of linear regions of the learned function. 
This is possible for two reasons: first, they considered a classification problem with the $0$-$1$ loss, where the prediction error depends only on the sign of the network output, not on its magnitude. In this setting, bounding the number of linear regions, and hence the number of sign changes of the prediction function, is sufficient to control the complexity of the function class relevant to the $0$-$1$ loss; 
second, they assumed that the labels are generated by a teacher network of width $r$, and by analyzing a KKT condition satisfied by the trained student network, 
they derived a bound on the number of linear regions that depends only on $r$, not on the sample size. 
The student-teacher setting is important, since fixed-size ReLU networks can have poor empirical performance on random labels even under the $0$-$1$ loss \citep{telgarsky2016benefits}.

\paragraph{The Role of Weight Decay} 
We considered a sequence of regularization parameters $\lambda_n>0$ converging to an arbitrary limit $\bar \lambda>0$. 
The positivity of each $\lambda_n$ is needed to ensure the
applicability of Theorem~\ref{thm:mei}, 
in particular the validity of the Boltzmann fixed point condition~\eqref{eq:minimizer}. As observed by \citet{mei2018mean}, when $\lambda=0$ and the neuron map $\sigma$ is bounded, the right-hand side of~\eqref{eq:minimizer} fails to be normalizable. 
The positivity assumption on $\bar \lambda$ is needed in the analysis of the limiting support of the measure $\rho^\ast_{\bl,m}$ (see Proposition~\ref{prop:shrink}).

We note that the particular value of $\bar \lambda$ has no bearing on the piecewise-affine nature of the solution function, 
the alignment of the learned input weights and biases, 
or the non-redundancy property of the learned features. 
Nevertheless, it controls the Lipschitz constant of the solution function; see~\eqref{eq:lip} and Figure~\ref{fig:wd}.

\paragraph{The Role of Stochasticity in SGD} 
The stochasticity in SGD, including both data sampling and injected Gaussian noise, is needed in the mean-field analysis in Theorem~\ref{thm:mei}. 
Several alternative training procedures have also been studied in the mean-field regime, such as mini-batch and full-batch training, training without noise, training with Gaussian noise under different scalings, and training with heavy-tailed noise \citep{mei2018mean,chizat2018global,mei2019mean, sirignano2020mean,wan2024implicit,descours2024law}. 
In these settings, however, either convergence of the mean-field PDE dynamics in time to a stationary measure has not been established, or the stationary measures lack a sufficiently explicit characterization. As a result, analyzing the corresponding implicit bias remains challenging. 
Nevertheless, neither the specific data sampling scheme nor the precise noise model enters our analysis. The proofs rely only on the Boltzmann fixed point condition~\eqref{eq:minimizer} and on structural properties of the ReLU neuron.

\paragraph{The Role of Initialization and Scaling} 
Notice that Theorem~\ref{thm:mei} imposes only mild assumptions on the initialization distribution $\rho_0$: namely, that $\rho_0\in \mathcal{P}_2^a$ and is sub-Gaussian. 
Consequently, our results apply to any initialization satisfying these conditions. 
This broad applicability stems from the injected Gaussian noise in SGD, or equivalently from the entropy regularization appearing in the free energy. The entropy term makes the free energy strictly convex, thereby ensuring the existence of a unique stationary measure. 

We consider wide neural networks in the mean-field regime, 
using the 
parametrization with one-over-width scaling in \eqref{eq:model} and an initialization distribution that is independent of the network width. 
Alternative choices of the parametrization and initialization scaling give rise to different training regimes, with different training dynamics and implicit biases under gradient-based optimization \citep[see, e.g.,][]{williams2019gradient, JMLR:v22:20-1123, yang2021tensor}.

\paragraph{Comparison with \citet{shevchenko2022mean}}
\citet{shevchenko2022mean} studied univariate network with $m$-truncated, $\tau$-smoothed ReLU neurons, trained by noisy SGD with temperature $\beta^{-1}$ and weight decay $\lambda$ on $M$ data points. 
They showed that, as $\beta,\tau,m\to+\infty$, the learned function is approximated by piecewise linear functions with $O(M)$ knots. 
Theorem~\ref{thm:pwl} extends this result to the multivariate case $d\geq1$, where the bound on the number of kink hyperplanes has order $O(\sum_{i=0}^{d} \binom{M-1}{i})$. 
Regarding the configuration of the knots, \citet{shevchenko2022mean} showed that there are at most three knots between any two consecutive training points and, when three knots occur, two must coincide with the endpoints of the interval, while the third lies strictly in its interior.
This appears as a special case of Corollary~\ref{coro:kink}, which is a direct consequence of Theorem~\ref{thm:nonredun} (see Appendix~\ref{app:kink} for details). 
Moreover, Corollary~\ref{coro:kink} yields an additional consequence in the univariate setting: there can be at most two knots in $(-\infty, x_1]\cup [x_M,+\infty)$, where $x_1=\min_j x_j$ and $x_M=\max_j x_j$. If two knots are present, they must occur precisely at the boundary data points $x_1$ and $x_M$. To our knowledge, this observation is not contained in the analysis of \citet{shevchenko2022mean}.

Beyond extending the above results to general input dimensions, a key advance over \citet{shevchenko2022mean} is that we investigate the implicit bias in parameter space. 
Specifically, Theorem~\ref{thm:feature} shows that the learned input weights and biases align along finitely many directions, leading to an effective collapse of the network width. 
Theorem~\ref{thm:nonredun} further characterizes the structure of these directions and the resulting learned features.  
As discussed in Section~\ref{sec:feature}, these results are nontrivial and do not follow from the piecewise-affine structure of the represented function alone.

Compared with \cite{shevchenko2022mean}, the present work has the following limitations. 
The piecewise-linearity result of \cite{shevchenko2022mean} applies to $\lim_n \lambda_n\ge 0$ including the regime of vanishing regularization, whereas our analysis requires $\bar \lambda=\lim_n \lambda_n>0$. 
As discussed earlier, the positivity of $\bar \lambda$ is needed to analyze the limiting support of the stationary measure $\rho^\ast_{\bl,m}$. 
The support analysis is a key ingredient in our study of the parameter-space implicit bias and, to our knowledge, has no counterpart in the work of \citet{shevchenko2022mean}, where the support of the limiting measure is not investigated.

Additionally, to establish the piecewise-affine structure of the solution function, both \citet{shevchenko2022mean} and our work rely on a bound on the second derivative of the solution function for $x$ outside of a failure set $U$. 
This bound vanishes as $\beta,m\to+\infty$ and the set $U$ shrinks to a union of points in the univariate case or hyperplanes in the multivariate case. 
In the univariate setting, \citet[][Theorem 1]{shevchenko2022mean} further derives a non-asymptotic bound on the Lebesgue measure of $U$. 
We do not obtain an analogous bound for $d>1$. 
In our analysis, the failure set $U$ consists of $x$ that violates the Quadratic Positivity Condition. This condition requires that, in the space of hyperplanes, the set $\hat G_x$ of all hyperplanes passing through $x$ be disjoint from another set $\hat \Omega^{\beta,\lambda,m}$ (see Sections~\ref{sec:qpc}--\ref{sec:shrink} for details). 
Although we obtained a non-asymptotic bound on the size of $\hat \Omega^{\beta,\lambda,m}$ (see Appendix~\ref{app:shrink}), 
converting this information into a quantitative estimate for the measure of the failure set in the input domain appears challenging.

\paragraph{Comparison with \citet{de2020sparsity,del2026dual}}
\citet{de2020sparsity,del2026dual} studied the minimal total variation norm (TV norm) interpolation problem: 
\begin{equation}\label{eq:del}
    \inf_{\mu\in M(\bs^{d})} 
    \|\mu \|_{\mathrm{TV}} \quad \mathrm{s.t. }\ \int (w^\top x_j+b)_+   d\mu(w,b) = y_j,\quad j=1,\ldots,M,
\end{equation} 
where $M(\bs^{d})$ is the space of signed Radon measures on $\bs^{d}$. 
They proved that all minimizers of~\eqref{eq:del} are atomic measures, of the form $\rho^\ast = \sum_{k=1}^{K'}c_i \delta_{w_i}$,
and thus correspond to ReLU networks with at most $K'$ hidden neurons.
As discussed in Section~\ref{sec:related-works}, the key difference between problem~\eqref{eq:del} and the free-energy-minimization problem considered in this work is their connection to training dynamics:
the latter precisely characterizes the implicit bias of noisy SGD, 
whereas there is no general guarantee that the minimizer of problem~\eqref{eq:del} is reached by the training procedure. 
Moreover, due to the entropy regularizer, the minimizer of the free energy at any finite temperature $\beta^{-1}>0$ must be absolutely continuous. 
By contrast, atomic measures belong to the feasible set of~\eqref{eq:del} and are naturally promoted by the TV regularizer.

\citet{de2020sparsity} showed that $K'\leq 2\cp(\cx)$ (see the proof of Theorem 4.2 therein), which exceeds our bound~\eqref{eq:pwl-num} by one. 
\citet[][Lemma 3.11]{del2026dual} showed that 
$K' \leq \max_{k=0,\ldots,M} \binom{M}{k}\Xi(M-k,d+1-k)$,
where $\Xi(n,d)$ is the number of regions in $\br^d$
produced by an arrangement of $n$ central hyperplanes. 
In Appendix~\ref{app:bound} we show that, for generic data, this bound is strictly larger than ours, and that in the regime $2d+1 \leq M$, our bound improves upon it by a factor of at least $2^d/(d+1)^2$.

\section{Proof of the Main Results} 
\label{sec:proof}

In this section, we outline the proofs of our main results. 
In our analysis, we mostly focus on the input weights and biases of the ReLU network, as they define the hidden features and the kink hyperplanes of the output function. 
Thus, throughout this section, by parameter space we mean the space of input weights and biases $\br^{d+1}$.

In Section~\ref{sec:param}, we introduce the notation and a decomposition of the parameter space induced by the training data. 
In Section~\ref{sec:zero-smooth}, we derive a bound on the limit of the Hessian of the solution function $h^\ast_{\bl,\mt}(x)$ as $\tau\to+\infty$. 
In Section~\ref{sec:qpc}, we introduce the Quadratic Positivity Condition at a test location $x$. This condition requires, in the parameter space, that the set of all hyperplanes passing through $x$ be disjoint from a cluster set $\Omega^{\beta,\lambda,m}$, defined by the non-positive sets of certain quadratic functions appearing in the Hessian bound. 
For every $x$ satisfying this condition, we obtain a further bound on the limiting Hessian that vanishes as $\beta,m\to+\infty$. 
In Section~\ref{sec:shrink}, we show that the cluster set $\Omega^{\beta,\lambda,m}$ shrinks to a limiting set $\Omega^\ast$, given by a union of $K\le 2\cp(\cx)-1$ rays in $\br^{d+1}$. 
These rays define $K$ hyperplanes in input space. 
Based on this, we then show that  
(i) for every $x$ at positive distance from these $K$ hyperplanes, the Hessian at $x$ vanishes as $\beta,m\to+\infty$; and 
(ii) the marginal measure of input weights and biases has vanishing mass outside these $K$ rays. 
Finally, in Section~\ref{sec:location}, we investigate the geometry of the limiting set $\Omega^\ast$.

Our proof strategy builds on the analysis of \citet{shevchenko2022mean} for univariate networks. 
Their argument has two main components. 
First, they derive a bound on the Hessian at a test location $x$ that vanishes as $\beta,m\to+\infty$, provided that an associated quadratic function $f_{\beta,m}$ is positive at $x$. 
Second, they show that the non-positive set of $f_{\beta,m}$ has vanishing Lebesgue measure as $\beta,m\to+\infty$, and hence shrinks to a finite set of points. 
In their analysis, the quadratic function $f_{\beta,m}$ is analyzed on the input domain and plays a central role. 
As noted by the authors, extending these arguments beyond one dimension presents two main obstacles. 
First, there is no simple analogue of the interval decomposition of the input domain: 
in $\br^1$, the training inputs partition the input domain into intervals, on each of which the activation pattern of every ReLU neuron is fixed. 
Second, whereas in one dimension the non-positive set of a quadratic function is the union of at most two intervals, in higher dimensions the corresponding quadric sublevel sets can exhibit considerably more intricate geometry, making their concentration difficult to analyze. 
We overcome both obstacles by shifting the analysis from the input space to the parameter space, where the training data induce a natural decomposition. 
We identify the relevant quadratic functions on the parameter space and analyze the shrinkage of their non-positive sets. 

In the sequel, we let $\alpha=(\bl,m)$ 
and, for two quantities $A,B$, we write $A\lesssim B$ if there exists a constant $C$ independent of $(\bl,m,\tau,x)$, such that $A \leq C B$.

\subsection{Subdivision of Parameter Space}
\label{sec:param}

We begin by introducing a natural decomposition of the parameter space induced by the training data, together with the notation needed for our analysis.

For a single ReLU neuron $a(w^\top x + b)_+$, the non-linearity occurs along the hyperplane 
$\{x \in \mathbb{R}^d \colon w^\top x + b = 0\}$. 
This defines a natural mapping from the parameter space to the set of affine hyperplanes in the input space. 
Conversely, for any $x\in \br^d$, let 
$G_{x}= \set{(w,b)\in \br^{d+1}\colon w^\top x + b=0}$, which is the set of parameters $(w,b)$ whose associated input-space hyperplane passes through $x$. 
Given a training set $\cx=\{x_j\}_{j=1}^M$, consider the arrangement of hyperplanes $\{G_{x_j}\}_{j=1}^M\subset \br^{d+1}$. 
Let $\set{U_k}$ denote the sectors of the arrangement, that is, the open connected components, of $\br^{d+1}\setminus \cup_j  G_{x_j}$. 
Each sector $U_k$ corresponds to a unique binary sign pattern on $\cx$ and determines an index set 
$$
J_k=\set{j \in \set{1,\ldots,M} \colon \sgn(w^\top x_j + b)=1,\, \forall (w,b)\in U_k}. 
$$
Thus, the sectors are in one-to-one correspondence with the binary sign patterns on $\cx$ realizable by affine functions. 
The number of such binary sign patterns is exactly twice the number of linear dichotomies. 
Hence, there are precisely $2\cp(\cx)$ sectors in total. 
For analytical convenience, we work with their closures and write $O_k = \cl(U_k)$ for $k=1,\ldots,2\cp(\cx)$.

Fix a point $x \in \mathbb{R}^d$ at which the Hessian of the solution function is to be evaluated.
The hyperplanes $\{G_{x_j}\}_{j=1}^M$ induce a decomposition of $G_x$, $G_x =  \bigcup_k (G_x \cap 
O_k
)$. 
Each open sector $G_x \cap U_k$ consists of those parameters that define a hyperplane passing through the evaluation point $x$ while inducing the binary sign pattern with index set $J_k$ on the training set. 
The set $G_x$ can be naturally identified with $\br^d$ via the map $w\mapsto (w,-w^\top x)$. 
Then, for $k=1,\ldots,2\cp(\cx)$, the closed sector $G_x\cap O_k$ can be identified with 
$$
O_k^w=\set{w\in\br^d \colon (w,-w^\top x)\in O_k}. 
$$

\subsection{Zero-Smoothing Limit}
\label{sec:zero-smooth}

Next, we control the Hessian of the solution function in the zero-smoothing limit $\tau\to+\infty$. 
By the construction of $\sigma^{\mt}$ in Appendix~\ref{app:relu}, 
the limits $(u)_+^m=\lim_{\tau \to+\infty} (u)_+^{\mt}$ and $v^m=\lim_{\tau \to+\infty} v^{\mt}$ exist for every $m>0$; in particular, 
\begin{equation}\label{eq:vm}
v^m=\lim_{\tau \to+\infty} v^{\mt}= \begin{cases}
    v, & \|v\|\leq m\\
    \frac{m}{\|v\|} v, &\|v\|>m.
\end{cases}
\end{equation}
Let $\sigma^m$ denote the \emph{$m$-truncated} ReLU neuron in the zero-smoothing limit: 
$$
\sigma^m(x,\theta)=\lim_{\tau\to+\infty}\sigma^{\mt}(x,\theta)=a^m(x^\top w^m +b)_+^m . 
$$
Since the maps $u\mapsto (u)_+^{\mt}$ and $v\mapsto v^{\mt}$ are bounded and have bounded first derivatives with the bounds independent of $\tau$, 
 $\sigma^m$ is bounded and Lipschitz continuous for every $m>0$.

By results of \citet[Lemma 10.2--10.4]{mei2018mean}, the free energy $F_{\sigma^{m}}^{\bl}$ associated with the $m$-truncated ReLU neuron $\sigma^m$,
with regularization $\lambda>0$ and temperature $\beta^{-1}>0$, admits a unique minimizer, which we denote by $\rho^\ast_\alpha$. 
Define
$$
h^\ast_{\alpha}(\cdot)=h_{\sigma^m}(\cdot,\rho^\ast_{\alpha}), \qquad R_j^\alpha = \frac{1}{M} \Big( h^\ast_{\alpha}(x_j)- y_j \Big),\, j=1,\ldots,M. 
$$
Thus, $h_\alpha^\ast$ is the solution function associated with the $m$-truncated neuron $\sigma^{m}$ and $R_j^\alpha$ is the residual of $h^\ast_{\alpha}$ at the $j$th data point, divided by $M$.

A priori, the limit $\lim_{\tau\to+\infty} h^\ast_{\bl,\mt}$ need not coincide with $h^\ast_{\alpha}$, 
since $h^\ast_{\bl,\mt}$ is obtained by first minimizing the free energy associated with the smoothed and truncated neuron $\sigma^{\mt}$ and then taking $\tau\to+\infty$, 
whereas $h^\ast_\alpha$ is obtained by minimizing the free energy associated directly with the limiting neuron $\sigma^m$.
The following result shows that, as $\tau\to+\infty$, not only $h^\ast_{\bl,\mt}$ but also its Hessian converge to $h^\ast_\alpha$ and its associated Hessian, respectively. This proves Lemma~\ref{prop:tau}. We also bound the norm of the Hessian in the zero-smoothing limit. The proof is provided in Appendix~\ref{app:tau}. 

\begin{proposition}[Zero-Smoothing Limit]\label{prop:tau-limit}
    For any $\alpha=(\bl,m)$ that satisfies Assumption~\ref{assum},
    we have that 
    $h^\ast_{\alpha}\in C^2(\br^d)$ and 
    $$
    h^\ast_{\alpha}(x)= \lim_{\tau\to+\infty}h^\ast_{\bl,\mt}(x),
    \quad \nabla_x^2 h^\ast_{\alpha}(x)=\lim_{\tau\to+\infty}\nabla_x^2 h^\ast_{\bl,\mt}(x), \quad \text{for every $x\in \br^d$}.
    $$
    Furthermore, 
    \begin{equation}\label{eq:hessian-quantity}
        \|\nabla_x^2 h^\ast_{\alpha}(x)\| 
        \lesssim 
        m^{-1}(1+\lambda^{-1})
        +\beta^{-1}\lambda^{-\frac{d+6}{4}}(1+\sqrt{\lambda})^{\frac{d+2}{2}} 
        + 
        \beta^{\frac{d+1}{2}}\lambda^{\frac{d}{4}}(1+\sqrt{\lambda})^{\frac{d+2}{2}}\mathcal{I}^\alpha(x),
    \end{equation}
    where 
    \begin{align*}
        \mathcal{I}^\alpha(x)&=\sum_{k=1}^{2\cp(\cx)} \int_{O_k^w} \|w\|^2 |(Q_k^\alpha)^\top w^m| \exp\curv{-\frac{\lambda\beta}{2}\Big(\|w\|^2 +|x^\top w^m|^2 - ((Q_k^\alpha)^\top w^m)^2\Big)} dw\\
        Q_k^\alpha & = \frac{1}{\lambda}\sum_{j\in J_k} R^\alpha_j (x_j-x)\in \br^d, \qquad k=1,\ldots,2\cp(\cx), 
    \end{align*}
    with the convention that $Q_k^\alpha=0$ if $J_k=\varnothing$. 
\end{proposition}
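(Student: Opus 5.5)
The plan has three parts. First, show convergence of the stationary measures and residuals as $\tau\to+\infty$. Second, differentiate the Boltzmann representation of $h^\ast_\alpha$ twice in $x$, so that the second derivative of the ReLU becomes a surface integral over $G_x$. Third, estimate that surface integral sector by sector by completing the square in the exponent.

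\textbf{Step 1: the zero-smoothing limit.} By \eqref{eq:minimizer}, $\rho^\ast_{\bl,\mt}\propto\exp(-\beta\Psi)$ with
\[
\Psi(\theta)=\sum_j R^{\mt}_j\,\sigma^{\mt}(x_j,\theta)+\tfrac{\lambda}{2}\|\theta\|^2 ,
\]
so the measure is determined by the residual vector $R^{\mt}\in\br^M$. Since $|\sigma^{\mt}|\le 2m^3$, these residuals are bounded uniformly in $\tau$ (via the energy bound $F\le F(\text{Gaussian})$). Extract a convergent subsequence $R^{\mt}\to\bar R$. By dominated convergence, using the Gaussian envelope $e^{-\beta\lambda\|\theta\|^2/2}e^{2\beta M\|R\|m^3}$, the densities converge pointwise to $\propto\exp(-\beta\Psi^m_{\bar R})$. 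The limit satisfies the Boltzmann fixed point for $\sigma^m$. By uniqueness of the minimizer of the strictly convex $F^{\bl}_{\sigma^m}$ (Mei et al.), it equals $\rho^\ast_\alpha$, so the whole family converges. Then $h^\ast_{\bl,\mt}(x)\to h^\ast_\alpha(x)$ by dominated convergence again. This also gives Lemma~\ref{prop:tau}.

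\textbf{Step 2: the Hessian as a surface integral.} Write $h(x)=\int a^m (x^\top w^m+b)^m_+\,d\rho$. The Hessian is $\int a^m\,w^m (w^m)^\top\,(\cdot)''_+\,d\rho$, where the second derivative of the smoothed ReLU concentrates near $x^\top w^m+b=0$. To avoid differentiating a nonsmooth kernel, shift the $x$-dependence onto the density: substitute $b\mapsto b-x^\top w^m$ where $\|w\|\le m$, and handle the truncated region $\|w\|>m$ separately. The region $\|w\|>m$, together with the cap region of $(\cdot)^m_+$, has Gaussian mass that yields the $m^{-1}(1+\lambda^{-1})$ term, using $m>\exp(2C_2\beta+2)$.

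In the main region, integrating by parts twice in $b$ converts the Hessian into an integral over the hyperplane $\{b=-w^\top x\}$, i.e.\ over $G_x$. Its integrand is $\|w\|^2$ times the first $b$-derivative of $\int a^m e^{-\beta\Psi}\,da$. These objects are $\tau$-uniformly bounded, so the Hessians converge by the same dominated convergence.

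\textbf{Step 3: the sector bound, which is the main obstacle.} On $G_x\cap O_k$ the neuron activates exactly the points $J_k$. Hence, for $\|a\|$ small,
\[
\Psi=a\sum_{j\in J_k}R_j\,w^\top(x_j-x)+\tfrac{\lambda}{2}\bigl(a^2+\|w\|^2+(w^\top x)^2\bigr),
\]
using $b=-w^\top x$. This equals $a\lambda (Q_k^\alpha)^\top w^m+\tfrac{\lambda}{2}(a^2+\|w\|^2+|x^\top w^m|^2)$. Completing the square in $a$ gives the exponent in $\mathcal I^\alpha$. The $b$-derivative brings down $\beta\lambda a$-type factors, which after the $a$-integration yield $|Q_k^\alpha{}^\top w^m|$ together with the prefactor $\beta^{(d+1)/2}\lambda^{d/4}(1+\sqrt\lambda)^{(d+2)/2}$. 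That prefactor comes from the partition function lower bound, $Z\gtrsim(\beta\lambda)^{-(d+2)/2}$ times correction terms.

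The remaining $\beta^{-1}$ term collects the Gaussian-variance contributions, namely the $a$-integral of $a^2$ minus the mean. The delicate bookkeeping is the lower bound on $Z$ and the uniform residual bound. I would take these from the second-moment bound for the stationary measures (Lemma~\ref{lem:Mbound}), which gives $\|R\|\lesssim 1$, and expect the exponent powers of $\lambda$ to come out of Gaussian integrals in dimension $d+2$.
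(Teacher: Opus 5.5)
The main gap is in Steps~2--3. You have the wrong integrand on $G_x$, so the mechanism you describe for the factor $|(Q_k^\alpha)^\top w^m|$ and for the $\beta^{-1}$ term would not produce \eqref{eq:hessian-quantity}. Two $x$-derivatives of $(x^\top w^m+b)_+$ give $w^m(w^m)^\top\delta(x^\top w^m+b)$. Equivalently, after your shift in $b$, the two integrations by parts hand both derivatives back to the kink. Hence
\[
\nabla_x^2 h^\ast_\alpha(x)=\int a^m w^m(w^m)^\top \rho^\ast_\alpha(a,w,-x^\top w^m)\,da\,dw+\int a^m w^m(w^m)^\top\,\mathbb{I}_{\{x^\top w^m+b\ge m^2\}}\,\phi_m''(x^\top w^m+b)\,d\rho^\ast_\alpha(\theta),
\]
that is, the density itself restricted to $G_x$, weighted by $a^m$, with no $b$-derivative. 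A $b$-derivative of $e^{-\beta\Psi}$ would bring down $\beta\bigl(\sum_{j\in J_k}R^\alpha_j a^m+\lambda b\bigr)$. That produces terms like $\beta\sum_{j\in J_k}R^\alpha_j\,((Q_k^\alpha)^\top w^m)^2$ and $\beta\lambda\,(x^\top w^m)\,(Q_k^\alpha)^\top w^m$, not $\mathcal{I}^\alpha$.

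In the paper, $|(Q_k^\alpha)^\top w^m|$ is the first absolute moment of the Gaussian in $a$ centred at $-(Q_k^\alpha)^\top w^m$. Using only $|a^m|\le|a|$ (so no restriction to small $|a|$ is needed), one has
\[
\int_{\mathbb{R}}|a|\,e^{-\frac{\lambda\beta}{2}\left(a^2+2a^m (Q_k^\alpha)^\top w^m\right)}\,da\;\le\;\frac{2}{\lambda\beta}+2\sqrt{\frac{2\pi}{\lambda\beta}}\;\bigl|(Q_k^\alpha)^\top w^m\bigr|\;e^{\frac{\lambda\beta}{2}\left((Q_k^\alpha)^\top w^m\right)^2}.
\]
The constant piece $2/(\lambda\beta)$, integrated against $\|w\|^2e^{-\lambda\beta\|w\|^2/2}$, gives $(\lambda\beta)^{-(d+4)/2}$; after dividing by $Z^\ast_\alpha$ this is the $\beta^{-1}$ term, not a variance of $a$. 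The second piece gives $(\lambda\beta)^{-1/2}\mathcal{I}^\alpha$. Likewise, the $m^{-1}(1+\lambda^{-1})$ term is just the cap term, bounded by $m\cdot m^{-2}\int\|w\|^2\,d\rho^\ast_\alpha$ together with Lemma~\ref{lem:Mbound}. No separate treatment of $\|w\|>m$ is needed, because $w^m=\gamma_w w$ with $\gamma_w>0$ leaves both the shift and the sectors $O_k^w$ unaffected. Also, $m>\exp(2C_2\beta+2)$ plays no role here.

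Second, the stated powers of $\lambda$ do not follow from residuals of order one. With $\|R^\alpha\|\lesssim 1$, the same lower-bound argument gives only $Z^\ast_\alpha\gtrsim(\beta(1+\lambda))^{-(d+2)/2}$. For small $\lambda$ this yields $\beta^{-1}\lambda^{-(d+4)/2}$ and $\beta^{(d+1)/2}\lambda^{-1/2}$ instead of $\beta^{-1}\lambda^{-(d+6)/4}$ and $\beta^{(d+1)/2}\lambda^{d/4}$, so the bound with a $\lambda$-independent constant is not reached. The missing ingredient is Lemma~\ref{lem:hess-R}, $\sum_j|R^\alpha_j|\lesssim\sqrt{\lambda}$, which does not follow from the second-moment bound. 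The paper proves it by comparing $F^{\beta,\lambda}_{\sigma^m}(\rho^\ast_\alpha)$ with the free energy of a uniform smoothing, at scale $\zeta\simeq\sqrt{\lambda}$, of an atomic measure whose tent functions interpolate the data. The entropy cost $\beta^{-1}(d+2)\log(1/\zeta)$ is $\lesssim\lambda$ precisely because $\beta>\lambda^{-1}\log\lambda^{-1}$ in Assumption~\ref{assum}. Inserting this into $Z^\ast_\alpha\ge\int\exp\{-\beta[C\sqrt{\lambda}(|a|\|w\|+|ab|)+\tfrac{\lambda}{2}\|\theta\|^2]\}\,d\theta$ yields Lemma~\ref{lem:hess-Z} and hence the prefactors.

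There are two smaller omissions.
\begin{itemize}
\item Passing $\nabla_x^2h^\ast_{\beta,\lambda,m,\tau}\to\nabla_x^2h^\ast_\alpha$ needs more than uniform bounds and dominated convergence. After the substitution $u'=\tau(x^\top w^{m,\tau}+b)$, the density is evaluated at the $\tau$-dependent point $b=u'/\tau-x^\top w^{m,\tau}$. The paper therefore uses a $\tau$-uniform local Lipschitz bound (Lemma~\ref{lem:tau-gauss-tail}).
\item Since $\rho^\ast_\alpha$ itself has kinks in $b$, shifting the $x$-dependence onto it does not remove the nonsmoothness. Proving $h^\ast_\alpha\in C^2$ requires checking that the limiting expression is continuous in $x$ and equals the distributional Hessian, as the paper does.
\end{itemize}

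By contrast, your Step~1 is a valid and simpler route than the paper's weak-$L^1$/lower-semicontinuity argument, since it uses compactness of the finite-dimensional residual vector. You need only add that a Boltzmann fixed point is the minimizer, e.g.\ $F^{\beta,\lambda}_{\sigma^m}(\mu)-F^{\beta,\lambda}_{\sigma^m}(\rho)\ge\beta^{-1}\int\mu\log(\mu/\rho)\,d\theta\ge0$ by convexity of the square loss.
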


Thus, the Hessian is upper bounded by three terms. The first two vanish as $m,\beta\to+\infty$. The third is more complicated and depends on $\mathcal{I}^\alpha(x)$.

\subsection{Quadratic Positivity Condition}
\label{sec:qpc}

We introduce a condition for controlling the term $\mathcal{I}^\alpha(x)$ in the right-hand side of~\eqref{eq:hessian-quantity} and thus controlling the Hessian. 
To motivate the condition, recall from~\eqref{eq:vm} that $w^m=\gamma_w w$ for some $\gamma_w\in [0,1]$. Observe that if
\begin{equation}\label{eq:hessian-condition-2}
\|w\|^2+|x^\top w|^2-|(Q_k^\alpha)^\top w|^2 \geq \delta \|w\|^2, \quad \forall w \in O_k^w, \ \forall k, 
\end{equation}
for some $\delta \in (0,1]$, 
then, 
\begin{equation}\label{eq:okw}
\begin{aligned}
\|w\|^2+|x^\top w^m|^2 -|(Q_k^\alpha)^\top w^m|^2 
&=(1-\gamma_w^2)\|w\|^2 + \gamma_w^2(\|w\|^2+|x^\top w|^2-|(Q_k^\alpha)^\top w|^2)\\
& \geq (1-\gamma^2_w(1-\delta))\|w\|^2\\
& \geq \delta\|w\|^2. 
\end{aligned}
\end{equation}
Under this condition, each integral in $\mathcal{I}^\alpha(x)$ can be controlled by the third moment of a Gaussian distribution as follows: 
\begin{align*}
&\int_{O_k^w} \|w\|^2 |(Q^\alpha_k)^\top w^m| \exp\curv{-\frac{\lambda\beta}{2}\Big( \|w\|^2+|x^\top w^m|^2-|(Q^\alpha_k)^\top w^m|^2\Big)} dw    \\
\leq & \sqrt{1+\|x\|^2}\int_{\br^d} \|w\|^3 \exp\curv{-\frac{\lambda\beta \delta}{2}\|w\|^2} dw\\
\lesssim& \sqrt{1+\|x\|^2} (\lambda\beta\delta)^{-\frac{d+3}{2}}.
\end{align*}
Then the Hessian can be bounded as 
$$
\|\nabla_x^2 h^\ast_{\alpha}(x)\| 
\lesssim m^{-1}(1+\lambda^{-1})
+\beta^{-1}\lambda^{-\frac{d+6}{4}}(1+\sqrt{\lambda})^{\frac{d+2}{2}} 
\Big( 1 + 
\delta^{-\frac{d+3}{2}} \sqrt{1+\|x\|^2}\Big),
$$
where the right-hand side vanishes as $m,\beta\to+\infty$, provided that $\delta$ is lower bounded.

We reformulate condition~\eqref{eq:hessian-condition-2} in a more convenient form. 
For $k=1,\ldots,2\cp(\cx)$, define a quadratic function 
\begin{equation}\label{eq:f}
f^\alpha_k(w,b)
= \|w\|^2+b^2 - \Big( \frac{1}{\lambda}\sum_{j\in J_k} R^\alpha_j (x_j^\top w +b) \Big)^2, 
\end{equation}
with the convention that $f_k^\alpha(w,b)=\|w\|^2+b^2$ if $J_k=\varnothing$. 
Note that 
$f_k^\alpha$ is 2-homogeneous and that 
the left-hand side of~\eqref{eq:hessian-condition-2} is precisely $f^\alpha_k(w,-w^\top x)$. 
Thus, \eqref{eq:hessian-condition-2} is satisfied if 
\begin{equation}\label{eq:qpc}
\min_{k\in \set{1,\ldots,2\cp(\cx)}} \inf_{\substack{(w,b)\in G_x\cap O_k\\ \|w\|^2+b^2=1}} f_k^\alpha(w,b)>0,
\end{equation}
with the convention that $\inf_{\substack{(w,b)\in S}} f(w,b)=+\infty$ if $S=\varnothing$.
We refer to~\eqref{eq:qpc} as the \emph{Quadratic Positivity Condition}, 
as it requires the positivity of $f_k^\alpha$ on the intersection of the hyperplane $G_x$ and the closed sector $O_k$ for every $k$. 

We have the following result 
bounding the integral in the right-hand side of~\eqref{eq:hessian-quantity} and thus the Hessian of the solution function at $x$, 
provided that $x$ satisfies the Quadratic Positivity Condition. 
The proof is provided in Appendix~\ref{app:qpc}.

\begin{proposition}[Quadratic Positivity Condition]
\label{prop:qpc}
    Assume that $\alpha=(\bl,m)$ satisfies Assumption~\ref{assum}.
    Assume that $x\in \br^d$ satisfies the Quadratic Positivity Condition~\eqref{eq:qpc}. 
    Let 
    \begin{equation*}\label{eq:margin}
        \delta = \min \Big\{ \min_{k} \inf_{\substack{(w,b)\in G_x\cap O_k\\ \|w\|^2+b^2=1}} f_k^\alpha(w,b),\ \frac{1}{2} \Big\}\in \Big(0,\frac{1}{2}\Big]. 
    \end{equation*}
    Then,
    \begin{equation}\label{eq:hessian-bound}
        \|\nabla_x^2 h^\ast_{\alpha}(x)\|
    \lesssim  
    m^{-1}(1+\lambda^{-1})
        +\beta^{-1}\lambda^{-\frac{d+6}{4}}(1+\sqrt{\lambda})^{\frac{d+2}{2}}
        \left(1
        + 
        \sqrt{1+\|x\|^2}\cdot \delta^{-\frac{d+3}{2}}
        \right). 
    \end{equation} 
\end{proposition}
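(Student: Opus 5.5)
The plan is to start from the general Hessian bound~\eqref{eq:hessian-quantity} of Proposition~\ref{prop:tau-limit}. Then we control the only non-explicit term, $\mathcal{I}^\alpha(x)$, by the Gaussian third-moment estimate sketched just before the statement. The argument has three steps.

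\emph{Step 1: reduce \eqref{eq:qpc} to \eqref{eq:hessian-condition-2}.} For $w\in O_k^w$ the point $(w,-w^\top x)$ lies in $G_x\cap O_k$. The function $f_k^\alpha$ is 2-homogeneous, and by the definition of $Q_k^\alpha$ we have
\[
\tfrac1\lambda\textstyle\sum_{j\in J_k}R_j^\alpha(x_j^\top w-w^\top x)=(Q_k^\alpha)^\top w .
\]
Hence
\[
\|w\|^2+|x^\top w|^2-|(Q_k^\alpha)^\top w|^2=f_k^\alpha(w,-w^\top x)\ge \delta(\|w\|^2+|x^\top w|^2)\ge\delta\|w\|^2,
\]
with $\delta$ as in the statement. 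Capping $\delta$ at $1/2$ only weakens the inequality, so this is harmless. Empty sectors and $J_k=\varnothing$ are trivial, since then $Q_k^\alpha=0$.

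\emph{Step 2: pass from $w$ to $w^m$.} By~\eqref{eq:vm}, $w^m=\gamma_w w$ with $\gamma_w\in[0,1]$. The convex-combination computation~\eqref{eq:okw} then gives that the exponent is at least $\delta\|w\|^2$. For the prefactor, I would bound $|(Q_k^\alpha)^\top w^m|\le|(Q_k^\alpha)^\top w|$. From Step 1, $|(Q_k^\alpha)^\top w|^2\le\|w\|^2+|x^\top w|^2\le(1+\|x\|^2)\|w\|^2$. So each integrand is at most
\[
\sqrt{1+\|x\|^2}\,\|w\|^3\exp\!\big(-\tfrac{\lambda\beta\delta}{2}\|w\|^2\big).
\]
Integrating over $\br^d$ and scaling gives $\lesssim\sqrt{1+\|x\|^2}(\lambda\beta\delta)^{-(d+3)/2}$. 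Summing over the $2\cp(\cx)$ sectors only contributes a data-dependent constant, which is absorbed into $\lesssim$.

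\emph{Step 3: bookkeeping.} Multiply by $\beta^{(d+1)/2}\lambda^{d/4}(1+\sqrt\lambda)^{(d+2)/2}$. The $\beta$-powers combine as $\beta^{(d+1)/2-(d+3)/2}=\beta^{-1}$. The $\lambda$-powers combine as $\lambda^{d/4-(d+3)/2}=\lambda^{-(d+6)/4}$. This matches the prefactor of the second term in~\eqref{eq:hessian-quantity}, and adding the two explicit terms yields~\eqref{eq:hessian-bound}.

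The main obstacle is not analytic but one of care in Step 1. We need the identification $O_k^w\leftrightarrow G_x\cap O_k$ to be exact on the closed sectors, including boundary points where some $x_j^\top w+b=0$. We also need the unit normalization in~\eqref{eq:qpc} to be used through homogeneity with respect to $\|w\|^2+b^2$ rather than $\|w\|^2$. This is why one keeps the intermediate bound $\delta(\|w\|^2+|x^\top w|^2)$ before discarding $|x^\top w|^2$. Everything else, namely the Gaussian moment computation and the power counting, is routine.
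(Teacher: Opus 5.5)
Your proposal is correct and follows the paper's proof essentially step for step. You reduce the Quadratic Positivity Condition via $2$-homogeneity to $\|w\|^2+|x^\top w|^2-|(Q_k^\alpha)^\top w|^2\ge\delta\|w\|^2$ on $O_k^w$, pass to $w^m=\gamma_w w$ by the convex-combination argument, bound the prefactor by $\sqrt{1+\|x\|^2}\,\|w\|$, and integrate a Gaussian third moment before substituting into Proposition~\ref{prop:tau-limit}. The only cosmetic difference is in the prefactor bound: you use $|(Q_k^\alpha)^\top w^m|\le|(Q_k^\alpha)^\top w|$ before truncation, whereas the paper uses the post-truncation inequality $|(Q_k^\alpha)^\top w^m|^2\le(1-\delta)\|w\|^2+|x^\top w^m|^2$, and both yield the same estimate.
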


\subsection{Shrinkage of Cluster Set}
\label{sec:shrink}

According to the bound~\eqref{eq:hessian-bound}, 
the Hessian vanishes as $m,\beta\to+\infty$ and $\lambda\to\bar \lambda>0$, 
provided that the margin $\delta$ remains uniformly bounded away from zero throughout this limiting process. 
We now examine when $\delta$ admits such a lower bound.

\begin{figure}[t]
    \centering
    \includegraphics[width=0.25\linewidth]{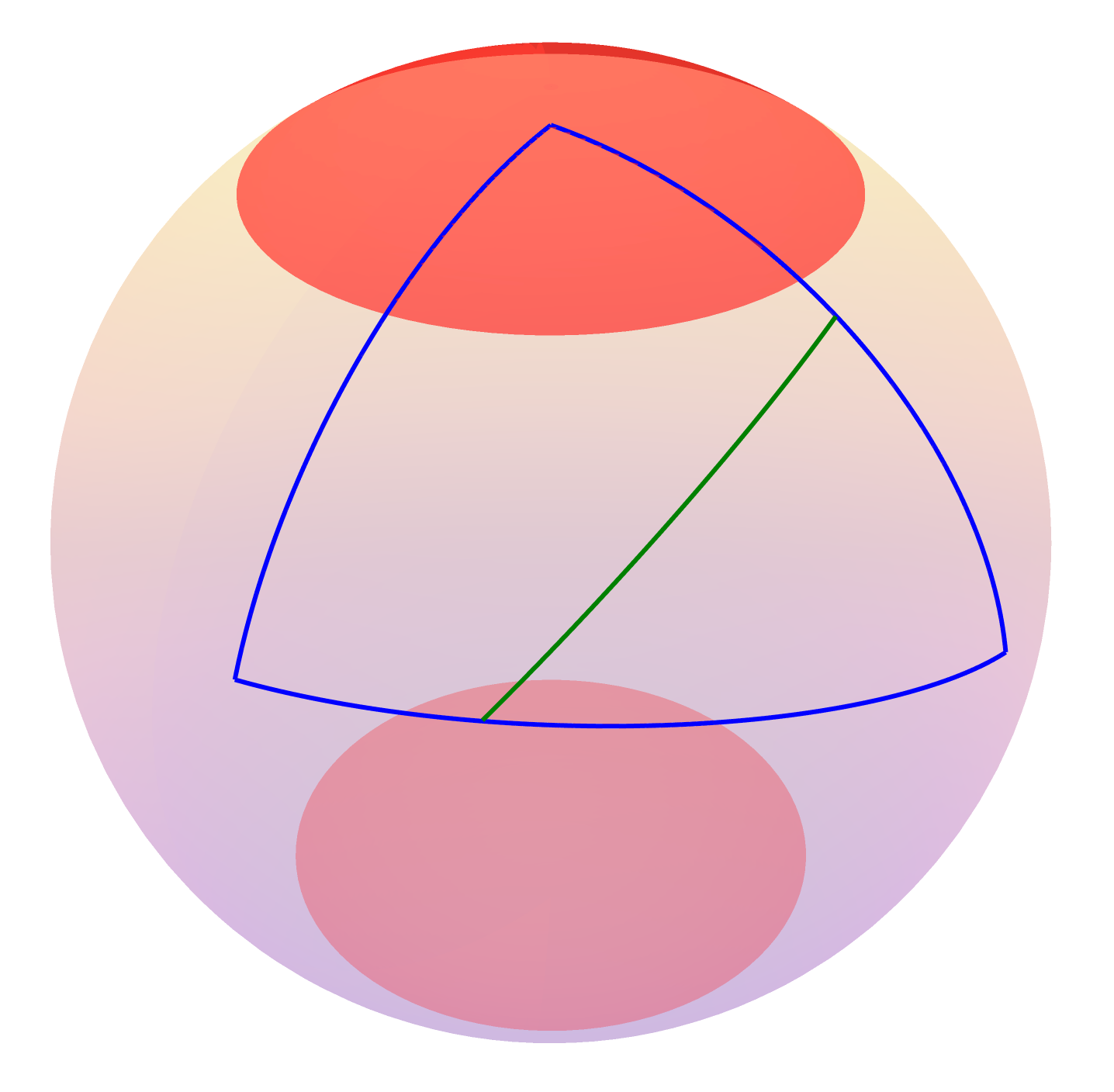}
    \caption{
    Illustration of the Quadratic Positivity Condition for $d=2$. 
    For a fixed $k$, 
    the red antipodal disks represent $\hat \Omega_k^\alpha$, the non-positive set of the quadratic function $f_k^\alpha$; 
    the region bounded by the blue arcs is $\hat O_k$, the sector corresponding to a particular binary pattern on the training inputs; 
    and the green arc is $\hat G_x\cap \hat O_k$, the set of parameters defining a hyperplane that passes through $x$ while inducing that binary pattern on the training set. 
    } 
    \label{fig:qpc}
\end{figure}

To this end, we derive a geometric description of the Quadratic Positivity Condition. 
Let 
\begin{equation*}
    P_k^\alpha = \frac{1}{\lambda}\sum_{j\in J_k}  R^\alpha_j \begin{pmatrix}
    x_j\\1
    \end{pmatrix} \in \br^{d+1},
\end{equation*}
with the convention that $P_k^\alpha=0$ if $J_k=\varnothing$. 
Writing $z=(w,b)$, we can express $f_k^\alpha$ as the quadratic form
$$
f_k^\alpha(z)= z^\top \left(I_{d+1} -P_k^\alpha (P_k^\alpha)^\top \right) z = z^\top \Lambda_k^\alpha z.
$$
The coefficient matrix $\Lambda_k^\alpha$ has eigenvalues $1$ with multiplicity $d$ and $1-\|P_k^\alpha\|^2$. 
Thus, if we define the non-positive set of $f_k^\alpha$ by
\begin{equation*}
    \Omega_k^\alpha = \set{(w,b)\in\br^{d+1} \colon f_k^\alpha (w,b)\leq 0},
\end{equation*}
then $\Omega_k^\alpha$ is a quadratic double cone when $\|P_k^\alpha\|^2>1$, a line when $\|P_k^\alpha\|^2=1$, and the singleton $\set{(0,0)}$ when $\|P_k^\alpha\|^2<1$.
Here and throughout, we call a set $S\subset \br^{d+1}$ a cone if $v\in S$ implies $c v\in S$ for all $c \geq 0$. 
Notably, \eqref{eq:qpc} is equivalent to requiring that the cones $G_x\cap O_k$ and $\Omega_k^\alpha$ intersect only at the origin. 
This condition can be expressed on the unit sphere. 
For any cone $S\subset \br^{d+1}$, let 
$\hat S = S\cap \bs^d$ denote its spherical cross-section. 
Since a cone is fully determined by its spherical cross-section, \eqref{eq:qpc} is equivalent to 
$$
(\hat G_x \cap \hat O_k ) \cap \hat \Omega_k^\alpha = \varnothing, \quad \forall k, 
$$
illustrated in Figure~\ref{fig:qpc}, 
or, equivalently, 
$$
\hat G_x \cap \hat \Omega^\alpha =  \varnothing, \quad \text{where} \quad \hat \Omega^\alpha = \bigcup_{k=1}^{2\cp(\cx)} \hat O_k \cap \hat \Omega^\alpha_k. 
$$

We refer to the set $\hat \Omega^\alpha$ as the \emph{cluster set}, 
in analogy to the terminology of \citet{shevchenko2022mean}. 
Next we show that, 
under a suitable sequence $\alpha_n=(\beta_n,\lambda_n,m_n)$, the cluster set $\hat \Omega^{\alpha_n}$ shrinks to at most $2\cp(\cx)-1$ fixed points on the sphere $\bs^d$, as $n\to+\infty$. 
Consequently, for any $x\in\br^d$ such that $\hat G_x$ avoids these limiting points,
we can obtain a uniform positive lower bound on the margin $\delta$.
The following result formalizes this shrinkage of the cluster set.
Its proof is provided in Appendix~\ref{app:shrink}.

\begin{proposition}[Shrinkage of Cluster Set]\label{prop:shrink}
    Consider any sequence $\alpha_n=(\beta_n,\lambda_n,m_n)$ 
    that satisfies Assumption~\ref{assum} for $n\geq 1$, 
    with $m_n\to+\infty$ and $\lambda_n\to \bar \lambda$ for an arbitrary $\bar \lambda>0$ as $n\to+\infty$, 
    and such that $\lim_{n\to+\infty} h^\ast_{{\alpha_n}}(x)$ exists for every $x\in \br^d$. 
    For $k=1,\ldots,2\cp(\cx)$, let
    $P^\ast_{k}= \lim_{n\to+\infty} P_k^{\alpha_n}$, 
    $f^\ast_k(w,b) = z^\top(I_{d+1}- P^\ast_{k} (P^\ast_{k})^\top)z$ with $z=(w,b)$,  
    and $\hat \Omega^\ast_k= \{(w,b)\in \bs^{d} \colon f_k^\ast(w,b)\leq 0\}$. 
    Define
    \begin{equation}\label{eq:omegaast}
        \hat \Omega^\ast = \bigcup_{k=1}^{2\cp(\cx)} \hat O_k \cap \hat \Omega_k^\ast. 
    \end{equation}
    Then, $\hat \Omega^\ast$ consists of at most $2\cp(\cx)-1$ points on $\bs^d$ and, furthermore, 
    for any $\varepsilon>0$ there exists $N>0$ such that for all $n>N$,
    $$
    \hat \Omega^{\alpha_n} \subset \set{z\in \bs^d\colon d_{\angle}(z, \hat \Omega^\ast)\leq \varepsilon},
    $$
    where $d_\angle(z,S)=\inf_{z'\in S}\arccos(z^\top z')$ denotes the spherical point-to-set distance for $z\in \bs^d$ and $S\subset \bs^d$, with the convention that $d_\angle(z,S)=+\infty$ if $S=\varnothing$. 
\end{proposition}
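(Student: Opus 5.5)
The proof has two parts. First, pass to the limit in the quadratic forms and identify $\hat\Omega^\ast$ explicitly. Second, obtain the $\varepsilon$-neighbourhood inclusion by a compactness argument.

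\emph{The limits $P_k^\ast$ and the cone $\hat\Omega^\ast$.} Since $h^\ast_{\alpha_n}(x_j)$ converges for every $j$ and $\lambda_n\to\bar\lambda>0$, the residuals $R_j^{\alpha_n}$ converge. Hence $P_k^{\alpha_n}\to P_k^\ast$ for every $k$, and $f_k^{\alpha_n}\to f_k^\ast$ uniformly on $\bs^d$. By the eigenvalue description of $\Lambda_k^\ast=I-P_k^\ast(P_k^\ast)^\top$, the set $\hat\Omega_k^\ast$ is:
\begin{itemize}
\item empty if $\|P_k^\ast\|<1$;
\item the antipodal pair $\pm P_k^\ast/\|P_k^\ast\|$ if $\|P_k^\ast\|=1$;
\item a pair of antipodal spherical caps if $\|P_k^\ast\|>1$.
\end{itemize}

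The key analytic claim is that the third case cannot occur, i.e. $\|P_k^\ast\|\le 1$ for every $k$. To prove it, I would use the Boltzmann fixed point condition~\eqref{eq:minimizer} for $\sigma^m$. For $(w,b)$ in the interior of $O_k$ with norm below the truncation scale, $\Psi$ is quadratic in $a$. Minimizing over $a$ gives
$$\min_a \Psi^{\lambda}_{\sigma^m}((a,w,b);\rho^\ast_\alpha)=\tfrac{\lambda}{2}f_k^\alpha(w,b).$$
Suppose $\|P_k^{\alpha_n}\|^2\ge 1+\eta$ for a fixed $\eta>0$ along a subsequence. The cap $\hat\Omega_k^{\alpha_n}$ then has a uniform radius, and since $P_k^\ast$ is a nonnegative combination of the $(x_j,1)$ with $j\in J_k$, it meets $\hat O_k$ in a set with nonempty interior. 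On the cone over this region, the density $\exp(-\beta_n\Psi)$ grows like $\exp(c\beta_n\lambda_n\|z\|^2)$ up to the truncation radius $m_n$. This forces $\rho^\ast_{\alpha_n}$ to put mass of order one at radius $\gtrsim m_n$. With $m_n\ge\exp(2C_2\beta_n+2)$ from Assumption~\ref{assum}, that contradicts the uniform second-moment bound of Lemma~\ref{lem:Mbound}. Making this quantitative (the partition function comparison against the Gaussian part, and the truncation via $v^m$) is the step I expect to be the main obstacle.

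\emph{Counting points.} Assume $\|P_k^\ast\|\le 1$ for all $k$. Then $\hat O_k\cap\hat\Omega_k^\ast\subset\{\pm P_k^\ast/\|P_k^\ast\|\}\cap\hat O_k$, and at most one of the two antipodal points lies in $\hat O_k$. Indeed, if both did, the closed convex cone $O_k$ would contain the line $\br P_k^\ast$. That forces $(x_j,1)^\top P_k^\ast=0$ for all $j$, hence $\|P_k^\ast\|^2=\sum_{j\in J_k}\bar\lambda^{-1}R_j^\ast(x_j,1)^\top P_k^\ast=0$, a contradiction. Moreover, the sector with $J_k=\varnothing$ always exists, since $(w,b)=(0,-1)$ realizes it, and it has $P_k^\ast=0$, so it contributes nothing. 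There are $2\cp(\cx)$ sectors in total, so $\hat\Omega^\ast$ has at most $2\cp(\cx)-1$ points.

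\emph{Shrinkage.} Suppose the inclusion fails for some $\varepsilon>0$. Then there are $z_n\in\hat\Omega^{\alpha_n}$, along a subsequence, with $d_\angle(z_n,\hat\Omega^\ast)>\varepsilon$; this includes the case $\hat\Omega^\ast=\varnothing$. Since there are finitely many sectors, we may pass to a further subsequence with $z_n\in\hat O_k\cap\hat\Omega_k^{\alpha_n}$ for a fixed $k$, and by compactness of $\bs^d$ assume $z_n\to z$. Closedness of $\hat O_k$ gives $z\in\hat O_k$, and uniform convergence of $f_k^{\alpha_n}$ gives $f_k^\ast(z)=\lim_n f_k^{\alpha_n}(z_n)\le 0$. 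Hence $z\in\hat\Omega^\ast$, contradicting $d_\angle(z_n,\hat\Omega^\ast)>\varepsilon$ for all $n$.
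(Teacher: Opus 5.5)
Your passage to the limits $P_k^\ast$ and your compactness argument for the $\varepsilon$-neighbourhood inclusion are correct, and the latter is the paper's final step. The gap is the claim $\|P_k^\ast\|\le 1$ for all $k$, on which your point count rests. Your justification fails at the assertion that, because $P_k^\ast$ is a nonnegative combination of the $(x_j,1)$, $j\in J_k$, the cap meets $\hat O_k$ in a set with nonempty interior. There are two problems with it:
\begin{itemize}
\item The coefficients are $R_j^\ast/\bar\lambda$. They carry the signs of the residuals and are not nonnegative in general.
\item Even nonnegative coefficients would only put $P_k^\ast$ in the dual cone of $O_k$, not in $O_k$.
\end{itemize}
For example, take $d=1$, $\cx=\{1/2,1\}$, and let $O_k$ be the sector activating only $x=1$. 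Then $\hat O_k$ is the arc $\{(\cos t,\sin t)\colon -\pi/4\le t\le -\arctan(1/2)\}$. For $\|P_k^\ast\|=2$, the set $\hat\Omega_k^\ast$ is the pair of arcs of radius $\pi/3$ centred at $t=\pi/4$ and $t=-3\pi/4$, and both miss $\hat O_k$. Only the restriction of $f_k^{\alpha}$ to $O_k$ enters the Boltzmann density, so no mass or partition-function argument can exclude such a configuration. What such an argument does give is only $s(\hat O_k\cap\hat\Omega_k^{\alpha_n})\to 0$, hence $s(\hat O_k\cap\hat\Omega_k^\ast)=0$. This is the paper's Step 2, which compares $Z_\alpha^\ast\gtrsim s(\hat O_k\cap\hat\Omega_k^\alpha)\,m^{(d+1)/2}(\lambda\beta)^{-1/2}$ with the upper bound of Lemma~\ref{lem:hess-Z}. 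The paper's proofs of Theorem~\ref{thm:pwl-detail} and Proposition~\ref{prop:toy} in fact handle sectors with $\|P_k^\ast\|>1$ explicitly.

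Once $\|P_k^\ast\|>1$ is allowed, your observation that at most one of $\pm P_k^\ast/\|P_k^\ast\|$ lies in $\hat O_k$ no longer bounds the count, and two further ingredients are needed.
\begin{itemize}
\item \emph{At most one point per sign.} Measure zero forces $\hat O_k\cap\hat\Omega_k^\ast\subset\{z\colon |z^\top P_k^\ast|=1\}$, and each sign contributes at most one point. If two distinct points of $\hat O_k$ had $z^\top P_k^\ast=1$, their normalized midpoint would lie in $\hat O_k$ with $z^\top P_k^\ast>1$, giving a piece of positive measure. So a sector can contain two points. In that case $\|P_k^\ast\|>1$, and both points must lie on the relative boundary of $\hat O_k$.
\item \emph{Merging across sectors.} Since $f_k^\ast=f_l^\ast$ on $\hat O_k\cap\hat O_l$, such boundary points also belong to $\hat O_l\cap\hat\Omega_l^\ast$ for adjacent sectors $l$.
\end{itemize}
The bound $2\cp(\cx)-1$ then follows by double counting over sectors:
\begin{itemize}
\item a point interior to a sector is the only point meeting that sector, since it forces $\|P_k^\ast\|=1$ and then your antipodal argument applies;
\item every other point meets at least two sectors;
\item each sector meets at most two points;
\item the sector with $J_k=\varnothing$ meets none.
\end{itemize}
With the first ingredient alone you would only get $2(2\cp(\cx)-1)$.
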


Notice that the limit $\lim_{n\to +\infty}P_k^\alpha$ exists for every $k$, as both limits, $\lim_{n\to+\infty} \lambda_n$ and $\lim_{n\to+\infty}R^{\alpha_n}_j$, exist by our assumptions.

Based on Proposition~\ref{prop:qpc} and Proposition~\ref{prop:shrink}, we obtain the following result, thereby proving Theorem~\ref{thm:pwl}.
The proof is given in Appendix~\ref{app:pwl2}.

\begin{theorem}\label{thm:pwl-detail}
    Consider any sequence $\alpha_n=(\beta_n,\lambda_n,m_n)$ 
    that satisfies Assumption~\ref{assum} for $n\geq 1$, 
    with $m_n,\beta_n\to+\infty$ and $\lambda_n\to \bar \lambda$ for an arbitrary $\bar \lambda>0$ as $n\to+\infty$, 
    and such that $\lim_{n\to+\infty} h^\ast_{\alpha}(x) $ exists for every $x\in \br^d$. 
    Let $\hat \Omega^\ast = \set{(w_k, b_k)}_{k=1}^K$ be defined as in~\eqref{eq:omegaast}, with $K\leq 2\cp(\cx)-1$.  
    Let $\pi_k = \set{x\in \br^d \colon x^\top w_k +b_k=0}$ for $k=1,\ldots,K$. 
    Then for any compact set 
    $\Gamma \subset \br^d \setminus \cup_k \pi_k$, we have that 
    $$
    \lim_{n\to+\infty} \sup_{x\in \Gamma} \|\nabla_x^2 h^\ast_{\alpha_n}(x)\|=0. 
    $$
    Consequently, the function $h^\ast(\cdot)=\lim_{n\to+\infty} h^\ast_{\alpha_n}(\cdot)$ is affine on each connected component of $\br^d \setminus \cup_k \pi_k$. 
    Moreover, $h^\ast$ is Lipschitz continuous and satisfies 
    \begin{equation*}
        \|h^\ast\|_{\operatorname{Lip}} \leq 2(d+2)+\frac{2}{\bar \lambda M}\sum_{j=1}^M y_j^2.  
    \end{equation*}
\end{theorem}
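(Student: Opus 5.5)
The plan is to combine Proposition~\ref{prop:qpc} with Proposition~\ref{prop:shrink} through a compactness argument on $\Gamma$, and then pass from a vanishing Hessian to piecewise affineness of the pointwise limit.

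\textbf{Step 1: a uniform margin on $\Gamma$.} Fix a compact $\Gamma\subset\br^d\setminus\cup_k\pi_k$. For $x\in\Gamma$, the great sphere $\hat G_x$ avoids every point $(w_k,b_k)\in\hat\Omega^\ast$, because $x^\top w_k+b_k\neq 0$. The function $(x,z)\mapsto |z^\top(x,1)|/\|(x,1)\|$ is continuous and equals the sine of the angular distance from $z$ to $\hat G_x$. Taking its minimum over the compact set $\Gamma\times\hat\Omega^\ast$ gives $\eta>0$ with $d_\angle(\hat G_x,\hat\Omega^\ast)\ge 2\eta$ for all $x\in\Gamma$. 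If $\hat\Omega^\ast=\varnothing$, nothing needs to be done here.

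\textbf{Step 2: from separation to a margin bound.} We need a lower bound on $\delta_n(x)$ that holds for all $x\in\Gamma$ and large $n$, not merely the emptiness of $\hat G_x\cap\hat\Omega^{\alpha_n}$.
\begin{itemize}
\item Since $P_k^{\alpha_n}\to P_k^\ast$, the forms $f_k^{\alpha_n}$ converge uniformly to $f_k^\ast$ on $\bs^d$.
\item Consider the compact set $\hat S_{k,\eta}=\{z\in\hat O_k: d_\angle(z,\hat\Omega^\ast)\ge\eta\}$. On it $f_k^\ast>0$, since every point of $\hat O_k$ where $f_k^\ast\le0$ lies in $\hat\Omega^\ast$. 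Hence $f_k^\ast\ge c>0$ on $\hat S_{k,\eta}$ for all $k$.
\item By uniform convergence, $f_k^{\alpha_n}\ge c/2$ on $\hat S_{k,\eta}$ for $n$ large.
\item By Step 1, $\hat G_x\cap\hat O_k\subset\hat S_{k,\eta}$ for every $x\in\Gamma$.
\end{itemize}
Together these give $\delta_n(x)\ge\min\{c/2,1/2\}$ uniformly on $\Gamma$. This is essentially the quantitative content of Proposition~\ref{prop:shrink}.

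\textbf{Step 3: vanishing Hessian and affine limit.} Plug the margin into \eqref{eq:hessian-bound}. The factor $\sqrt{1+\|x\|^2}$ is bounded on $\Gamma$, $\lambda_n\to\bar\lambda>0$ keeps the $\lambda$-factors bounded, and $m_n,\beta_n\to\infty$. Hence $\sup_\Gamma\|\nabla^2h^\ast_{\alpha_n}\|\to0$.

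Now take a connected component $C$ of $\br^d\setminus\cup_k\pi_k$. It is an open convex polyhedron, being an intersection of open half-spaces. Fix a base point $x_0\in C$ and a ball $B\subset C$. By Taylor's theorem along segments inside a compact convex subset of $C$,
\[
h^\ast_{\alpha_n}(x)=h^\ast_{\alpha_n}(x_0)+\nabla h^\ast_{\alpha_n}(x_0)^\top(x-x_0)+o(1).
\]
The gradients $\nabla h^\ast_{\alpha_n}(x_0)$ are bounded, either through the Lipschitz bound below or by reading them off from the values on $B$. Extracting a convergent subsequence of the gradients and using the pointwise convergence of $h^\ast_{\alpha_n}$ shows that $h^\ast$ is affine on every compact convex subset containing $x_0$, hence on all of $C$.

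\textbf{Step 4: Lipschitz bound.} Bound $|\nabla h^\ast_{\alpha}(x)|$ directly. Differentiating $h^\ast_\alpha(x)=\int a^m(x^\top w^m+b)^m_+\,d\rho$ gives at most $\int|a|\,\|w\|\,d\rho^\ast_\alpha\le\frac12 M(\rho^\ast_\alpha)$. Lemma~\ref{lem:Mbound} bounds the second moment by a quantity of the form $2(d+2)+\frac{2}{\lambda M}\sum_j y_j^2$, up to the paper's constants. Thus each $h^\ast_{\alpha_n}$ is Lipschitz with that constant, and the bound passes to the pointwise limit after letting $\lambda_n\to\bar\lambda$.

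\textbf{Main obstacle.} The delicate step is Step 2, the uniform-in-$x$ margin. It needs $\hat\Omega^\ast$ to capture all of the nonpositivity of $f_k^\ast$ on $\hat O_k$, including at boundary points of the sectors. The plan is to handle this with the closed sectors $\hat O_k$ together with uniform convergence, as above.
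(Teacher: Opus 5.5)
Your proposal is correct and follows essentially the same route as the paper. It obtains a uniform positive lower bound on $f_k^{\alpha_n}$ over $\hat G_x\cap\hat O_k$ for $x\in\Gamma$ from compactness, the definition of $\hat\Omega^\ast$, and the uniform convergence $f_k^{\alpha_n}\to f_k^\ast$; it then applies Proposition~\ref{prop:qpc}, uses a Taylor expansion on each convex component, and bounds the Lipschitz constant via the second-moment estimate of Lemma~\ref{lem:Mbound}. The differences are cosmetic and neither affects validity: you get positivity of $f_k^\ast$ by continuity on a compact set rather than the paper's explicit $\sin^2\delta_k$ estimate, and you extract a convergent subsequence of the uniformly bounded gradients (pinned down by the ball $B$) rather than proving convergence through $d$ basis directions.
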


The shrinkage of the cluster set also allows us to analyze the support of the stationary measure $\rho^\ast_{\alpha}$. 
Specifically, for any set $B\subset \br^{d+1}$ such that the projection of $\cl(B)$ onto the sphere is disjoint from $\hat \Omega^\ast$, 
we show that
$$
\lim_{n\to+\infty}\mathbb{E}_{\theta\sim\rho^\ast_{\alpha_n}}\left[\|(w,b)\|^2 \cdot \mathbb{I}_{(w,b)\in B}\right]=0. 
$$
That is, the $\|(w,b)\|^2$-weighted mass of $\rho^\ast_{\alpha_n}$ over $\br\times B$ vanishes as $n\to+\infty$. 
This directly gives the desired results in Theorem~\ref{thm:feature}. 
The formal proof is given in Appendix~\ref{app:feature}.

\subsection{Position of Limiting Cluster Set Relative to Sphere Stratification} 
\label{sec:location}

In view of the previous subsection, the limiting set $\hat \Omega^\ast$ determines 
the aligned directions in parameter space and the kink hyperplanes in input space. 
We now study the location of the points in $\hat \Omega^\ast=\set{(w_k,b_k)}$ relative to a stratification of the sphere $\bs^d$ induced by ternary activation patterns on the training inputs. 

Recall from Section~\ref{sec:nonredun} the definition of linear ternary activation patterns on the training set. 
Each ternary activation pattern $A\colon \cx\to\set{-1,0,1}$ defines a stratum on the sphere $\bs^d$: 
$$
V_A = \set{(w,b)\in \bs^d \colon \sgn(w^\top x_j +b)=A(x_j),\ \forall j }. 
$$
Let $\mathcal{V}=\{V_A\}_A$, where $A$ ranges over all linear ternary activation patterns realizable on $\cx$. 
The following basic result shows that $\mathcal{V}$ forms a stratification of the sphere $\bs^d$ and the strata have simple geometry. 
Its proof is given in Appendix~\ref{app:nonredun}. 

\begin{lemma}\label{lem:strat} 
    The strata form a disjoint decomposition of the sphere: 
    $\bs^d = \sqcup_{V\in \mathcal{V}} V$. 
    Moreover, every stratum $V$ other than $V_0=\{(w,b)\in\bs^d\colon \sgn(w^\top x_j +b)=0\,\forall j\}$ is simply connected and contained in an open hemisphere of $\bs^{d}$. 
\end{lemma}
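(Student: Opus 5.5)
Disjointness is immediate: every $(w,b)\in\bs^d$ determines the sign vector $(\sgn(w^\top x_j+b))_j$, which by definition is a realizable ternary activation pattern $A$. Hence $(w,b)$ lies in exactly one $V_A$, and $\bs^d=\sqcup_{V\in\mathcal V}V$. The substance of the lemma is therefore the geometry of a single stratum.

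Fix a realizable pattern $A\neq 0$ and set
\[
C_A=\set{z\in\br^{d+1}\colon \sgn(z^\top \tilde x_j)=A(x_j)\ \forall j},\qquad \tilde x_j=(x_j,1).
\]
Then $C_A$ is a relatively open convex cone: it is the intersection of the linear subspace $\set{z\colon z^\top\tilde x_j=0 \text{ for } A(x_j)=0}$ with the open half-spaces $\set{z\colon A(x_j)\,z^\top\tilde x_j>0}$ for $A(x_j)\neq 0$. It does not contain the origin, since $A\neq0$ forces at least one strict inequality. We have $V_A=C_A\cap\bs^d$, and $V_A$ is the image of $C_A$ under the radial projection $z\mapsto z/\|z\|$.

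\emph{Hemisphere.} Pick an index $j_0$ with $A(x_{j_0})\neq0$ and put $u=A(x_{j_0})\tilde x_{j_0}/\|\tilde x_{j_0}\|$. Every $z\in V_A$ satisfies $z^\top u>0$, so $V_A$ lies in the open hemisphere $\set{z\in\bs^d\colon z^\top u>0}$.

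\emph{Simple connectivity.} $C_A$ is convex, hence contractible, and nonempty because $A$ is realizable. The radial projection restricted to $C_A$ is a continuous surjection onto $V_A$. For a homotopy equivalence, use the straight-line deformation: for $z_0,z_1\in C_A$ the segment $(1-t)z_0+tz_1$ stays in $C_A$ and avoids $0$, so its normalization is a path in $V_A$. Fixing a base point $z_0\in V_A$, the map
\[
H(z,t)=\frac{(1-t)z+tz_0}{\|(1-t)z+tz_0\|}
\]
is a continuous contraction of $V_A$ to $z_0$ inside $V_A$. This uses convexity of $C_A$ and $0\notin C_A$, which guarantees the denominator is nonzero. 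Thus $V_A$ is contractible, in particular path-connected and simply connected.

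The only point requiring care is the well-definedness of this normalized convex combination, i.e.\ that the cone misses the origin. That is exactly where $A\neq0$ enters, and it explains why $V_0$ is excluded. $V_0=\bs^d\cap\set{z\colon z^\top\tilde x_j=0\ \forall j}$ is a great subsphere: possibly empty, or $\bs^0$ (two points), or a circle, hence not simply connected in general. The remaining arguments are routine.
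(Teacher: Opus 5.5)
Your proof is correct. Disjointness and the hemisphere bound, with pole $u=A(x_{j_0})(x_{j_0},1)/\|(x_{j_0},1)\|$, match the paper's argument. The two proofs differ in how simple connectivity passes from the convex cone $C_A$ to $V_A$.

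The paper picks $z\in V$ and applies the gnomonic projection onto the affine tangent plane $\{y\colon y^\top z=1\}$. This identifies $V$ with the convex set $\{y\colon y^\top z=1\}\cap C_A$. You instead contract $V_A$ along normalized convex combinations, which uses only that $C_A$ is a convex cone not containing $0$.

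Your route does not depend on choosing a center. The gnomonic projection at $z$ is defined only on the open hemisphere $\{v\colon v^\top z>0\}$, and for an arbitrary $z\in V$ this hemisphere need not contain $V$. For example, take $d=1$, $M=1$, $V$ an open half-circle, and $z$ near one of its endpoints. So the paper's argument should really center the projection at a pole such as your $u$. In return, the correctly centered gnomonic map gives a homeomorphism of $V_A$ onto a relatively open convex set, that is, an open cell in its great subsphere. That is slightly more than the contractibility you obtain.

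A minor point about your closing remark: your list of possible shapes for $V_0$ is incomplete. $V_0$ can also be a great subsphere of dimension at least $2$, for instance $\bs^{d-1}$ when $M=1$ and $d\geq 3$, which is simply connected. What always fails for a nonempty $V_0$ is the hemisphere property, since $V_0=-V_0$. This remark plays no role in the proof.
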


Notably, all the notions related to ternary activation patterns introduced in Section~\ref{sec:nonredun} can be translated into geometric language. 
To know the activation pattern induced by the hyperplane 
$\pi_k=\{x\colon w_k^\top x+b_k=0\}$, it suffices to know which stratum of $\mathcal{V}$ contains the point $(w_k,b_k)$. 
A ternary activation pattern $A$ is a binary sign pattern if and only if $V_A$ is a $d$-dimensional stratum, i.e., has the same dimension as the sphere $\mathbb{S}^d$, 
and is a strict ternary sign pattern if and only if $V_{A}$ has positive codimension on $\bs^d$. 
For the partial order on ternary sign patterns defined in~\eqref{eq:partial-order}, 
it is straightforward to verify that
\begin{equation*}
    A' \preceq A \Longleftrightarrow  \cl(V_{A'}) \supset V_A. 
\end{equation*}
In addition, for a given binary sign pattern $\hat A$, 
the set $\mathcal{D}(\hat A)$ defined in~\eqref{eq:da} can be rewritten as
$$
\mathcal{D}(\hat A) = \set{A_k \colon (w_k,b_k)\in \cl(V_{\hat A}),k=1,\ldots, K}. 
$$

Exploiting the geometry of the limiting set $\hat \Omega^\ast$, together with the structure of the strata in Lemma~\ref{lem:strat}, we obtain the following result, 
which provides a geometric proof of Theorem~\ref{thm:nonredun}. 
Its proof is provided in Appendix~\ref{app:nonredun}. 
We write $\partial_{\cl(S)}S$ for the relative boundary of $S$ in its closure for any $S\subset \bs^d$.

\begin{proposition}[Geometry of Limiting Cluster Set]\label{prop:strat}
    Consider the stratification $\mathcal{V}$ of the sphere induced by the training inputs $\cx$.  Let $\hat \Omega^\ast$ be defined as in Theorem~\ref{thm:pwl-detail}. 
    The following holds:
    \begin{itemize}
        \item[(i)] 
        Every stratum $V\in \mathcal{V}$ contains at most one point in $\hat \Omega^\ast$. 
        Moreover, $\hat \Omega^\ast \cap V \neq \varnothing$ implies that $\hat \Omega^\ast \cap \partial_{\cl(V)} V=\varnothing$. 

        \item[(ii)]
        $\hat \Omega^\ast \cap \cl(V_{-1})=\varnothing$, where $V_{-1}=\{(w,b)\in\bs^d\colon \sgn(w^\top x_j +b)=-1\,\forall j\}$. 

        \item[(iii)] 
        For an arbitrary binary sign pattern $\hat A$ on $\cx$, 
        $\cl(V_{\hat A})\cap \hat \Omega^\ast$ contains at most two points. 
        If $\cl(V_{\hat A}) \cap \hat \Omega^\ast$ contains two points, then both points must lie in the boundary $\partial_{\cl(V_{\hat A})}V_{\hat A}$. 
    \end{itemize}
\end{proposition}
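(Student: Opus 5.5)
The plan is to reduce everything to one continuous function on the sphere. The key observation is that on each closed sector $O_k$ we have $x_j^\top w+b\geq 0$ for $j\in J_k$ and $\le 0$ otherwise. Hence
$$(P_k^\ast)^\top z = g(z):=\frac{1}{\bar\lambda}\sum_{j=1}^M R_j^\ast\,(x_j^\top w+b)_+ \quad\text{for } z=(w,b)\in O_k,$$
where $R_j^\ast=\lim_n R_j^{\alpha_n}$. This expression does not depend on $k$, so by \eqref{eq:omegaast}
$$\hat\Omega^\ast=\set{z\in\bs^d\colon |g(z)|\geq 1}.$$
The function $g$ is continuous on $\bs^d$.

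By Proposition~\ref{prop:shrink}, $\hat\Omega^\ast$ is finite. If $|g(z_0)|>1$ at some point, continuity would give a whole open neighbourhood inside $\hat\Omega^\ast$, which is impossible. So $|g|\le 1$ on $\bs^d$, and $\hat\Omega^\ast=\set{z\colon g(z)=\pm1}$ is exactly the set of global maximizers and minimizers of $g$ at which the extremal value is $\pm1$.

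Next I use that $g$ is linear on the closure of each stratum. For a ternary pattern $A$, set $L_A=\set{z\colon x_j^\top w+b=0 \text{ for all } j \text{ with } A(x_j)=0}$. Then $\cl(V_A)\subset L_A\cap\bs^d$, and $V_A$ is relatively open in the sphere $L_A\cap\bs^d$. On $\cl(V_A)$ the function $g$ coincides with $c_A^\top z$, where $c_A=\bar\lambda^{-1}\sum_{A(x_j)=1}R_j^\ast (x_j,1)$; a degenerating index contributes zero on the closure.

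For (i), let $z_0\in V_A$ satisfy $g(z_0)=1$. Then $z_0$ is a local maximizer of the linear functional $c_A^\top z$ on the sphere $L_A\cap\bs^d$. The only critical points there are $\pm u$, where $u$ is the normalized projection of $c_A$ onto $L_A$. So $z_0=u$, and every other point of that sphere except $-u$ has $|c_A^\top z|<1$; the case $g(z_0)=-1$ is symmetric. Hence $\hat\Omega^\ast\cap\cl(V_A)\subset\set{u,-u}$. By Lemma~\ref{lem:strat}, $V_A$ (for $A\ne 0$) lies in an open hemisphere $\set{h^\top z>0}$. Its closure then lies in the closed hemisphere, and since $h^\top u>0$ we get $h^\top(-u)<0$, so $-u\notin\cl(V_A)$. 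This gives both uniqueness in $V_A$ and $\hat\Omega^\ast\cap\partial_{\cl(V_A)}V_A=\varnothing$. The stratum $V_0$ is handled directly, since $g\equiv0$ there.

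For (ii), on $\cl(V_{-1})$ all $x_j^\top w+b\le 0$, so $g\equiv 0$ there and no point reaches $\pm1$.

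For (iii), take $\hat A$ binary, so $V_{\hat A}$ is $d$-dimensional and $g=c^\top z$ on $\cl(V_{\hat A})$. If $|c|<1$ there are no contact points. If $|c|\ge1$, the set $\set{c^\top z\ge 1}$ is a spherical cap of radius $<\pi/2$, or a single point when $|c|=1$, and its boundary small circle is strictly convex. Because $\cl(V_{\hat A})$ is geodesically convex, two distinct points of $\cl(V_{\hat A})$ with $c^\top z=1$ would be joined by a geodesic inside $\cl(V_{\hat A})$ entering the open cap, where $g>1$, contradicting $|g|\le1$. So at most one point attains $+1$, and likewise at most one attains $-1$, for a total of at most two. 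If there are two, neither lies in $V_{\hat A}$ itself: by (i), a point of $\hat\Omega^\ast$ inside $V_{\hat A}$ excludes all other points of $\hat\Omega^\ast$ from $\cl(V_{\hat A})$. Hence both lie in $\partial_{\cl(V_{\hat A})}V_{\hat A}$.

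The main obstacle is the geodesic convexity of $\cl(V_{\hat A})$ used in (iii). Since $\cl(V_{\hat A})$ is a polyhedral cone intersected with the sphere, it is geodesically convex as long as it is contained in a closed hemisphere without antipodal pairs. I expect this to follow from the open-hemisphere statement of Lemma~\ref{lem:strat} applied to the open cone. If that fails, I would instead argue directly in $\br^{d+1}$ with the convex polyhedral cone $\cl(U_{\hat A})$. Two contact points $z_1,z_2$ with $c^\top z_i=1$ give $c^\top\big((z_1+z_2)/\|z_1+z_2\|\big)=2/\|z_1+z_2\|>1$ whenever $z_1\neq z_2$ and $z_1\ne -z_2$. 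The normalized midpoint lies in the cone, so this contradicts $|g|\le 1$, and the case $z_1=-z_2$ is excluded because then $c^\top z_2=-1$.
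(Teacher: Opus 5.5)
Your proof is correct, and its geometric core is the paper's. In (i), the paper also passes to the sub-sphere $S_A=L_A\cap\bs^d$. It notes that a point of $\hat\Omega^\ast$ in the relatively open set $V_A$ is a local, hence global, maximizer of the projected linear functional $z\mapsto z^\top\bar P_k^\ast$, identifies that point with the normalized projection, and excludes the antipode by the hemisphere reflection. The ``at most two'' part of (iii) is the normalized-midpoint argument from Step~3 of the proof of Proposition~\ref{prop:shrink}.

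What differs is the bookkeeping. The paper works sector by sector:
\begin{itemize}
\item it chooses $k$ with $V\subset\hat O_k$ and uses \eqref{eq:adjacent} to replace $\hat\Omega^\ast\cap\cl(V)$ by $\hat\Omega_k^\ast\cap\cl(V)$;
\item it imports from Step~3 that $|z^\top P_k^\ast|\le 1$ on $\hat O_k$, with equality only at $z_\pm$;
\item it handles (ii) through $J_k=\varnothing$;
\item it gets the boundary statement in (iii) from the component analysis in Step~1.
\end{itemize}
Your single continuous function $g(z)=\bar\lambda^{-1}\sum_j R_j^\ast(x_j^\top w+b)_+$ does the same work globally. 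It makes \eqref{eq:adjacent} automatic. It turns the finiteness of $\hat\Omega^\ast$ in one line into $|g|\le 1$ on all of $\bs^d$, with $\hat\Omega^\ast=\set{g=\pm1}$. It reduces (ii) and the $V_0$ case to $g\equiv 0$. It derives the boundary part of (iii) from (i).

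Your version is self-contained and exhibits $\hat\Omega^\ast$ as the extremal set of one piecewise-linear function on the sphere. The paper's version stays inside the quadratic forms $f_k^\ast$ that drive its Hessian and support estimates. When $\dim L_A=1$ the critical-point wording in your (i) is vacuous, but $V_A$ is then a single point and (i) is trivial, as in the paper.

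On your stated obstacle: $\cl(V_{\hat A})$ can contain antipodal pairs. Whenever the lifted inputs $(x_j,1)$ fail to span $\br^{d+1}$ (for instance when $M\le d$), the subspace $\set{z\colon (x_j,1)^\top z=0\ \forall j}$ lies in $\cl(U_{\hat A})$. So the hoped-for ``closed hemisphere without antipodal pairs'' can fail, and the geodesic-convexity route is not reliable as phrased.

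Your fallback is the right argument and needs nothing further. Two points with $c^\top z=1$ are never antipodal. Their normalized midpoint stays in the convex cone $\cl(U_{\hat A})$, where $g=c^\top z$, and it gives $g>1$, a contradiction. Use that argument.
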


Finally, we briefly outline the proof of Proposition~\ref{prop:toy}. 
As indicated by Theorem~\ref{thm:pwl-detail}, 
the vectors
$$
P^\ast_k = \frac{1}{\bar \lambda M} \sum_{j\in J_k} \left(h^\ast(x_j)-y_j\right)\begin{pmatrix}
    x_j\\1
\end{pmatrix},\quad k=1,\ldots,2\cp(\cx), 
$$ 
characterize the set $\hat \Omega^\ast$ and hence the locations of the kink hyperplanes $\{\pi_k\}_{k=1}^K$.
For the specific data set
$\cx=\{(x_1,y_1),(x_2=-x_1,y_2=y_1)\}$, 
we have 
(i) $h^\ast$ is an even function and $h^\ast(x_1)-y_1=h^\ast(x_2)-y_2$;
and
(ii) there are only four possible binary sign patterns: $J_1=\set{1,2},J_2=\set{1},J_3=\set{2},J_4=\varnothing$. 
These observations allow us to explicitly compute $P^\ast_k$ for $k=1,\ldots,4$, thereby identifying the locations of the hyperplanes. 
The case of odd labels follows a similar strategy. 
The formal proof is provided in Appendix~\ref{app:toy}. 
For general training data, determining $P^\ast_k$ remains challenging, 
in particular if one relies solely on the Boltzmann fixed point condition. 
We expect that additional information about the free-energy minimizer is needed.

\section{Experiments}
\label{sec:experiments}

In Figure~\ref{fig:intro} and Figure~\ref{fig:nonredun}, we consider a two-layer ReLU network of width $N=1000$ and initialize the neuron parameters $(a,w,b)$ as i.i.d.\ samples from the Gaussian $\mathcal{N}(0, \nu^2 I_{d+2})$ with $\nu=10^{-3}$. 
On a training data set of size $M=9$ with two-dimensional inputs, 
we run the SGD algorithm~\eqref{eq:SGD} with step size $s_{N,k} \equiv 10^{-3}$, 
weight decay regularization strength $\lambda=10^{-2}$ and no Gaussian noise, $\beta^{-1}=0$, for $2\times 10^6$ iterations. 
The learned features are identified as follows: 
(i) among all trained input weights and biases ${(w^i,b^i)}_{i=1}^N$, keep only those with norm larger than $10^{-8}$ and normalize them onto the unit sphere; 
and 
(ii) apply DBSCAN \citep{ester1996density}, a standard clustering method, to the normalized points under the spherical distance: 
$d_\angle(z,z')=\arccos(z^\top z')$ for $z,z'\in \bs^d$. 
In our experiment, the resulting clusters had maximal within-cluster spherical distance $7.9\times 10^{-4}$, approximately $0.045^\circ$. 
These results are consistent with Theorem~\ref{thm:feature} and Theorem~\ref{thm:nonredun}.

In Figure~\ref{fig:1D}, we report the result of training a two-layer ReLU network on a univariate data set of size $M=2$ via SGD for $10^6$ iterations. 
The network width, initialization, step size, regularization, and noise level are the same as in the previous experiment. 
The result is consistent with Proposition~\ref{prop:toy}.

Figure~\ref{fig:wd} and Figure~\ref{fig:noise}
illustrate the roles of weight decay and Gaussian noise in SGD, respectively. 
In these figures, 
a two-layer ReLU network is trained on a two-dimensional data set via SGD for $10^6$ iterations. 
The network width, initialization, and baseline SGD parameters are the same as before. 
The results are consistent with Theorem~\ref{thm:pwl}.

\begin{figure}[t]
    \centering
    \includegraphics[width=1\linewidth]{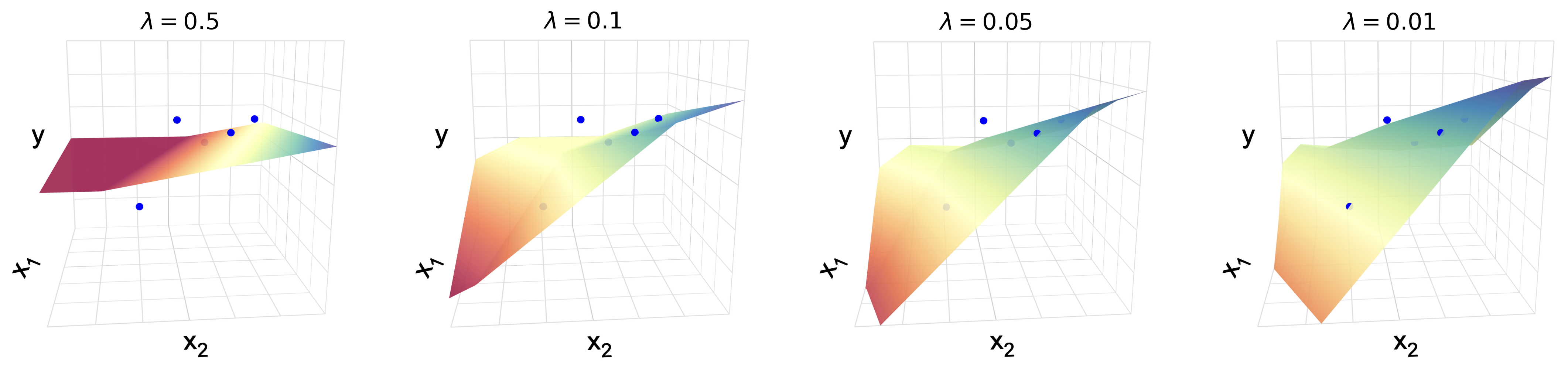}
    \caption{
    A two-layer ReLU network is trained via SGD without noise on two-dimensional input data, under various levels of weight decay $\lambda$. The figure shows that the trained function is piecewise affine for all values of $\lambda$, whereby larger $\lambda$ promotes a smaller Lipschitz constant. 
    }
    \label{fig:wd}
\end{figure}

\begin{figure}[t]
    \centering
    \includegraphics[width=1\linewidth]{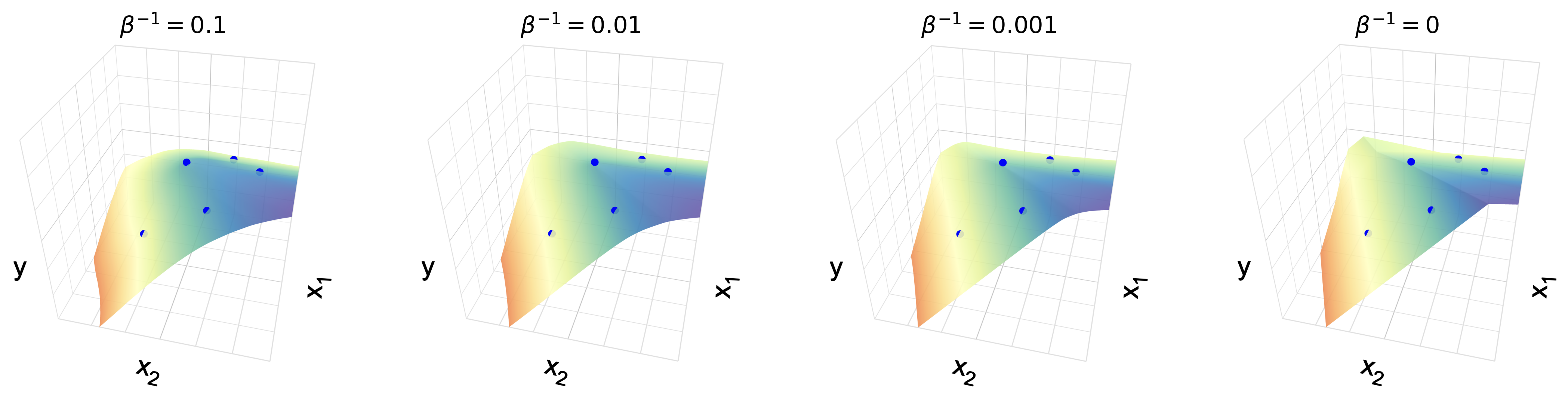}
    \caption{
    A two-layer ReLU network is trained via noisy SGD with weight decay on two-dimensional input data, under various noise levels $\beta^{-1}$. 
    The figure shows that, as the noise level tends to zero, the trained function converges to a piecewise affine function, while for larger noise levels, the function is increasingly smooth. 
    }
    \label{fig:noise}
\end{figure}

\section{Conclusion}

We presented an analysis of SGD training of wide shallow ReLU networks.
We showed in the mean-field limit that the learned function is piecewise linear. 
At convergence, the input weights and biases of the network align along a finite number of directions depending on the training data, so the network learns a finite collection of features and its effective width collapses. 
These directions satisfy a non-redundancy property: distinct directions induce distinct ternary linear sign patterns on the training inputs.

These results contribute to the theoretical understanding of the implicit bias and feature-learning mechanisms of gradient-based training in nonlinear neural networks. 
While it is widely believed that gradient-based optimization can discover useful feature representations, a rigorous mathematical understanding of this phenomenon has remained limited. 
Previous theoretical results establishing the emergence of sparse or structured features have either relied on specialized assumptions on the training data or initialization, or have characterized optimal solutions of associated variational problems. In contrast, our analysis shows that an effectively finite width representation emerges directly from the training dynamics in a broad mean-field setting, and relates its structure to the combinatorial geometry of the training data.

\paragraph{Limitations} 
Our analysis is formulated within the mean-field framework of \citet{mei2018mean} and therefore considers two-layer networks in the infinite-width limit trained by noisy SGD with weight decay. 
Rather than analyzing ReLU neurons directly, we study bounded, smooth approximations and subsequently pass to the ReLU limit. 
This approximation is motivated by the current state of the theory. 
To our knowledge, a direct initialization-to-convergence analysis for mean-field training of ReLU networks has not yet been established.

\paragraph{Future work}
Several interesting directions remain open. A natural next step is to obtain finer-grained descriptions of the learned features, including characterizations of the precise locations of the kink hyperplanes beyond the special classes of training data considered in  Proposition~\ref{prop:toy}. 
Another important question is whether the bound on the number of learned features is tight. 
Our numerical experiments, together with those of \citet{shevchenko2022mean}, suggest that this bound may admit further improvement, possibly depending on the specific structure of the data. 
It would also be of interest to extend the analysis to other optimization procedures (e.g., gradient descent, mini-batch SGD, and preconditioned SGD), classification losses, different activation functions, and more complex architectures such as deep networks or attention-based models.

\section*{Reproducibility Statement}
Code to reproduce our experiments is available at \url{https://github.com/shuangliang15/effective-width-collapse}.

% Acknowledgements and Disclosure of Funding should go at the end, before appendices and references

\acks{This project has been supported in part by NSF grants DMS-2145630 and CCF-2212520. TJ was supported by the European Research Council (ERC) under the Horizon Europe Framework Programme (HORIZON) through grant agreement number 101116395 (SPARSE-ML). 
GM was supported in part by DARPA grant  HR00112520014 through the Artificial Intelligence Quantified (AIQ) program, NSF grant DMS-2522495, DFG project 464109215 within the Priority Programme SPP 2298 ``Theoretical Foundations of Deep Learning'', and the BMFTR through DAAD project 57616814 (SECAI).}

%All acknowledgements go at the end of the paper before appendices and references. Moreover, you are required to declare funding (financial activities supporting the submitted work) and competing interests (related financial activities outside the submitted work). More information about this disclosure can be found on the JMLR website.

% Manual newpage inserted to improve layout of sample file - not
% needed in general before appendices/bibliography.

\newpage

\appendix

\section*{Appendix}

The appendix is organized into the following sections.

\begin{itemize}
    \item Appendix~\ref{app:relu}: Approximation of the ReLU Neuron
    
    \item Appendix~\ref{app:pwl}: Proof of Theorem~\ref{thm:pwl}

    \item Appendix~\ref{app:feature}: Proof of Theorem~\ref{thm:feature}
    
    \item Appendix~\ref{app:nonredun}: Proof of Theorem~\ref{thm:nonredun}
    
    \item Appendix~\ref{app:toy}: Proof of Proposition~\ref{prop:toy}

    \item Appendix~\ref{app:bound}: Comparison of the Bounds on Effective Width

    \item Appendix~\ref{app:accumu}: Existence of Accumulation Points

    \item Appendix~\ref{app:kink}: Non-redundancy of Kink Hyperplane Arrangement
\end{itemize}

\section{Approximation of the ReLU Neuron} 
\label{app:relu}

In this section, we provide a detailed construction of the $m$-truncated, $\tau$-smoothed ReLU neuron $\sigma^{\mt}$.

\subsection{Truncated and Smoothed ReLU Neuron}

For approximating the ReLU activation function, let 
$$
(u)_+^{\mt} = \begin{cases}
    S_\tau(u)=\log(1+e^{\tau u})/\tau,& u < S_\tau^{-1}(m^2)\\
    \phi_{\mt}(u), & u\ge S_\tau^{-1}(m^2),
\end{cases}\quad u\in \br. 
$$
The tail is chosen so that $(\cdot)_+^{m,\tau}\in C^4(\mathbb R)$ and satisfies
$$
\|\phi_{m,\tau}\|_\infty\le 2m^2,\qquad
\|\phi_{m,\tau}'\|_\infty \le 1,\qquad
\|\phi_{m,\tau}''\|_\infty \le m^{-2},
$$
with bounded third and fourth derivatives. 
Moreover, as $\tau\to\infty$, $(\cdot)_+^{m,\tau}$ converges to a truncated ReLU $(\cdot)_+^m$, defined as follows:  
$$
(u)_+^m=\begin{cases}
    (u)_+,& u< m^2\\ \phi_m(u),& u\geq m^2,
\end{cases}\quad u\in \br,
$$ 
which satisfies that $(\cdot )_+^m\in C^4((0,+\infty))$ and $\phi_{\mt}''$ converges to $\phi_m''$ uniformly on any compact subset of $(m^2,+\infty)$. 
An explicit construction of $\{(\cdot)_+^{\mt}\}$ is given in the next subsection.

For approximating the identity map, let 
$$
v^{\mt} = \begin{cases}
    v,& \|v\|< m-\tau^{-1}\\
    \psi_{\mt}(v), &\|v\|\ge m-\tau^{-1}, 
\end{cases}\quad v\in \br^n. 
$$
The tail function $\psi_{\mt}$ is selected such that $(\cdot)^{\mt}\in C^4(\br^n,\br^n)$, is an odd function, and satisfies that $\|\psi_{\mt}(u)\|\leq m, \|\nabla \psi_{\mt}(u)\|\leq 1$, 
and $\|\nabla^{(k)} \psi_{\mt}\|$ are bounded for $k=2,3,4$.
Additionally, we require that as $\tau\to+\infty$, 
$(\cdot)^{\mt}$ converges to the truncated identity map $(\cdot)^m$, defined as follows
$$
v^m=\begin{cases}
    v,&\|v\|<m\\ m\cdot v/\|v\|, &\|v\|\geq m. 
\end{cases}\quad v\in \br^n. 
$$ 
The existence of the family $\set{(\cdot)^{\mt}}$ is clear. 
The $m$-truncated, $\tau$-smoothed ReLU neuron is defined as:
$$
\sigma^{\mt}(x,\theta)=a^{\mt}(x^\top w^{\mt}+b)_+^{\mt}, \qquad x\in \br^{d},\theta=(a,w,b)\in \br^{d+2}. 
$$

\subsection{
  \texorpdfstring{An Explicit Construction of the Family $\set{(u)_+^{\mt}}$}{An Explicit Construction}
}

We construct a family $\{(u)_+^{m,\tau}\}$ that approximates the univariate ReLU function and satisfies the requirements in previous subsection. 

Select a univariate function
$g\colon [1,+\infty)\to \br$ that satisfies
\begin{itemize}
    \item $g(u)\in[1,2],\ \forall u\geq 1$,
    \item  there exists $\delta>0$ such that $g(u)=u$ when $u\in [1,1+\delta]$, 
    \item $g\in C^4$; for all $u\geq 1$, 
    $g'(u)\in [0,1], g''(u)\in[-1,0], |g^{(3)}(u)|\leq C, |g^{(4)}(u)|\leq C$. 
\end{itemize}
Such a function $g$ implements a truncation. 
The existence of such a function $g$ is clear. 
Let 
$$
S_\tau (u)=\frac{\log(1+e^{\tau u})}{\tau}. 
$$
This is the $\tau$-scaled softplus function, which is a smooth approximation of $(u)_+=\max\{0,u\}$.
Define the tail function
$$
\phi_{\mt}(u)= m^2 g\Big( \frac{S_\tau(u)}{m^2} \Big).
$$
Let
$$
(u)_+^{\mt}=\mathbb{I}_{u< c_{\mt}} S_\tau(u) + \mathbb{I}_{u \ge c_{\mt}} \phi_{\mt}(u). 
$$
Here $c_{m,\tau}=S_\tau^{-1}(m^2)$.

As $S_\tau(c_{\mt})=m^2$ and $S_\tau$ is monotonically increasing, 
there exists $\delta'$ such that
$$
\frac{S_\tau(u)}{m^2}\in[1,1+\delta],\quad \forall u\in[c_{\mt},c_{\mt}+\delta']. 
$$
Then,
$$
\phi_{\mt}(u)=S_\tau(u),\quad \forall u\in[c_{\mt},c_{\mt}+\delta'].
$$
Therefore, $(u)_+^{\mt}$ is $C^4$. 
Since $1\leq  g\le 2$, we have that $|\phi_{\mt}(u)|\leq 2m^2$. 
For the first derivative,
$$
\phi'_{\mt}(u)=g'\Big( \frac{S_\tau(u)}{m^2} \Big) S'_\tau(u).
$$
Since $0\le g'\le 1$ and $0\le S_\tau' \le 1$, we have that $|\phi'_{\mt}(u)|\le 1$. 
For the second derivative, 
$$
\phi''_{\mt}(u)=S''_\tau(u)g'\Big( \frac{S_\tau(u)}{m^2} \Big)+ \frac{1}{m^2}g''\Big( \frac{S_\tau(u)}{m^2} \Big)(S'_\tau(u))^2. 
$$
Since $S''_\tau < 1/S_\tau$, we have that
$$
S''_\tau(u) \leq \frac{1}{m^2}, \quad \forall u\geq c_{\mt}. 
$$
Since $-1\le g'' \le 0$, we have that
$$
\phi''_{\mt}(u) \le S''_\tau(u)g'\Big( \frac{S_\tau(u)}{m^2} \Big)\leq \frac{1}{m^2}. 
$$
Also, as $S''_\tau \geq 0$, 
$$
\phi''_{\mt}(u) \ge -\frac{1}{m^2}(S'_\tau(u))\geq -\frac{1}{m^2}.
$$
Since $g^{(3)},g^{(4)},S_\tau^{(3)},S_\tau^{(4)}$ are bounded, $\phi_{\mt}^{(3)},\phi_{\mt}^{(4)}$ are bounded.

Now define
$$
\phi_m (u) = m^2 g(\frac{u}{m^2}), \quad u\geq m^2,
$$
and let 
$$
(u)_+^m= \mathbb{I}_{u<m^2}\cdot (u)_+ +\mathbb{I}_{u\geq m^2} \phi_m(u). 
$$
Since $S_\tau(u)\to (u)_+$ pointwise and $c_{\mt}\to m^2$ as $\tau\to+\infty$, 
we have $(u)_+^{\mt}\to (u)_+^m$ as $\tau\to+\infty$.  
By similar arguments as above, $(u)_+^m\in C^4((0,+\infty))$. 
Consider any compact set $K\subset (m^2,+\infty)$. 
For all large $\tau$, $K \subset (c_{\mt},+\infty)$. 
Note that 
$$
S_\tau(u)\to u,\quad S'_\tau(u)\to 1,\quad S''_\tau(u)\to 0
$$
uniformly on $K$ as $\tau\to+\infty$. 
Hence, 
$$
\phi''_{\mt}(u) \to \frac{1}{m^2}g''(\frac{u}{m^2})=\phi_m''(u)
$$
uniformly on $K$ as $\tau\to+\infty$. 
By above, the family $\set{(u)^{\mt}_+}_{\mt}$ satisfies all the previous requirements.

\section{Proof of Theorem~\ref{thm:pwl}}
\label{app:pwl}

In this section, we present the proof of Theorem~\ref{thm:pwl}. 
In Appendix~\ref{app:tau}, Proposition~\ref{prop:tau-limit} controls the Hessian of the solution function in the zero-smoothing limit; 
In Appendix~\ref{app:qpc}, Proposition~\ref{prop:qpc} further bounds the limiting Hessian under the Quadratic Positivity Condition;
In Appendix~\ref{app:shrink}, Proposition~\ref{prop:shrink} shows the shrinkage of the cluster set, which in turn determines where the Quadratic Positivity Condition holds. 
Combining these ingredients, in Appendix~\ref{app:pwl2}, we prove Theorem~\ref{thm:pwl-detail}, thereby proving Theorem~\ref{thm:pwl}. 
Technical lemmas are provided in Appendix~\ref{app:tech-tau}.

\subsection{Proof of Proposition~\ref{prop:tau-limit}}
\label{app:tau}

In this section, we prove Proposition~\ref{prop:tau-limit} which bounds the Hessian of the solution depending on $m,\beta,\lambda$ and a term that depends on the training data and residuals. 
Recall that we let $\alpha=(\bl,m)$. We also let $\hat \alpha=(\bl,\mt)$.

\begin{proof}\textit{(Proof of Proposition~\ref{prop:tau-limit})}
By Lemma~\ref{lem:tau-pointwise}, we have that for every $(x,\theta)$, 
$$
\lim_{\tau\to+\infty} \sigma^{m,\tau}(x,\theta) \rho^\ast_{\hat \alpha}(\theta)=  \sigma^m(x,\theta)\rho^\ast_\alpha(\theta). 
$$
With Lemma~\ref{lem:tau-gauss-tail}, there exists $C>0$ independent of $(\tau,x)$ such that
$$
|\sigma^{m,\tau}(x,\theta) \rho^\ast_{\hat \alpha}(\theta)|\leq C \exp(-\frac{\beta\lambda }{2}\|\theta\|^2),\quad \forall \theta. 
$$
Thus, the Dominated Convergence Theorem gives that
$$
\lim_{\tau\to+\infty} h^\ast_{\hat \alpha}(x)=\lim_{\tau\to+\infty}\int \sigma^{m,\tau}(x,\theta) \rho^\ast_{\hat \alpha}(\theta) d\theta =
\int \sigma^m(x,\theta)\rho^\ast_\alpha(\theta) d\theta
=h^\ast_{\alpha}(x), \quad \forall x. 
$$

Let $u=x^\top w^{m,\tau}+b$ and $c_{\mt}=S_\tau^{-1}(m^2)$. 
Notice that 
$$
a^{m,\tau}\nabla_x^2\left( ( x^\top w^{m,\tau}+b )_+^{m,\tau} \right)  = \begin{dcases}
    a^{m,\tau} w^{m,\tau}(w^{m,\tau})^\top \frac{\tau e^{\tau u}}{(1+e^{\tau u})^2} ,& u<c_{\mt}\\
    a^{\mt} w^{\mt} (w^{\mt})^\top  \phi''_{\mt}(u) , & u\geq c_{\mt}. 
\end{dcases}
$$
Recall from Appendix~\ref{app:relu} that $\|\phi''_{\mt}\|_\infty\leq m^{-2}$. 
Meanwhile, $\|S''_\tau\|_\infty \leq \tau/4$. 
By Lemma~\ref{lem:tau-gauss-tail}, there exists $C>0$ such that for every $(x,\theta)$: 
$$
\left\|a^{m,\tau} \nabla_x^2\left( (u)_+^{m,\tau} \right) \rho^\ast_{\hat \alpha}(\theta) \right\| \leq C |a|\|w\|^2  \exp(-\frac{\beta \lambda}{2}\|\theta\|^2).
$$
Note that the right-hand side above is absolutely integrable in $\theta$.
Since $\sigma^{\mt}(x,\theta) \rho^\ast_{\hat \alpha}(\theta)$ is integrable in $\theta$ for every $x$ by Lemma~\ref{lem:tau-gauss-tail} and is $C^2$ differentiable in $x$ for every $\theta$, 
we have that
{\small
\begin{equation}\label{eq:tau-limit}
\begin{aligned}
    & \nabla_x^2 h^\ast_{\hat \alpha}(x)\\
    =&\nabla_x^2 \int \sigma^{\mt}(x,\theta) \rho^\ast_{\hat \alpha}(\theta)d\theta \\
    =&\int \mathbb{I}_{u<c_{\mt}} a^{m,\tau} w^{m,\tau}(w^{m,\tau})^\top \frac{\tau e^{\tau u}}{(1+e^{\tau u})^2} \rho^\ast_{\hat \alpha}(\theta) d\theta + \int \mathbb{I}_{u\geq c_{m,\tau}} a^{\mt} w^{\mt} (w^{\mt})^\top  \phi''_{\mt}(u) \rho^\ast_{\hat \alpha}(\theta) d\theta. 
\end{aligned}
\end{equation} 
}

For the first term of the right-hand side of~\eqref{eq:tau-limit}, consider the change of variable $\theta =(a,w,b) \mapsto (a,w,u')$ with $u'=\tau u = \tau(x^\top w^{m,\tau}+b)$ and we have that
\begin{align*}
    & \int \mathbb{I}_{u<c_{\mt}} a^{m,\tau} w^{m,\tau}(w^{m,\tau})^\top \frac{\tau e^{\tau u}}{(1+e^{\tau u})^2} \rho^\ast_{\hat \alpha}(\theta) \, d\theta\\
    =& \int \mathbb{I}_{u'<\tau c_{\mt}}\  a^{m,\tau} w^{m,\tau}(w^{m,\tau})^\top \frac{e^{u'}}{(1+e^{u'})^2} \rho_{\hat \alpha}^\ast(a, w, \frac{u'}{\tau}-x^\top w^{m,\tau}) \,da \, dw \, du'.
\end{align*}
Notice that for a fixed $(a,w,u')$, 
\begin{align*}
    &|\rho_{\hat \alpha}^\ast(a, w, \frac{u'}{\tau}-x^\top w^{m,\tau})  - \rho_\alpha^\ast (a,w,-x^\top w^m)| \\
    \leq & |\rho_{\hat \alpha}^\ast(a, w, \frac{u'}{\tau}-x^\top w^{m,\tau})- \rho_{\hat \alpha}^\ast(a, w,-x^\top w^m)|+|\rho_{\hat \alpha}^\ast(a, w,-x^\top w^m)-\rho_\alpha^\ast (a,w,-x^\top w^m)|. 
\end{align*}
By Lemma~\ref{lem:tau-gauss-tail}, $\rho_{\hat \alpha}^\ast$ is Lipschitz continuous with a Lipschitz constant independent of $\tau$ near $(a,w,-x^\top w^m)$. 
This implies the first term vanishes as $\tau \to \infty$. 
The second term also vanishes according to Lemma~\ref{lem:tau-pointwise}. 
Therefore, we have that for any fixed $(a,w,u',m,x)$, 
\begin{equation*}
\begin{aligned}
&\lim_{\tau\to+\infty} \mathbb{I}_{u'<\tau c_{\mt}}\  a^{m,\tau} w^{m,\tau}(w^{m,\tau})^\top \frac{e^{u'}}{(1+e^{u'})^2} \rho_{\hat \alpha}^\ast(a, w, \frac{u'}{\tau}-x^\top w^{m,\tau}) 
\\= &a^m w^m (w^m)^\top \frac{e^{u'}}{(1+e^{u'})^2}\rho_\alpha^\ast(a,w,-x^\top w^m). 
\end{aligned}
\end{equation*}
By Lemma~\ref{lem:tau-gauss-tail}, there exist constants $C,C'$ independent of $\tau$ such that
{\small
$$
\left\| \mathbb{I}_{u'<\tau c_{\mt}}\  a^{m,\tau} w^{m,\tau}(w^{m,\tau})^\top \frac{e^{u'}}{(1+e^{u'})^2} \rho_{\hat \alpha}^\ast(a, w, \frac{u'}{\tau}-x^\top w^{m,\tau}) \right\| \leq C \frac{e^{u'}}{(1+e^{u'})^2} \exp(-C'(a^2+\|w||^2)),
$$
}
where the right-hand side is absolutely integrable in $(a,w,u')$. 
Then with the Dominated Convergence Theorem we have that 
\begin{align*}
    &\lim_{\tau\to \infty} \int \mathbb{I}_{u<c_{\mt}} a^{m,\tau} w^{m,\tau}(w^{m,\tau})^\top \frac{\tau e^{\tau u}}{(1+e^{\tau u})^2} \rho^\ast_{\hat \alpha}(\theta) \, d\theta\\
    =& \int a^m w^m (w^m)^\top \frac{e^{u'}}{(1+e^{u'})^2}\rho_\alpha^\ast(a,w,-x^\top w^m) \, da \, dw \, du'\\
    =& \int a^m w^m (w^m)^\top \rho_\alpha^\ast(a,w,-x^\top w^m) \, da \, dw.
\end{align*}
where the second equality comes from the Fubini's theorem.

For the second term of the right-hand side of~\eqref{eq:tau-limit}, 
recall that as $\tau \to+\infty$, 
$\phi''_{\mt}$ uniformly converges to $\phi''_m$ 
on any compact set in $(m^2,+\infty)$.
Then we have that, 
for all $\theta \in \br^{d+2}\setminus \set{\theta\colon x^\top w^m+b=m^2}$, 
and hence for a.e.\ $\theta$, 
{\small
$$
\lim_{\tau\to+\infty}\mathbb{I}_{u\geq c_{m,\tau}} a^{\mt} w^{\mt} (w^{\mt})^\top  \phi''_{\mt}(u) \rho^\ast_{\hat \alpha}(\theta)
=\mathbb{I}_{x^\top w^m +b \geq m^2} a^m w^m(w^m)^\top \phi''_m(x^\top w^m +b) \rho_\alpha^\ast(\theta). 
$$
}
Similarly, with Lemma~\ref{lem:tau-gauss-tail}, there exist $C,C'$ independent of $\tau$ such that
$$
\left\|\mathbb{I}_{u\geq c_{m,\tau}} a^{\mt} w^{\mt} (w^{\mt})^\top  \phi''_{\mt}(u) \rho^\ast_{\hat \alpha}(\theta)\right\| \leq C \exp(-C'\|\theta\|^2). 
$$
Using the Dominated Convergence Theorem again, we have that 
{\small
$$
\lim_{\tau\to+\infty}\int \mathbb{I}_{u\geq c_{m,\tau}} a^{m,\tau} \nabla_x^2\left( \phi_{m,\tau} (u) \right) \rho^\ast_{\hat \alpha}(\theta) d\theta
=
\int \mathbb{I}_{x^\top w^m +b \geq m^2} a^m w^m(w^m)^\top \phi''_m(x^\top w^m +b) \rho_\alpha^\ast(\theta) d\theta. 
$$
}
Therefore, we have that
\begin{equation}\label{eq:Dx0}
\begin{aligned}
\lim_{\tau\to+\infty}\nabla_x^2 h^\ast_{\hat \alpha}(x) =& 
\int a^m w^m (w^m)^\top \rho_\alpha^\ast(a,w,-x^\top w^m) \, da\, dw\\
&+
\int \mathbb{I}_{x^\top w^m +b \geq m^2} a^m w^m(w^m)^\top \phi''_m(x^\top w^m +b) \rho_\alpha^\ast(\theta) d\theta. 
\end{aligned}
\end{equation}

Now we examine the Hessian of $h_\alpha^\ast(\cdot)$. 
Notice that $h_\alpha^\ast(\cdot)$ is continuous and thus its distributional Hessian exists.
Let $D_x^2$ denote the distributional Hessian operator. 
Notice that, for any test function $\eta(x)$, 
\begin{align*}
    \left\langle D_x^2 h_\alpha^\ast(\cdot),  \eta(\cdot)\right\rangle 
    &= \int D_x^2 \eta(x) \left( \int \sigma^m(x,\theta) \rho^\ast_\alpha(\theta) d\theta\right) dx\\
    &=  \int\left( \int D_x^2 \eta(x) \cdot  \sigma^m(x,\theta) dx \right) \rho^\ast_\alpha(\theta) d\theta\\
    &=  \int\left( \int \eta(x) \cdot  D_x^2  \sigma^m(x,\theta) dx \right) \rho^\ast_\alpha(\theta) d\theta,
\end{align*}
where the second equality is by Fubini's Theorem. 
Note that, as distributions in $x$, 
$$
D^2_x( \sigma^m(x,\theta) ) = 
a^m w^m(w^m)^\top \Big( \delta(x^\top w^m + b) + \mathbb{I}_{x^\top w^m + b\geq m^2}\phi''_m(x^\top w^m + b) \Big).
$$
Then we have that
\begin{align*}
\left\langle D_x^2 h_\alpha^\ast(\cdot),  \eta(\cdot)\right\rangle &= 
\int\left( \int a^mw^m(w^m)^\top \rho^\ast_\alpha(a,w,-x^\top w^m) dadw\right) \eta(x) dx    \\
&\quad+ \int \left(\int a^m w^m(w^m)^\top \phi_m''(x^\top w^m+b) \mathbb{I}_{x^\top w^m+b\geq m^2} \rho^\ast_\alpha(\theta) d\theta \right) \eta(x) dx,
\end{align*}
which means that, 
\begin{equation}\label{eq:Dx}
\begin{aligned}
    D_x^2 h^\ast_\alpha(x)
    =&
    \int a^m w^m(w^m)^\top \rho^\ast_\alpha(a,w,-x^\top w^m) \, da \, dw \\ 
    &+ \int a^m w^m(w^m)^\top \mathbb{I}_{x^\top w^m + b\geq m^2}\phi''_m(x^\top w^m + b) \rho^\ast_\alpha(\theta) d\theta. 
\end{aligned}
\end{equation}
By Lemma~\ref{lem:tau-pointwise}, the right-hand side of \eqref{eq:Dx} is an ordinary continuous function in $x$.
Thus, $h_\alpha^\ast\in C^2(\br^d)$. Comparing~\eqref{eq:Dx0} against \eqref{eq:Dx} gives that
$$
\lim_{\tau\to+\infty}\nabla_x^2 h^\ast_{\hat \alpha}(x) = \nabla_x^2 h^\ast_{\alpha}(x). 
$$

Next, we derive the Hessian bound~\eqref{eq:hessian-quantity}.
Recall from Appendix~\ref{app:relu} that, 
$|a^m|\le m$ and $\|\phi''_m\|_{\infty}\le m^{-2}$. 
Thus, for the second term of the right-hand side of \eqref{eq:Dx}, 
\begin{align*}
\left\|\int a^m w^m(w^m)^\top \mathbb{I}_{x^\top w^m + b\geq m^2}\phi''_m(x^\top w^m + b) \rho^\ast_\alpha(\theta) \,d\theta \right\| 
& \leq   \frac{1}{m}  \int \|w\|^2 \rho^\ast_\alpha(\theta) \,d\theta\\
&\leq C (m\lambda)^{-1}+C'm^{-1},
\end{align*}
where the second inequality comes from Lemma~\ref{lem:Mbound} and $C,C'>0$ are independent of $(\alpha,x)$. 
For the first term of the right-hand side of \eqref{eq:Dx}, 
let 
$$
Z^\ast_\alpha=\int \exp\Big\{-\beta \Big( \sum_{j=1}^M R^\alpha_j \sigma^m(x_j,\theta) +\frac{\lambda}{2}\|\theta\|^2  \Big)  \Big\} d\theta. 
$$
We have that
\begin{align*}
    &\int a^m w^m (w^m)^\top \rho^\ast_\alpha(a,w,-x^\top w^m) \,da\, dw \\
    =&(Z_\alpha^\ast)^{-1} \int a^m w^m(w^m)^\top 
    \cdot \exp\Big\{-\beta\Big[ \sum_{j=1}^M R^\alpha_j\cdot a^m(x_j^\top w^m-x^\top w^m)_+^m\\
    &\qquad \qquad + \frac{\lambda}{2}(a^2+\|w\|^2+(x^\top w^m)^2) \Big] \Big\} \,da\,dw. 
\end{align*}
For all $m> \max_j\|x_j-x\|$, we have that
$$
\max_j | (x_j-x)^\top w^m | \leq \max_j\|x_j-x\| \cdot \|w^m\| \leq \max_j\|x_j-x\| \cdot m<m^2. 
$$
Recall that $(z)_+^m=(z)_+$ if $z\leq m^2$, and that, 
$w^m=\gamma_w w$ for some $\gamma_w\in [0,1]$. 
Then we have that 
{\small 
\begin{align*}
    & Z_\alpha^\ast\cdot \Big\|\int a^m w^m (w^m)^\top \rho^\ast_\alpha(a,w,-x^\top w^m) \,da\, dw \Big\|\\
    \le &   \int |a| \|w\|^2 \cdot \exp\curv{-\beta\left[ \sum_{j=1}^M R^\alpha_j\cdot a^m((x_j-x)^\top w^m)_+ + \frac{\lambda}{2}(a^2+\|w\|^2+|x^\top w^m|^2) \right]} \, da\,dw\\
    = &  \sum_{k=1}^{2\cp(\cx)} \int_{\br \times O^w_k} 
    |a| \|w\|^2 \cdot \exp\curv{-\beta\left[ \sum_{j\in J_k} R^\alpha_j\cdot a^m (x_j-x)^\top w^m + \frac{\lambda}{2}(a^2+\|w\|^2+|x^\top w^m|^2) \right]} \,da\,dw. 
\end{align*}
}
For $k=1,\ldots, 2\cp(\cx)$, 
with Tonelli's theorem we have that
\begin{align*}
    &\int_{\br \times O_k^w} |a| \|w\|^2 \exp\curv{-\beta\left[ \sum_{j\in J_k} R^\alpha_j\cdot a^m(x_j-x)^\top w^m + \frac{\lambda}{2}(a^2+\|w\|^2+|x^\top w^m|^2) \right]} dadw\\
    =& 
    \int_{\br \times O_k^w} |a| \|w\|^2 \cdot \exp\curv{-\frac{\lambda\beta}{2}\left[ 2a^m (Q_k^\alpha)^\top w^m + a^2+\|w\|^2+|x^\top w^m|^2 \right]} \,da\,dw\\
    =& \int_{O_k^w}\|w\|^2 \exp(-\frac{\lambda\beta}{2}(\|w\|^2+|x^\top w^m|^2)) \Big(\int_\br |a| \exp(-\frac{\lambda\beta}{2}(a^2+2a^m(Q_k^\alpha)^\top w^m)) \,da \Big) \,dw. 
\end{align*}
For the inner integral in $a$, we have that 
\begin{align*}
    &\int_\br |a| \exp(-\frac{\lambda\beta}{2}(a^2+2a^m(Q_k^\alpha)^\top w^m)) da\\
    \leq& \int_\br |a| \exp(-\frac{\lambda\beta}{2}(a^2-2|a||(Q_k^\alpha)^\top w^m|)) da\\
    =& 2\int_{0}^\infty a \exp(-\frac{\lambda\beta}{2}(a^2-2a|(Q_k^\alpha)^\top w^m|)) da\\
    =& \frac{2}{\lambda \beta} + \sqrt{\frac{2\pi}{\lambda \beta}} \cdot \exp(\frac{\lambda \beta}{2}((Q_k^\alpha)^\top w^m)^2)\cdot |(Q_k^\alpha)^\top w^m|\cdot \Big(1+\operatorname{erf}(|(Q_k^\alpha)^\top w^m|\sqrt{\frac{\lambda \beta}{2}})\Big)\\
    \le & \frac{2}{\lambda \beta} + \sqrt{\frac{2\pi}{\lambda \beta}} \cdot \exp(\frac{\lambda \beta}{2}((Q_k^\alpha)^\top w^m)^2)\cdot 2 |(Q_k^\alpha)^\top w^m|. 
\end{align*}
Then we have that 
{\small
\begin{equation*}
\begin{aligned}
    &\int_{\br \times O_k^w} |a| \|w\|^2  \exp\curv{-\beta\left[ \sum_{j\in J_k} R^\alpha_j a^m(x_j-x)^\top w^m + \frac{\lambda}{2}(a^2+\|w\|^2+|x^\top w^m|^2) \right]} da\, dw\\
    \le & \frac{2}{\lambda \beta} \int_{O_k^w}\|w\|^2 \exp(-\frac{\lambda\beta}{2}(\|w\|^2+|x^\top w^m|^2)) dw \\
        &\quad\quad + 2\sqrt{\frac{2\pi}{\lam\beta}} \int_{O_k^w} \|w\|^2 |(Q_k^\alpha)^\top w^m| 
    \exp\curv{-\frac{\lambda\beta}{2}\Big(\|w\|^2 +|x^\top w^m|^2 - ((Q_k^\alpha)^\top w^m)^2\Big)} dw\\
    \lesssim & 
    (\lambda \beta)^{-\frac{d+4}{2}} + (\lambda \beta)^{-\frac{1}{2}} \int_{O_k^w} \|w\|^2 |(Q_k^\alpha)^\top w^m| 
    \exp\curv{-\frac{\lambda\beta}{2}\Big(\|w\|^2 +|x^\top w^m|^2 - ((Q_k^\alpha)^\top w^m)^2\Big)} dw,
\end{aligned}
\end{equation*}
}
where the last inequality comes from the fact that 
$$
\frac{2}{\lambda \beta} \int_{O_k^w}\|w\|^2 \exp(-\frac{\lambda\beta}{2}(\|w\|^2+|x^\top w^m|^2)) dw\leq 
\frac{2}{\lambda \beta} \int_{\br^d}\|w\|^2 \exp(-\frac{\lambda\beta}{2}\|w\|^2) dw 
\lesssim (\lambda \beta)^{-\frac{d+4}{2}}. 
$$

By the above and with Lemma~\ref{lem:hess-Z}, we have that 
$$
\|\nabla_x^2 h^\ast_{\alpha}(x)\| 
\lesssim 
m^{-1}(1+\lambda^{-1})
+\beta^{-1}\lambda^{-\frac{d+6}{4}}(1+\sqrt{\lambda})^{\frac{d+2}{2}} 
+ 
\beta^{\frac{d+1}{2}}\lambda^{\frac{d}{4}}(1+\sqrt{\lambda})^{\frac{d+2}{2}}\mathcal{I}^\alpha(x),
$$
where 
$$
\mathcal{I}^\alpha(x)=\sum_{k=1}^{2\cp(\cx)} \int_{O_k^w} \|w\|^2 |(Q_k^\alpha)^\top w^m| \exp\curv{-\frac{\lambda\beta}{2}\Big(\|w\|^2 +|x^\top w^m|^2 - ((Q_k^\alpha)^\top w^m)^2\Big)} dw. 
$$
This completes the proof. 
\end{proof}

\subsection{Proof of Proposition~\ref{prop:qpc}}
\label{app:qpc}

In this section, we prove Proposition~\ref{prop:qpc} which bounds the Hessian $\|\nabla_x^2 h^\ast_\alpha(x)\|$ under the Quadratic Positivity Condition.

\begin{proof}\textit{(Proof of Proposition~\ref{prop:qpc})}
    By assumption, for any $k\in \set{1,\cdots,2\cp(\cx)}$,
    we have that
    $$
    f_k^\alpha(w,b) \geq \delta (\|w\|^2+b^2), \quad \forall (w,b)\in G_x \cap O_k. 
    $$
    Therefore, 
    $$
    \|w\|^2+|x^\top w|^2-|(Q_k^\alpha)^\top w|^2 \geq \delta \|w\|^2, \quad \forall w \in O_k^w. 
    $$
    Note that $w^m=\gamma_w w$ for some $\gamma_w\in [0,1]$ and that $\delta \in (0,1/2]$. 
    Thus, for every $w\in O_k^w$, 
    \begin{align*}
    \|w\|^2+|x^\top w^m|^2 -|(Q_k^\alpha)^\top w^m|^2 
    &=(1-\gamma_w^2)\|w\|^2 + \gamma_w^2(\|w\|^2+|x^\top w|^2-|(Q_k^\alpha)^\top w|^2)\\
    & \geq (1-\gamma^2_w(1- \delta))\|w\|^2\\
    & \geq \delta\|w\|^2. 
    \end{align*}
    Notice that $|(Q_k^\alpha)^\top w^m|^2\leq (1-  \delta)\|w\|^2+|x^\top w^m|^2\leq \|w\|^2(1+\|x\|^2)$. 
    Then we have that
    \begin{align*}
    &\int_{O_k^w} \|w\|^2 |(Q^\alpha_k)^\top w^m| \exp\curv{-\frac{\lambda\beta}{2}\Big( \|w\|^2+|x^\top w^m|^2-|(Q^\alpha_k)^\top w^m|^2\Big)} dw    \\
    \leq &\sqrt{1+\|x\|^2} \int_{\br^d} \|w\|^3 \exp\curv{-\frac{\lambda\beta \delta}{2}\|w\|^2} dw\\
    \lesssim &   \sqrt{1+\|x\|^2}  
    (\lambda \beta \delta)^{-\frac{d+3}{2}}. 
    \end{align*}
    It follows that
    $$
    \mathcal{I}^\alpha(x) \lesssim  \sqrt{1+\|x\|^2}  (\lambda \beta \delta)^{-\frac{d+3}{2}}.
    $$
    With Proposition~\ref{prop:tau-limit}, we have that
    $$
    \|\nabla_x^2 h^\ast_{\alpha}(x)\|
    \lesssim  
    m^{-1}(1+\lambda^{-1})
        +\beta^{-1}\lambda^{-\frac{d+6}{4}}(1+\sqrt{\lambda})^{\frac{d+2}{2}}
        \left(1
        + 
        \sqrt{1+\|x\|^2}\cdot \delta^{-\frac{d+3}{2}}
        \right). 
    $$
    This completes the proof. 
\end{proof}

\subsection{Proof of Proposition~\ref{prop:shrink}}
\label{app:shrink}

In this section, we prove Proposition~\ref{prop:shrink} which shows the shrinkage of the cluster set. 

\begin{proof}\textit{(Proof of Proposition~\ref{prop:shrink})}

\textbf{Step 1.} We begin with fixing the topology of the set 
$$
\hat \Omega^\ast = \bigcup_{k} \hat O_k \cap \hat \Omega^\ast_k.
$$ 
We claim that, for every $k$ in $\set{1,\ldots,2\cp(\cx)}$, $\hat O_k \cap \hat \Omega^\ast_k$ has at most two connected components. 
Note that, when $\|P_k^\ast\|<1$, 
$\hat \Omega^\ast_k=\varnothing$ and, when $\|P_k^\ast\|=1$, $\hat \Omega^\ast_k$ consists of two points. 
The claim thus holds in both cases. 
Assume that $\|P_k^\ast\|>1$. 
Under a suitable basis $\tilde z$, $f_k^\ast$ can be written as
$$
f_k^\ast (\tilde{z})=(1-\|P_k^\ast\|^2) \tilde z_1^2 +\sum_{i>1}\tilde z_i^2. 
$$
Then $\hat \Omega^\ast_k$ has two connected components: 
\begin{equation*}
    C_+ = \Big\{\tilde{z}\in \bs^{d} \colon \tilde z_1 \geq \sqrt{\frac{1}{\|P_k^\ast\|^2-1}\sum_{i>1} \tilde z_i^2}\Big\},
    \  
    C_- = \Big\{\tilde{z}\in \bs^{d} \colon \tilde z_1 \leq -\sqrt{\frac{1}{\|P_k^\ast\|^2-1}\sum_{i>1} \tilde z_i^2}\Big\}. 
\end{equation*}
Clearly, $C_+$ is contained by an open hemisphere on $\bs^d$. 
Select any $z\in C_+$ and consider the standard gnomonic projection $\iota$ from $C_+$ to the tangent space $T_z\bs^d$, viewed as an embedded submanifold of $\br^{d+1}$. 
Recall that $O_k$ is a cone and $\hat O_k$ is its spherical cross-section.
We have that
$$
\iota(\hat O_k\cap C_+) = T_z\bs^d\cap O_k \cap \Big\{\tilde{z}\in \br^{d+1} \colon \tilde z_1 \geq \sqrt{\frac{1}{\|P_k^\ast\|^2-1}\sum_{i>1} \tilde z_i^2}\Big\}. 
$$
The set on the right-hand side is either empty or connected, as it is the intersection of three convex sets. 
As $\iota$ is a homeomorphism, $\hat O_k\cap C_+$ is either empty or connected. 
Similar arguments apply to $\hat O_k \cap C_-$. 
Thus, $\hat O_k \cap \hat \Omega^\ast_k$ has at most two connected components.

We next show that the case of two components of $\hat O_k \cap \hat \Omega^\ast_k$ occurs if and only if both components of $\hat{\Omega}_k^\ast$ intersect the relative boundary of $\hat{O}_k$ in $\bs^d$. 
Note that $f_k^\ast(z)=1-(z^\top P_k^\ast)^2$ for $z\in \bs^d$. 
Then the two components of $\hat \Omega_k^\ast$ can be rewritten as
$$
C_+ = \set{z\in \bs^d\colon z^\top P_k^\ast \ge1 },\quad 
C_- = \set{z\in \bs^d\colon z^\top P_k^\ast \le -1 }. 
$$
For contradiction, 
assume that $C_+$ is in the relative interior of $\hat O_k$ in $\bs^d$. 
Then $\|P_k^\ast\|$ has to be no smaller than one. 
Suppose that $\hat O_k$ corresponds to the binary activation pattern $A$ and assume without loss of generality that $A(x_1)=1$. 
Notice that $\hat O_k$ is contained in the closed hemisphere
$E=\{(w,b)\in \bs^d \colon w^\top x_1 +b\geq0\}.$
By assumption, 
$C_+$ lies in the relative interior of $\hat O_k$ and thus also in the relative interior of $E$. 
By the reflection symmetry, we have that $C_-=-C_+$ lies in the relative interior of $-E$. 
In particular, we have that $C_-\cap E=\varnothing$. 
Consequently, $C_-\cap \hat O_k=\varnothing$, which yields a contradiction. 
Therefore, $\hat O_k \cap \hat \Omega^\ast_k$ has two components if and only if both components of $\hat{\Omega}_k^\ast$ intersect the relative boundary of $\hat{O}_k$.

Consider any two adjacent sectors $\hat{O}_k, \hat{O}_l$ with a non-empty intersection $\hat O_{kl}=\hat O_k \cap \hat{O}_l$. 
Recalling the definition of $f_k^{\alpha}$ from~\eqref{eq:f}, we have that 
$$
f_{k}^{\alpha_n}(w,b)= f_l^{\alpha_n}(w,b),\quad \forall (w,b)\in \hat O_{kl}. 
$$ 
As $f_k^{\alpha_n}$ converges pointwise to $f_k^\ast$,
$$
f_{k}^{\ast}(w,b)= f_l^{\ast}(w,b),\quad \forall (w,b)\in \hat O_{kl}. 
$$
Thus, their non-positive sets agree on $\hat O_{kl}$: 
\begin{equation}\label{eq:adjacent}
    \hat \Omega_k^\ast \cap \hat O_{kl}=\hat \Omega_\ell^\ast \cap \hat O_{kl}. 
\end{equation}
This means that whenever $\hat\Omega_k^\ast \cap \hat O_k$ has two components, each component must touch the relative boundary of $\hat O_k$ and will be merged with components in adjacent sectors.

Now we show that $\hat \Omega^\ast$ has at most $2\cp(\cx)-1$ connected components. 
First, consider the specific closed sector 
\begin{equation}\label{eq:one-side-sector}
    \hat O_{k_0} =  \bigcap_{j=1}^M \set{(w,b)\in \bs^{d}\colon x_j^\top w+b \le 0}.
\end{equation}
We have that $J_{k_0}=\varnothing$, $f_{k_0}^{\alpha_n}(z)=\|z\|^2$ for every $n$. 
Thus, $f_{k_0}^\ast(z)=\|z\|^2$ and $\hat \Omega_{k_0}^\ast=\varnothing$. 
Thus, the sector $\hat O_{k_0}$ does not intersect any connected component of $\hat \Omega^\ast$. 
Now suppose that $\hat \Omega^\ast$ has $N$ connected components, denoted by $\set{C_i}_{i=1}^N$. 
For $i=1,\cdots, N$, consider the number of sectors intersecting $C_i$: 
$$
r_i = \# \set{ k\colon \hat O_k \cap C_i \neq \varnothing } \geq 1. 
$$
Suppose that there are $I$ connected components such that $r_i = 1$. 
Because a sector can contain two components only if both touch its relative boundary, there are precisely $I$ sectors that contain one component in their interior and intersect no other components on their relative boundaries.
For the remaining $N-I$ components, we have that
$r_i \geq 2$.
Thus, 
$$
2(N-I) \leq \sum_{i\colon r_i\geq 2} r_i. 
$$
As there are already $I+1$ sectors, including $\hat O_{k_0}$, that do not intersect any component on their boundaries,
these $N-I$ components with $r_i \geq 2$ can only intersect the remaining $2\cp(\cx)-I-1$ sectors.
However, each of these sectors can intersect at most two components. Thus,
$$
\sum_{i\colon r_i\geq 2} r_i \leq 2(2\cp(\cx)-I-1). 
$$
Therefore, we have that $N-I\leq 2\cp(\cx)-I-1$. 
Thus, the total number of components is bounded as follows:
$$
N=I +(N-I)\leq I +(2\cp(\cx)-I-1)=2\cp(\cx)-1. 
$$

\textbf{Step 2.} 
Next, we show that 
$$
s(\hat \Omega^\ast)=0,
$$
where $s(A)$ denotes the spherical measure of a measurable set $A\subset \bs^d$. 
Fix an arbitrary $k$ from $\set{1,\ldots,2\cp(\cx)}$. 
With $z=(w,b)$, define
$$
\Theta^\alpha_k = \set{a\colon |a|<m} \times \Big(O_k\cap \Omega_k^\alpha \cap \set{z\colon \|z\|<m}\Big) \subset \br^{d+2}. 
$$
Assume that $m>1+\max_j\|x_j\|$. 
Then, given that $|b|<m$, 
$$
\max_j |x_j^\top w^m + b|\leq (1+ \max_j \|x_j\|)m \leq m^2.
$$
Then for all $\theta \in \Theta_k^\alpha$, 
$$
a^m(x_j^\top w^m + b)_+^m=a^m(x_j^\top w^m + b)_+=a(x_j^\top w + b)_+, \quad j=1,\ldots,M. 
$$
Let
$$
Z_\alpha^\ast = \int\exp\Big\{ - \beta \Big[ \sum_{j=1}^M R^\alpha_j a^m(x_j^\top w^m+b)_+^m +\frac{\lambda}{2}\|\theta\|^2\Big] \Big\} d\theta. 
$$
We have that
\begin{align*}
    Z_\alpha^\ast & \geq  \int_{\Theta^\alpha_k} \exp\Big\{ - \beta \Big[ \sum_{j\in J_k} R^\alpha_j a (x_j^\top w+b) +\frac{\lambda}{2}\|\theta\|^2\Big] \Big\} d\theta\\
    &= \int_{\Theta^\alpha_k} \exp\Big\{ - \frac{\lambda\beta}{2} \left[ 2a\cdot 
    (P_k^\alpha)^\top z +\|\theta\|^2\right] \Big\} d\theta\\
    &= \int_{\Theta^\alpha_k} \exp\Big\{ - \frac{\lambda\beta}{2} \Big[ \Big(a+(P_k^\alpha)^\top z\Big)^2 + f_k^\alpha(w,b)\Big] \Big\} d\theta\\
    &\ge \int_{\Theta^\alpha_k} \exp\Big\{ - \frac{\lambda\beta}{2}  \Big(a+(P_k^\alpha)^\top z\Big)^2 \Big\} d\theta,
\end{align*}
where we used that $f_k^\alpha(w,b) \leq 0$ for all $(w,b)\in \Omega_k^\alpha$. 
With change of variables 
$\theta \mapsto (a, \hat z=z/\|z\|, r=\|z\|)$, we have that
\begin{align*}
    Z_\alpha^\ast &\geq \int_{\Theta^\alpha_k} \exp\curv{ - \frac{\lambda\beta}{2} \Big( a+(P_k^\alpha)^\top z\Big)^2 } d\theta\\
    &=\int_{0}^m \int_{\hat O_k\cap \hat \Omega_k^\alpha} \int_{-m}^m r^d
    \exp\curv{ - \frac{\lambda\beta}{2}\cdot (a+(P_k^\alpha)^\top \hat z r)^2 }\, da\,d s(\hat z)\, dr\\
    &\geq \int_{0}^{\sqrt{m}} \int_{\hat O_k\cap \hat \Omega_k^\alpha} \int_{-m}^m r^d
    \exp\curv{ - \frac{\lambda\beta}{2}\cdot (a+(P_k^\alpha)^\top \hat z r)^2 }\, da\, d s(\hat z)\, dr,
\end{align*}
where $s(\hat z)$ denotes the spherical measure on $\bs^d$.

By Lemma~\ref{lem:hess-R}, there exists $C_1$ independent of $\alpha$ such that $\sum_{j} (R^\alpha_j)^2 \leq C_1 \lambda$. 
Then with Cauchy-Schwarz inequality, 
\begin{align*}
    \|P_k^\alpha\|^2 = \Big\| \frac{1}{\lambda}\sum_{j\in J_k} R_j^\alpha  x_j \Big\|^2 + \Big( \frac{1}{\lambda}\sum_{j\in J_k} R_j^\alpha \Big)^2 
    \leq \frac{1}{\lambda^2}\cdot C_1\lambda \cdot \Big(\sum_{j=1}^M\|x_j\|^2+M\Big).
\end{align*}
Thus, there exists $C_2>0$ independent of $\alpha$ such that
$$
\|P^\alpha_k\|< \frac{C_2}{\sqrt{\lambda}}. 
$$
Now assume that $m$ satisfies
$$
m > \sqrt{m} \cdot \frac{C_2}{\sqrt{\lambda}} + 1.
$$
Note that this holds if 
$$
m > \frac{1}{4}\left( \frac{C_2}{\sqrt{\lambda}}+ \sqrt{\frac{C_2^2}{\lambda}+4} \right)^2. 
$$
We have that for $r\in [0,\sqrt{m}]$, 
$$
1+|(P^\alpha_k)^\top \hat z r| \leq 1+ \|P^\alpha_k\|r \leq 1+ \frac{C_2}{\sqrt{\lambda}}\sqrt{m}<m. 
$$
Thus,
$$
m+(P^\alpha_k)^\top \hat z r \geq m-|(P^\alpha_k)^\top \hat z r|\geq 1,\quad 
-m+(P^\alpha_k)^\top \hat z r\leq 0.
$$
Then we have that
\begin{align*}
\int_{-m}^m \exp\curv{ - \frac{\lambda\beta}{2}\cdot (a+(P^\alpha_k)^\top \hat z r)^2 }da &= \int_{-m+(P^\alpha_k)^\top \hat z r}^{m+(P^\alpha_k)^\top \hat z r} \exp\curv{ - \frac{\lambda\beta}{2} u^2 }du\\
&\geq \int _0^1 \exp\curv{ - \frac{\lambda\beta}{2} u^2 }du\\
&= \sqrt{\frac{\pi}{2\lambda \beta}}  \operatorname{erf}(\sqrt{\frac{\lambda \beta}{2}})\\
&\geq \frac{1}{2} \sqrt{\frac{\pi}{2\lambda \beta}},
\end{align*}
where the final inequality comes from that $\lambda \beta >1$ and that $\operatorname{erf}(\sqrt{1/2})>1/2$. 
Therefore, with Tonelli's Theorem, 
\begin{align*}
    Z_\alpha^\ast \geq \int_{0}^{\sqrt{m}} \int_{\hat O_k\cap \hat \Omega_k^\alpha}  \frac{1}{2} \sqrt{\frac{\pi}{2\lambda \beta}} \cdot r^d\,  d s(\hat z)\, dr 
    \geq C_3 \cdot s(\hat O_k\cap \hat \Omega_k^\alpha) \cdot \frac{m^{\frac{d+1}{2}}}{\sqrt{\lambda \beta}},
\end{align*}
where $C_3$ is independent of $\alpha$.

By Lemma~\ref{lem:hess-Z}, there exists $C_4$ independent of $\alpha$ such that
$$
Z_\alpha^\ast \leq \exp(C_4\beta+1)\cdot (\frac{8\pi}{\lambda \beta})^{d+2}. 
$$
Then we have that
$$
C_3 \cdot s(\hat O_k\cap \hat \Omega_k^\alpha) \cdot \frac{m^{\frac{d+1}{2}}}{\sqrt{\lambda \beta}} \leq Z_\alpha^\ast \leq \exp(C_4\beta+1)\cdot (\frac{8\pi}{\lambda \beta})^{d+2}.
$$
Since $\lambda \beta >1$, we have that 
$$
s(\hat O_k\cap \hat \Omega_k^\alpha) \leq C_5 \frac{\exp(C_4\beta+1)}{m^{\frac{d+1}{2}}},
$$
where $C_4,C_5$ are constants independent of $\alpha$. 
Assume that $m > \exp(2 C_4\beta +2)$. 
Then, 
\begin{equation}\label{eq:nonasymp}
s(\hat O_k\cap \hat \Omega_k^\alpha) \leq C_5 \frac{1}{m^{\frac{d}{2}}}. 
\end{equation}
As the analysis applies to any $k$, we have that
\begin{equation}\label{eq:shrink}
    \lim_{n\to \infty} \max_k s(\hat O_k\cap \hat \Omega_k^{\alpha_n})=0.
\end{equation}

Notice that, for every $z\in \bs^d$, 
\begin{equation}\label{eq:uniform}
    |f_k^{\alpha_n}(z)-f_k^\ast(z)|=|z^\top(P_k^{\alpha_n}(P_k^{\alpha_n})^\top - P_k^{\ast}(P_k^{\ast})^\top )z| \leq \|P_k^{\alpha_n}(P_k^{\alpha_n})^\top - P_k^{\ast}(P_k^{\ast})^\top\|. 
\end{equation}
Thus, by $\lim_{n}P_k^{\alpha_n}=P_k^\ast$, we have that
$f_k^{\alpha_n}$ converges uniformly to $f_k^\ast$ on $\bs^d$. 
Now assume that $s(\hat \Omega^\ast)>0$. 
Thus, there exists at least one $k$ such that
$s(\hat \Omega^\ast_k \cap \hat O_k)>0$. 
As $\hat \Omega^\ast_k \cap \hat O_k$ is closed, 
$$
\hat \Omega^\ast_k \cap \hat O_k = \operatorname{Int}_{\bs^d}(\hat \Omega^\ast_k \cap \hat O_k)\cup \partial_{\bs^d}( \hat \Omega^\ast_k \cap \hat O_k),
$$
where $\operatorname{Int}_{\bs^d}$ denotes the relative interior on $\bs^d$ and $\partial_{\bs^d}$ denotes the relative boundary on $\bs^d$. 
Notice that
$$
\partial_{\bs^d}(\hat \Omega^\ast_k \cap \hat O_k) \subset \partial_{\bs^d} \hat \Omega^\ast_k \cup \partial_{\bs^d} \hat O_k. 
$$
By the geometry of $\hat O_k$ and $\hat \Omega^\ast_k$, we have that $\partial_{\bs^d}(\hat \Omega^\ast_k \cap \hat O_k)$ has zero spherical measure. 
Therefore, $\hat \Omega^\ast_k \cap \hat O_k$ must have a non-empty relative interior. 
Fix any $z$ inside its relative interior. 
Then, there exists an $\varepsilon>0$ and a spherical neighborhood $U$ of $z$ such that 
$U\subset \hat \Omega^\ast_k \cap \hat O_k$ and $f_k^\ast(z)<-\varepsilon$ for every $z\in U$. 
By the uniform convergence, there exists $N$ such that for all $n>N$ and every $z\in U$, $f_k^{\alpha_n}(z) \leq -\varepsilon/2<0$. 
Thus, $U \subset \hat \Omega_k^{\alpha_n}\cap  \hat O_k$ for all $n>N$, which contradicts~\eqref{eq:shrink}. 
Hence, $s(\hat \Omega^\ast)=0$.

\textbf{Step 3.} 
Now we prove that $\hat \Omega^\ast$ consists of at most $2\cp(\cx)-1$ points. 
For this end, we show that for every $k$, 
$\hat \Omega_k^\ast \cap \hat O_k$ can be either two points, or one point, or empty. 
Note that, $f_k^\ast(z)=1- (z^\top P_k^\ast)^2$ when $z\in \bs^d$.
Let $\hat \Omega_k^\ast \cap \hat O_k=
A_+ \sqcup A_-$ where
$$
A_+=\set{z\in \hat O_k\colon z^\top P_k^\ast\geq 1}, \quad 
A_-=\set{z\in \hat O_k\colon z^\top P_k^\ast\leq -1}. 
$$
Recall that $s(A_+)=S(A_-)=0$. 
We claim that
$$
A_+=\set{z\in \hat O_k\colon z^\top P_k^\ast= 1}, \quad 
A_-=\set{z\in \hat O_k\colon z^\top P_k^\ast= -1}. 
$$
Indeed, if there exists $z\in \hat O_k$ such that $z^\top P_k^\ast>1$, there exists a spherical neighborhood $U$ of $z$ such that $(z')^\top P_k^\ast >1$ for every $z'\in U$.
Consequently, $s(\hat \Omega^\ast_k\cap \hat O_k)\ge s(U\cap \hat O_k)>0$, which yields a contradiction.

We show that $A_+$ must be a singleton. Assume that $A_+$ contains two distinct points $z_1\neq z_2$. Let $z_0=(z_1+z_2)/2$ and $\hat z_0=z_0/\|z_0\|$. 
Clearly, $\|z_0\|<1$ and $ \hat z_0\in \hat O_k$. 
However, 
$$
|(P_k^\ast)^\top \hat z_0|^2 
= \frac{1}{\|z_0\|^2}\left| \frac{(P_k^\ast)^\top z_1+ (P_k^\ast)^\top z_2}{2} \right|^2
=\frac{1}{\|z_0\|^2}>1.
$$
Similarly, there exists a spherical neighborhood $U$ of $\hat z_0$ such that 
$U\subset \hat \Omega_k^\ast$ and $s(\hat \Omega^\ast_k\cap \hat O_k)\ge s(U\cap \hat O_k)>0$, which yields a contradiction. 
Therefore, for every $k$, 
$\hat \Omega_k^\ast \cap \hat O_k$ can be either two points, or one point, or empty. 
As a consequence, 
$\hat \Omega^\ast=\cup_k \hat \Omega_k^\ast \cap \hat O_k$ is a finite set. 
Since $\hat \Omega^\ast$ has at most $2\cp(\cx)-1$ connected components, we have that $\hat \Omega^\ast$ consists of at most $2\cp(\cx)-1$ points.

\textbf{Step 4.} 
Finally, we show that $\hat \Omega^{\alpha_n}$ is contained by a spherical neighborhood of $\hat \Omega^\ast$ for sufficiently large $n$. 
For this end, we show that, for every $k$ and $\varepsilon$, there exists $N$ such that
$$
\hat O_k \cap \hat \Omega_k^{\alpha_n} \subset \set{z\in \bs^d \colon d_\angle (z,\hat O_k \cap \hat \Omega_k^{\ast})<\varepsilon},\quad \forall n>N. 
$$
Assume that this fails to hold. 
Then there exist $k$ and $\varepsilon_0>0$ and subsequence $\{n_l\}$ such that,
for every $l$,
there exists $z_l \in \hat O_k \cap \hat \Omega^{\alpha_{n_l}}_k$ 
with $d_{\angle}(z_l, \hat O_k\cap \hat \Omega^\ast_k)\ge \varepsilon_0$. 
Since $\hat O_k$ is compact, we can assume without loss of generality that $z^\ast =\lim_l z_l\in \hat O_k$ exists. 
As $z_l \in \hat \Omega_k^{\alpha_{n_l}}$, we have that $|z_l^\top P_k^{\alpha_{n_l}} |\geq 1$ for every $l$. 
Taking $n\to+\infty$ gives that $|(z^\ast)^\top P_k^{\ast} |\geq 1$. 
Therefore, 
$z^\ast \in \hat \Omega_k^\ast \cap \hat O_k$, 
which yields a contradiction.
It directly follows that, 
for every $\varepsilon$ there exists $N$ such that
$$
\hat \Omega^{\alpha_n} \subset \set{z\in \bs^d\colon d_\angle(z,\hat \Omega^\ast)<\varepsilon},\quad \forall n>N. 
$$
This completes the proof.
\end{proof}

\subsection{Proof of Theorem~\ref{thm:pwl-detail}}
\label{app:pwl2}

In this section, we prove Theorem~\ref{thm:pwl-detail}, thereby proving Theorem~\ref{thm:pwl}.

\begin{proof}(\textit{Proof of Theorem~\ref{thm:pwl-detail}})
We will show that
$$
\min_k \inf_{x\in \Gamma}\inf_{z\in \hat G_x \cap \hat O_k} f^\ast_k(z)>0. 
$$
Let 
$$
\hat G_\Gamma=\cup_{x\in \Gamma} \hat  G_x=\bigcup_{x\in \Gamma}\set{(w,b)\in \bs^d\colon w^\top x+b=0}.
$$
It suffices to show that
$$
\min_k \inf_{z\in \hat G_{\Gamma} \cap \hat O_k} f^\ast_k(z)>0. 
$$

Notice that for any $k$ with $\|P^\ast_k\|<1$, we have that
$$
\inf_{z\in \bs^d} f_k^\ast(z) \geq  1- \|P^\ast_k\|^2 >0. 
$$
Consider an arbitrary $k$ with $\|P^\ast_k\|\geq 1$ and thus $\hat \Omega^\ast_k \neq \varnothing$. 
By the assumption that $\Gamma \cap (\cup_k\pi_k)=\varnothing$, 
we have that $\hat G_\Gamma \cap \hat \Omega^\ast =\varnothing$.
Thus, 
$$
\hat G_\Gamma \cap \hat O_k \cap \hat \Omega^\ast_k =\varnothing. 
$$
As $\Gamma$ is compact, 
$\hat G_\Gamma$ is closed. 
Since $\hat G_\Gamma, \hat O_k, \hat \Omega^\ast_k$ are closed sets, we have that
\begin{equation}\label{eq:pwldetail1}
    \delta_k = d_\angle(\hat G_\Gamma\cap \hat O_k, \hat \Omega^\ast_k)>0,
\end{equation}
where $d_\angle(S,S')=\inf_{s\in S,s'\in S'} \arccos s^\top s'$ for non-empty subsets $S,S'\subset \bs^d$.

Let $\hat P_k^\ast= P_k^\ast / \|P_k^\ast\|\in \bs^{d}$. 
Define, for all $z\in \bs^d$, that
$$
\gamma_k(z) = \min\big\{\arccos (z^\top \hat P_k^\ast), \pi- \arccos (z^\top \hat P_k^\ast)\big\} \in [0,\frac{\pi}{2}]. 
$$
Then, for any $z\in \bs^{d}$, 
$$
f^\ast_k(z) =1-(( P_k^\ast)^\top z)^2=1-\| P_k^\ast\|^2\cos^2(\gamma_k(z)). 
$$
Thus, for $z\in \bs^{d}$, 
$$
z \in \hat \Omega_k^\ast \Longleftrightarrow \cos \gamma_k(z) \ge \frac{1}{\| P_k^\ast\|} \Longleftrightarrow \gamma_k(z) \le \bar \gamma_k \triangleq \arccos(\frac{1}{\| P_k^\ast\|}). 
$$
As $\|P_k^\ast\|\geq 1$, $\bar \gamma_k \in [0,\pi/2)$. 
By~\eqref{eq:pwldetail1}, 
$$
\frac{\pi}{2}\geq \gamma_k(z) \ge \bar \gamma_k+\delta_k, \quad \forall z\in \hat G_\Gamma \cap \hat O_k.
$$
As cosine is monotonic on $[0,\pi/2]$, we have that 
$$
f^\ast_k(z) \geq 1-\|P_k^\ast\|^2 \cos^2\Big(\bar \gamma_k+ \delta_k \Big),\quad \forall z\in \hat G_\Gamma \cap \hat O_k. 
$$
Since $\bar \gamma_k,\delta_k\in[0,\pi/2]$,
$$
\cos(\bar \gamma_k+\delta_k) = \cos(\bar \gamma_k)\cos(\delta_k)-\sin(\bar\gamma_k)\sin(\delta_k) \leq 
\cos(\bar\gamma_k)\cos(\delta_k) = \frac{1}{\|P_k^\ast\|}\cos \delta_k. 
$$
Thus, 
$$
f^\ast_k(z) \geq 1-\cos^2(\delta_k)=\sin^2(\delta_k)>0,\quad \forall z\in \hat G_\Gamma \cap \hat O_k. 
$$

Now we have that
$$
\delta^\ast=\min_k \inf_{z\in \hat G_{\Gamma} \cap \hat O_k} f^\ast_k(z) \geq 
\min\Big\{\min_{k\colon \|P_k^\ast\|<1} 1-\|P_k^\ast\|^2,\ \min_{k\colon \|P_k^\ast\|\geq 1} \sin^2(\delta_k)
\Big\}>0. 
$$
As shown in~\eqref{eq:uniform}, $f_k^{\alpha_n}$ converges uniformly to $f_k^\ast$ on $\bs^d$. 
Thus, there exists $N>0$ such that 
$$
\inf_{n>N} \min_k \inf_{z\in \hat G_{\Gamma} \cap \hat O_k} f^{\alpha_n}_k(z) \geq \frac{\delta^\ast}{2}. 
$$
That is, there exists $N$ such that for all $n>N$ and any $x\in \Gamma$, 
$$
\min_k \inf_{z\in \hat G_x \cap \hat O_k}f^{\alpha_n}_k(z) \geq \frac{\delta^\ast}{2}. 
$$
Then, with Proposition~\ref{prop:qpc}, we have that
\begin{equation}\label{eq:pwldetail2}
    \lim_{n\to+\infty} \sup_{x\in \Gamma} \|\nabla_x^2 h^\ast_{\alpha_n}(x)\|=0.
\end{equation}

We now show that $h^\ast$ is piecewise affine. 
Fix an open connected component $C$ of $\br^d\setminus \cup_k\pi_k$ and a point $x_0 \in C$. 
Select $x_1,\ldots,x_d\in C$ such that $\set{x_i-x_0}_{i=1}^d$ forms a basis of $\br^d$. 
By~Proposition~\ref{prop:tau-limit}, $h^\ast_{\alpha_n}$ is $C^2$ smooth for every $n$. 
By the Taylor's theorem, for every $i$, 
$$
h^\ast_{\alpha_n}(x_i)=h^\ast_{\alpha_n}(x_0)+\nabla h^\ast_{\alpha_n}(x_0)^\top(x_i-x_0)+ \frac{1}{2}(x_i-x_0)^\top \nabla^2 h^\ast_{\alpha_n}(\xi_{i,n}) (x_i-x_0),
$$
where $\xi_{i,n}$ is some point on the line segment between $x_0$ and $x_i$. 
With~\eqref{eq:pwldetail2}, we have that the limit
$$
\lim_{n\to+\infty} \nabla h^\ast_{\alpha_n}(x_0)^\top (x_i-x_0) = h^\ast(x_i)-h^\ast(x)
$$
exists for every $i$. 
This implies that $\lim_{n\to+\infty} \nabla h^\ast_{\alpha_n}(x_0)$ exists, which we denoted by $v^\ast$. 
Now for any $x'\in C$, we have that 
$$
h^\ast_{\alpha_n}(x')=h^\ast_{\alpha_n}(x_0)+\nabla h^\ast_{\alpha_n}(x_0)^\top(x'-x_0)+\frac{1}{2}(x'-x_0)^\top \nabla^2 h^\ast_{\alpha_n}(\xi_n') (x'-x_0),
$$
where $\xi_n'$ is some point on the line segment between $x'$ and $x_0$. 
% %
Again, with~\eqref{eq:pwldetail2}, we have that
$$
h^\ast(x')=h^\ast(x_0)+(v^\ast)^\top(x'-x_0). 
$$
Thus, $h^\ast$ is affine on $C$.

Finally, we show that $h^\ast$ is Lipschitz continuous and derive the bound on the Lipschitz constant. 
Note that
$\sigma^m(x,\theta) \rho^\ast_{\bl,m}(\theta)$ 
is integrable in $\theta$ for every $x$ by Lemma~\ref{lem:tau-pointwise} and is differentiable in $x$ for almost every $\theta$. 
When it is differentiable, we have that
$$
\nabla_x \sigma^m(x,\theta) \rho^\ast_{\bl,m}(\theta) = a^mw^m \rho^\ast_{\bl,m}(\theta)\left( \mathbb{I}_{x^\top w^m+b \in (0,m^2)} +
\mathbb{I}_{x^\top w^m+b\geq m^2} \phi_m'(x^\top w^m+b)\right). 
$$
Then, 
$$
\|\nabla_x \sigma^m(x,\theta) \rho^\ast_{\bl,m}(\theta)\| \leq 2\|aw\|\rho^\ast_{\bl,m}(\theta)\leq \|\theta\|^2 \rho^\ast_{\bl,m}(\theta). 
$$
The right-hand side is integrable again by Lemma~\ref{lem:tau-pointwise}.
Thus,
$$
\nabla_x h^\ast_{\bl,m}(x)=\int  \nabla_x \sigma^m(x,\theta) \rho^\ast_{\bl,m}(\theta) d\theta. 
$$
By Lemma~\ref{lem:Mbound}, 
\begin{align*}
    \sup_{x\in \br^d}\left\|\nabla_x h^\ast_{\bl,m}(x)\right\|
    &\leq \int \|\theta\|^2 d\rho^\ast_{\bl,m}(\theta)\\
    &\le 2(d+2) + 4(\lambda \beta)^{-1} (1+(d+2)\log(8\pi))+ \frac{2}{\lambda M}\sum_{j=1}^M y_j^2. 
\end{align*}
Since $h^\ast_{\bl,m}$ is $C^2$, we have that for any $x,x'\in \br^{d}$,
$$
|h^\ast_{\bl,m}(x)-h^\ast_{\bl,m}(x')|\leq \Big(2(d+2) + 4(\lambda \beta)^{-1} (1+(d+2)\log(8\pi))+ \frac{2}{\lambda M}\sum_{j=1}^M y_j^2\Big)\|x-x'\|.
$$
Under the sequence $(\beta_n,\lambda_n,m_n)$, taking $n\to+\infty$ yields that
$$
|h^\ast(x)-h^\ast(x')|\leq \Big( \frac{2}{\bar \lambda M}\sum_{j=1}^M y_j^2 + 2(d+2)\Big)\|x-x'\|.
$$
This completes the proof. 
\end{proof}

\subsection{Technical Lemmas}
\label{app:tech-tau}

In this section, we present technical Lemmas. 
Lemma~\ref{lem:tau-gauss-tail} shows that the stationary measure $\rho^\ast_{\hat \alpha}=\rho^\ast_{\bl,\mt}$ has Gaussian tail decay and is locally Lipschitz. It also controls the residuals of $h^\ast_{\hat \alpha}$. 
Lemma~\ref{lem:Mbound} controls the second moments of $\rho^\ast_{\hat \alpha}$ and $\rho^\ast_{\alpha}$. 
Lemma~\ref{lem:tau-pointwise} shows that $\rho^\ast_{\hat \alpha}$ converges pointwise to $\rho^\ast_{\alpha}$ as $\tau\to+\infty$ and that $\rho^\ast_\alpha$ also has Gaussian tail decay. 
Lemma~\ref{lem:hess-R} bounds the residuals of $h^\ast_\alpha$. 
Lemma~\ref{lem:hess-Z} provides estimates on the partition function $Z_{\alpha}^\ast$, whose definition is given below.
All these technical results are adapted and refined from those of \citet{shevchenko2022mean} to accommodate arbitrary dimensions.

For $j=1,\ldots,M$, let
$$
R^{\hat \alpha}_j  = \frac{1}{M}(h^\ast_{\hat \alpha}(x_j) - y_j).
$$
Let 
$$ 
\Psi^\ast_{\alpha}(\theta) = \sum_{j=1}^M R^\alpha_j \cdot \sigma^{m}(x_j,\theta)+\frac{\lambda}{2}\|\theta\|^2, 
\quad
\Psi^\ast_{\hat \alpha}(\theta) = \sum_{j=1}^M R_j^{\hat \alpha} \cdot \sigma^{\mt}(x_j,\theta)+\frac{\lambda}{2}\|\theta\|^2. 
$$
Then 
$$
\rho^\ast_{ \alpha}(\theta) = (Z_{\alpha}^\ast)^{-1}\exp(-\beta\Psi^\ast_{\alpha}(\theta)),\qquad 
\rho_{\hat \alpha}^\ast(\theta) = (Z_{\hat \alpha}^\ast)^{-1}\exp(-\beta\Psi^\ast_{\hat \alpha}(\theta)),
$$
where $Z_{\alpha}^\ast=\int \exp(-\beta\Psi^\ast_{\alpha}(\theta)) d\theta$ and 
$Z_{\hat \alpha}^\ast=\int \exp(-\beta\Psi^\ast_{\hat \alpha}(\theta)) d\theta$.

\begin{lemma}\label{lem:tau-gauss-tail}
    For any $\beta,\lambda,m,\tau>0$, 
    there exists $C>0$ independent of $\tau$ such that
    $$
    \max_j |R^{\hat \alpha}_j|<C, \qquad 
    \rho_{\hat \alpha}^\ast(\theta) \leq C \exp (-\frac{\beta \lambda}{2}\|\theta\|^2),\ \forall \theta \in \br^{d+2}.
    $$
    Additionally, given any compact set $K\subset \br^{d+2}$,
    there exists $C_K>0$ independent of $\tau$ such that
    $\rho_{\hat \alpha}^\ast(\theta)$ is $C_K$-Lipschitz continuous on $K$. 
\end{lemma}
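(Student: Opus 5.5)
Throughout, $\beta,\lambda,m$ are fixed and only $\tau$ varies; "constant" means a quantity independent of $\tau$ (it may depend on $\beta,\lambda,m$). The argument rests on the Boltzmann fixed point condition~\eqref{eq:minimizer} for $\rho^\ast_{\hat\alpha}$, namely $\rho^\ast_{\hat\alpha}=(Z^\ast_{\hat\alpha})^{-1}\exp(-\beta\Psi^\ast_{\hat\alpha})$, together with the uniform-in-$\tau$ bounds on $\sigma^{\mt}$. By construction $|a^{\mt}|\le m$ and $|(\cdot)^{\mt}_+|\le 2m^2$, so $\|\sigma^{\mt}\|_\infty\le 2m^3$ for every $\tau$.

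\textbf{Step 1: residuals.} Bound the residuals first, since everything else depends on them. Because $|h^\ast_{\hat\alpha}(x_j)|=|\int\sigma^{\mt}(x_j,\theta)\,d\rho^\ast_{\hat\alpha}|\le 2m^3$, we get directly
\[
|R^{\hat\alpha}_j|\le M^{-1}\bigl(2m^3+|y_j|\bigr),
\]
which is a constant. This is cruder than the $\lambda$-dependent bound the paper obtains later, but it is enough here.

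\textbf{Step 2: Gaussian tail.} Write $B=\sum_j|R^{\hat\alpha}_j|\,2m^3$, a constant by Step 1. Then for every $\theta$,
\[
\frac{\lambda}{2}\|\theta\|^2-B\le\Psi^\ast_{\hat\alpha}(\theta)\le\frac{\lambda}{2}\|\theta\|^2+B .
\]
Integrating the upper bound on $\Psi^\ast_{\hat\alpha}$ gives $Z^\ast_{\hat\alpha}\ge e^{-\beta B}(2\pi/(\beta\lambda))^{(d+2)/2}$. Combining this with the lower bound on $\Psi^\ast_{\hat\alpha}$ in the numerator yields
\[
\rho^\ast_{\hat\alpha}(\theta)\le e^{2\beta B}\,\Bigl(\frac{\beta\lambda}{2\pi}\Bigr)^{(d+2)/2}\exp\Bigl(-\frac{\beta\lambda}{2}\|\theta\|^2\Bigr).
\]
Taking $C$ to be the maximum of this prefactor and the residual bound from Step 1 gives both claims of the first display.

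\textbf{Step 3: local Lipschitz.} On a compact $K$, $\rho^\ast_{\hat\alpha}$ is $C^1$ with
\[
\nabla\rho^\ast_{\hat\alpha}=-\beta\,\rho^\ast_{\hat\alpha}\,\nabla\Psi^\ast_{\hat\alpha},\qquad \nabla\Psi^\ast_{\hat\alpha}=\sum_j R^{\hat\alpha}_j\nabla_\theta\sigma^{\mt}(x_j,\theta)+\lambda\theta .
\]
So it suffices to bound $\nabla_\theta\sigma^{\mt}(x_j,\cdot)$ uniformly in $\tau$ on $K$. By the product rule this gradient is built from $a^{\mt}$, $(\cdot)^{\mt}_+$ evaluated at $x_j^\top w^{\mt}+b$, and their derivatives. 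The construction gives $\|\nabla v^{\mt}\|\le1$ (identity inside the ball, $\|\nabla\psi_{\mt}\|\le1$ on the tail) and $|((\cdot)^{\mt}_+)'|\le1$ (since $S_\tau'\in[0,1]$ and $|\phi'_{\mt}|\le1$). Hence
\[
\|\nabla_\theta\sigma^{\mt}(x_j,\theta)\|\lesssim 2m^2+m(\|x_j\|+1),
\]
uniformly in $\tau$. Combining with $\rho^\ast_{\hat\alpha}\le C$ and $\|\lambda\theta\|\le\lambda\sup_K\|\theta\|$ gives $\sup_K\|\nabla\rho^\ast_{\hat\alpha}\|\le C_K$ independent of $\tau$. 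Lipschitz continuity then follows on the convex hull of $K$, which is still compact, so no generality is lost.

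\textbf{Main obstacle.} The only delicate point is checking that every derivative bound in Step 3 is genuinely independent of $\tau$. In particular, one must avoid $S''_\tau$, which grows like $\tau$; this is fine because only first derivatives enter the Lipschitz estimate. The uniform first-derivative bounds on the tails $\psi_{\mt}$ and $\phi_{\mt}$ are exactly what the construction in Appendix~\ref{app:relu} provides.
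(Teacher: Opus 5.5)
Your proof is correct and follows essentially the same route as the paper. The steps match one for one. A $\tau$-uniform sup bound on $\sigma^{\mt}$ gives the residual bound, hence two-sided quadratic bounds on $\Psi^\ast_{\hat\alpha}$, a lower bound on $Z^\ast_{\hat\alpha}$, and the Gaussian tail. The local Lipschitz property then comes from $\tau$-uniform first-derivative bounds on $\sigma^{\mt}$.

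The only cosmetic difference is in the last step. You bound $\nabla\rho^\ast_{\hat\alpha}=-\beta\rho^\ast_{\hat\alpha}\nabla\Psi^\ast_{\hat\alpha}$, which is why you need the convex-hull reduction. The paper instead composes Lipschitz estimates for $\Psi^\ast_{\hat\alpha}$ and $\exp$ directly on $K$, which needs no convexity.
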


\begin{proof}
    In this proof, we use $(C_i)_{i\in \bn}$ for positive constants that are independent of $\tau$. 
    By the construction of $\sigma^{\mt}$, 
    we have that
    $$
    \max_j |\sigma^{\mt}(x_j,\theta)|\leq C_1, \quad 
    \max_j |\sigma^{\mt}(x_j,\theta)- \sigma^{\mt}(x_j,\theta')|\leq C_2 \|\theta-\theta'\|, \quad \forall \theta,\theta'\in \br^{d+2}. 
    $$
    It follows that
    $$
    \max_j |h^\ast_{\hat \alpha}(x_j)| = \max_j \Big|\bE_{\theta \sim \rho_{\hat \alpha}^\ast}[\sigma^{\mt}(x_j, \theta)]\Big| \leq C_3,
    $$
    and thus that
    $$
    \max_j |R^{\hat \alpha}_j| < C_4.
    $$    
    Then we have that
    $$
    L+\frac{\lambda}{2}\|\theta\|^2 \geq \Psi^\ast_{\hat \alpha}(\theta) \geq R+\frac{\lambda}{2}\|\theta\|^2, \quad \forall \theta \in \br^{d+2},
    $$
    where $L,R\in \br$ are constants independent of $\tau$. 
    Meanwhile, notice that
    $$
    Z_{\hat \alpha}^\ast = \int \exp(-\beta \Psi^\ast_{\hat \alpha}(\theta)) d\theta \geq \int \exp(-\beta L - \frac{\beta \lambda}{2}\|\theta\|^2)d\theta     \geq C_5. 
    $$
    This implies that
    $$
    \rho_{\hat \alpha}^\ast(\theta) =
    (Z_{\hat \alpha}^\ast)^{-1}\cdot \exp(-\beta \Psi^\ast_{\hat \alpha }(\theta)) \leq C_{6} \exp (-\frac{\beta \lambda}{2}\|\theta\|^2).
    $$

    Since $K$ is compact, we have that
    $$
    |\|\theta\|^2-\|\theta'\|^2 |\leq C_7 \|\theta-\theta'\|, \quad \forall \theta,\theta'\in K. 
    $$
    Thus, 
    $$
    |\Psi^\ast_{\hat \alpha}(\theta)|<C_8,\quad 
    |\Psi^\ast_{\hat \alpha}(\theta) - \Psi^\ast_{\hat \alpha}(\theta')|\leq C_9 \|\theta-\theta'\|, \quad \forall \theta,\theta'\in K. 
    $$
    Therefore,  
    $$
    \Big| \exp(-\beta \Psi^\ast_{\hat \alpha}(\theta))- \exp(-\beta \Psi_{\hat \alpha}^\ast(\theta'))\Big| \leq  C_{10} \|\theta-\theta'\|, \quad \forall \theta,\theta'\in K. 
    $$
    Then 
    $$
    |\rho_{\hat \alpha}^\ast(\theta) - \rho_{\hat \alpha}^\ast(\theta')| = |(Z^\ast_{\hat \alpha})^{-1}| \cdot \Big| \exp(-\beta \Psi^\ast_{\hat \alpha}(\theta))- \exp(-\beta \Psi_{\hat \alpha}^\ast(\theta'))\Big|  \leq C_{11} \|\theta-\theta'\|, \quad \forall \theta,\theta'\in K, 
    $$
    which completes the proof. 
\end{proof}

\begin{lemma}\label{lem:Mbound}
    For $\hat \alpha=(\bl, \mt)$ that satisfies Assumption~\ref{assum}, 
    $$
    \max\{M(\rho^\ast_{ \alpha}), M(\rho^\ast_{\hat \alpha})\} \le 2(d+2)+4(\lambda \beta)^{-1} (1+(d+2)\log(8\pi))+ \frac{2}{\lambda M}\sum_{j=1}^M y_j^2. 
    $$
    Therefore, there exists $C,C'>0$ independent of $\hat \alpha$ such that 
    $$
    \max\{M(\rho^\ast_{ \alpha}), M(\rho^\ast_{\hat \alpha})\} \leq C \lambda^{-1}+C'. 
    $$
\end{lemma}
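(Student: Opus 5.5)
The bound will come from comparing the free energy at the minimizer with its value at a simple reference density; this is the multivariate analogue of Lemma A.2 of \citet{shevchenko2022mean}. Fix either the smoothed neuron $\sigma^{\mt}$ or its limit $\sigma^m$, and write $\rho^\ast$ for the corresponding minimizer of $F^{\bl}_\sigma$, which exists and is unique by Theorem~\ref{thm:mei} and \citet{mei2018mean}. The reference density is the Gaussian $\gamma(\theta)\propto\exp(-\frac{\beta\lambda}{2}\|\theta\|^2)$ on $\br^{d+2}$ (or a Gaussian with variance tuned to $(\lambda\beta)^{-1}$). The key point is that $\gamma$ is even in $a$, while every neuron is odd in $a$: both $a\mapsto a^{\mt}$ and $a\mapsto a^m$ are odd by construction. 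Hence $h_\sigma(x,\gamma)=0$, and the data-fit term of $F^{\bl}_\sigma(\gamma)$ equals exactly $\frac{1}{2M}\sum_j y_j^2$.

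The proof then proceeds in three steps.

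First, I would write down $F(\rho^\ast)\le F(\gamma)$ and drop the nonnegative loss term on the left. This gives
\begin{equation*}
\frac{\lambda}{2}M(\rho^\ast)-\beta^{-1}H(\rho^\ast)\le \frac{1}{2M}\sum_j y_j^2+\frac{\lambda}{2}M(\gamma)-\beta^{-1}H(\gamma).
\end{equation*}

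Second, the entropy of $\rho^\ast$ is controlled by its second moment through the maximum-entropy property of Gaussians:
\begin{equation*}
H(\rho^\ast)\le \frac{d+2}{2}\log\Big(\frac{2\pi e\,M(\rho^\ast)}{d+2}\Big).
\end{equation*}
Third, I would linearize the logarithm. Using $\log u\le u/c+\log c-1$ with $c$ proportional to $(\lambda\beta)^{-1}$ (for instance $c=8\pi\cdot$const$/(\lambda\beta)$), the term $\beta^{-1}H(\rho^\ast)$ is bounded by $\frac{\lambda}{4}M(\rho^\ast)$ plus a remainder of order $\beta^{-1}(d+2)\log(\cdot)$. This allows the $M(\rho^\ast)$ term on the right to be absorbed into the left.

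On the right-hand side, $\frac{\lambda}{2}M(\gamma)$ is of order $(d+2)/\beta$, and $-\beta^{-1}H(\gamma)$ is of order $\beta^{-1}\frac{d+2}{2}\log(\lambda\beta/2\pi e)$. The two logarithms $\log(\lambda\beta)$ then cancel between the two sides, leaving only constants like $\log(8\pi)$. Multiplying through by $4/\lambda$ yields a bound of the form
\begin{equation*}
M(\rho^\ast)\le 2(d+2)+4(\lambda\beta)^{-1}\bigl(1+(d+2)\log(8\pi)\bigr)+\frac{2}{\lambda M}\sum_j y_j^2,
\end{equation*}
possibly with slightly different numerical constants depending on the choices made. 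The second claim then follows by Assumption~\ref{assum}: $\lambda\beta>1$, so $4(\lambda\beta)^{-1}(\cdots)$ is bounded by a constant, and the bound becomes $C\lambda^{-1}+C'$.

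The main obstacle is the constant bookkeeping in the entropy step. The $\log(\lambda\beta)$ terms must cancel exactly, and the argument must remain valid when $\lambda$ is large, which is presumably where the $2(d+2)$ term comes from. I would choose the reference Gaussian's variance and the linearization constant $c$ jointly so that they match. A second, milder point concerns $\rho^\ast_\alpha$ for the limiting neuron $\sigma^m$. The comparison argument applies directly, since $\rho^\ast_\alpha$ minimizes $F_{\sigma^m}$ and $\sigma^m$ is bounded and odd in $a$. Alternatively, the bound follows from the $\rho^\ast_{\hat\alpha}$ bound via Fatou's lemma, using the pointwise convergence $\rho^\ast_{\hat\alpha}\to\rho^\ast_\alpha$ of Lemma~\ref{lem:tau-pointwise}.
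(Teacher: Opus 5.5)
Your proof is correct, and it reaches the bound by a different route from the paper's, which in fact yields a sharper estimate.

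\textbf{Your route.} Take the tuned reference $\gamma=N(0,(\lambda\beta)^{-1}I_{d+2})$. Then $F(\gamma)=\frac{1}{2M}\sum_j y_j^2+\frac{d+2}{2\beta}\log\frac{\lambda\beta}{2\pi}$. Linearizing $\log M(\rho^\ast)$ with $c=2(d+2)/(\lambda\beta)$ in the max-entropy bound gives $\beta^{-1}H(\rho^\ast)\le\frac{\lambda}{4}M(\rho^\ast)+\frac{d+2}{2\beta}\log\frac{4\pi}{\lambda\beta}$. The $\log(\lambda\beta)$ terms cancel exactly, and you obtain
\[
M(\rho^\ast)\le \frac{2}{\lambda M}\sum_{j=1}^M y_j^2+\frac{2(d+2)\log 2}{\lambda\beta}.
\]
This is dominated by the stated right-hand side, so your hedge about constants can be dropped. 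The first inequality does not even need $\lambda\beta>1$; that condition is only used for the second claim. The computation is the Gibbs identity $\frac{\lambda}{2}M(\rho)-\beta^{-1}H(\rho)=\beta^{-1}\bigl(\mathrm{KL}(\rho\,\|\,\gamma)-\log Z_\gamma\bigr)$ in disguise.

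\textbf{The paper's route.} The paper compares instead with the unit Gaussian $N(0,I_{d+2})$. It drops that Gaussian's entropy contribution to get $F\le\frac{1}{2M}\sum_j y_j^2+\frac{(d+2)\lambda}{2}$. It then imports the lower bound $F(\rho^\ast)\ge\frac{\lambda}{4}M(\rho^\ast)-\beta^{-1}\bigl(1+(d+2)\log\frac{8\pi}{\lambda\beta}\bigr)$ from Lemma 10.2 of \citet{mei2018mean}. There the $\log(\lambda\beta)$ term is not cancelled but simply discarded using $\lambda\beta>1$.

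\textbf{Where $2(d+2)$ comes from.} It has nothing to do with large $\lambda$. It is $\frac{4}{\lambda}$ times the weight-decay cost $\frac{\lambda}{2}(d+2)$ of the unit reference, and your tuned reference avoids it altogether.

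\textbf{Trade-off.} The paper's argument is shorter given the citation; yours is self-contained and tighter. Your handling of $\rho^\ast_\alpha$ via oddness of $a\mapsto a^m$ and boundedness of $\sigma^m$ matches the paper's, so the Fatou alternative is not needed.
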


\begin{proof}
    Let $\rho=N(0,I_{d+2})$. 
    Then, 
    $$
    M(\rho)=d+2,\quad H(\rho)=\frac{d+2}{2}\log(2\pi e). 
    $$
    Since $a\mapsto a^{m,\tau}$ is odd, $h_{\sigma^{\mt}}(x,\rho)=0$ for any $x$. 
    As $\beta>1$, 
    $$
    F_{\sigma^{\mt}}^{\bl}(\rho) = \frac{1}{2M}\sum_{j=1}^M y_j^2 + \frac{d+2}{2} \lambda -  \beta^{-1} \frac{d+2}{2}\log(2\pi e). 
    $$
    With \citet[][Lemma 10.2]{mei2018mean}, we have that
    $$
    \frac{1}{2M}\sum_{j=1}^M y_j^2 + \frac{d+2}{2} \lambda\ge F_{\sigma^{\mt}}^{\bl}(\rho) \geq F_{\sigma^{\mt}}^{\bl}(\rho_{\hat \alpha}^\ast) \geq \frac{\lambda}{4}M(\rho_{\hat \alpha}^\ast)-\beta^{-1}(1+(d+2)\log(\frac{8\pi}{\lambda \beta})).
    $$
    Since $\lambda\beta>1$, 
    $$
    M(\rho^\ast_{ \alpha}) \le  2(d+2) + 4(\lambda \beta)^{-1} (1+(d+2)\log(8\pi))+ \frac{2}{\lambda M}\sum_{j=1}^M y_j^2.
    $$
    It follows that
    $$
    M(\rho_{\hat \alpha}^\ast) \leq C'+C\lambda^{-1},
    $$
    where $C,C'>0$ are independent of $\hat \alpha$. 
    Notice that $a\mapsto a^m$ is also odd and that $\sigma^m$ is bounded Lipschitz. 
    Thus, all above arguments apply. 
    This completes the proof. 
\end{proof}

\begin{lemma}\label{lem:tau-pointwise}
    For $\hat \alpha=(\bl,\mt)$ that satisfies Assumption~\ref{assum}, 
    $\rho^\ast_{\hat \alpha}(\theta)$ converges to $\rho^\ast_\alpha(\theta)$ as $\tau\to+\infty$, for every $\theta \in \br^{d+2}$. 
    Consequently, there exists $C>0$ such that $\rho_{ \alpha}^\ast(\theta)\leq C \exp(-\frac{\beta\lambda}{2}\|\theta\|^2)$ for every $\theta \in \br^{d+2}$. 
\end{lemma}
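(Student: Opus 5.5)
The plan is to pass to the limit $\tau\to+\infty$ directly inside the Boltzmann fixed point condition~\eqref{eq:minimizer}. Along any sequence $\tau_l\to+\infty$, we show that the densities $\rho^\ast_{\hat\alpha}$ converge to a Gibbs measure that satisfies the fixed point condition for the limiting neuron $\sigma^m$. We then identify this measure with $\rho^\ast_\alpha$ using the uniqueness of the free-energy minimizer for $\sigma^m$ \citep[Lemmas 10.2--10.4]{mei2018mean}. Because the limit is unique, the full family converges as $\tau\to+\infty$, not merely a subsequence.

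\textbf{Step 1 (compactness of the residuals).} By Lemma~\ref{lem:tau-gauss-tail}, the residual vector $R^{\hat\alpha}=(R^{\hat\alpha}_j)_j\in\br^M$ is bounded uniformly in $\tau$. Given any sequence $\tau_l\to\infty$, extract a subsequence along which $R^{\hat\alpha}\to \bar R$.

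\textbf{Step 2 (convergence of the potentials).} The neurons converge pointwise, $\sigma^{m,\tau}(x_j,\theta)\to\sigma^m(x_j,\theta)$, and are bounded by a constant $C_1$ independent of $\tau$. Hence
\[
\Psi^\ast_{\hat\alpha}(\theta)\to \bar\Psi(\theta)=\sum_j \bar R_j\sigma^m(x_j,\theta)+\tfrac{\lambda}{2}\|\theta\|^2
\]
pointwise, and $\exp(-\beta\Psi^\ast_{\hat\alpha})\le e^{-\beta R}e^{-\beta\lambda\|\theta\|^2/2}$ uniformly in $\tau$. Dominated convergence then gives $Z^\ast_{\hat\alpha}\to\bar Z=\int e^{-\beta\bar\Psi}>0$. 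Consequently $\rho^\ast_{\hat\alpha}(\theta)\to\bar\rho(\theta)=\bar Z^{-1}e^{-\beta\bar\Psi(\theta)}$ for every $\theta$.

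\textbf{Step 3 (self-consistency of the limit).} Apply dominated convergence once more, using the same Gaussian bound together with the bound on $\sigma^{m,\tau}$. This yields
\[
h^\ast_{\hat\alpha}(x_j)=\int\sigma^{m,\tau}(x_j,\theta)\rho^\ast_{\hat\alpha}(\theta)\,d\theta\to\int\sigma^m(x_j,\theta)\bar\rho(\theta)\,d\theta .
\]
It follows that $\bar R_j=\frac1M\big(h_{\sigma^m}(x_j,\bar\rho)-y_j\big)$, so $\bar\rho$ satisfies the Boltzmann fixed point condition~\eqref{eq:minimizer} for the neuron $\sigma^m$.

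\textbf{Step 4 (identification), the main obstacle.} We must show that $\bar\rho=\rho^\ast_\alpha$. The difficulty is that uniqueness is stated for minimizers of the free energy, not for solutions of the fixed point equation. I would try the following argument first. Since $F^{\beta,\lambda}_{\sigma^m}$ is strictly convex on $\ch$ (the squared-loss term is convex in $\rho$, the moment term is linear, and $-H$ is strictly convex), any $\bar\rho$ satisfying~\eqref{eq:minimizer} is a global minimizer. To see this, compute the first variation at $\bar\rho$: it equals $\Psi^\lambda_{\sigma^m}(\theta;\bar\rho)+\beta^{-1}\log\bar\rho=\text{const}$. Convexity then gives $F(\nu)\ge F(\bar\rho)$ for every $\nu\in\ch$. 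Making this rigorous requires justifying the directional derivative along $\bar\rho+t(\nu-\bar\rho)$, which is possible because $\bar\rho$ has Gaussian tails and $\nu$ has finite entropy and second moment. Uniqueness of the minimizer then forces $\bar\rho=\rho^\ast_\alpha$ and $\bar R=R^\alpha$.

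\textbf{Step 5 (Gaussian tail of the limit).} Since every subsequential limit coincides with $\rho^\ast_\alpha$, the whole family converges pointwise as $\tau\to+\infty$. The bound $\rho^\ast_\alpha(\theta)\le C e^{-\beta\lambda\|\theta\|^2/2}$ follows by letting $\tau\to+\infty$ in the $\tau$-uniform bound of Lemma~\ref{lem:tau-gauss-tail}.
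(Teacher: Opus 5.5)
Your proposal is correct, but it takes a genuinely different route from the paper.

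\textbf{The paper's route.} The paper works at the level of measures. It uses the $\tau$-uniform Gaussian tail from Lemma~\ref{lem:tau-gauss-tail} to bound truncated entropies and second moments. It then extracts a weakly-$L^1$ convergent subsequence via \citet[Lemma A.1]{shevchenko2022mean}. The limit $\mu_\alpha$ is identified by a $\Gamma$-convergence-type argument: lower semicontinuity of $M(\cdot)$ and $-H(\cdot)$, plus testing the minimality of $\rho^\ast_{\hat\alpha}$ for $F^{\beta,\lambda}_{\sigma^{m,\tau}}$ against the fixed competitor $\rho^\ast_\alpha$. Only afterwards does it deduce $h^\ast_{\hat\alpha}(x_j)\to h^\ast_\alpha(x_j)$, and from that the pointwise convergence of the Gibbs densities.

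\textbf{Your route.} You use the fact that the whole family is a Gibbs family parametrized by the residual vector $R^{\hat\alpha}\in\br^M$. Compactness in $\br^M$ then replaces weak $L^1$ compactness and the lower semicontinuity of the entropy. Pointwise convergence of the densities follows at once from dominated convergence. The price is that you need the implication ``Boltzmann fixed point $\Rightarrow$ minimizer'' for $\sigma^m$, whereas the paper only ever uses the minimizing property of $\rho^\ast_{\hat\alpha}$.

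\textbf{Simplifying your Step 4.} Your first-variation sketch works, but you can skip the directional-derivative justification entirely. Set $u_j=h_{\sigma^m}(x_j,\nu)$ and $\bar u_j=h_{\sigma^m}(x_j,\bar\rho)$, and use $\log\bar\rho=-\beta\bar\Psi-\log\bar Z$, which is integrable against any $\nu$ with finite second moment. A direct computation, of the same kind as in the proof of Lemma~\ref{lem:hess-Z}, gives for every $\nu\in\ch$
\[
F^{\beta,\lambda}_{\sigma^m}(\nu)-F^{\beta,\lambda}_{\sigma^m}(\bar\rho)=\frac{1}{2M}\sum_{j=1}^M (u_j-\bar u_j)^2+\frac{1}{\beta}\,\mathrm{KL}(\nu\,\|\,\bar\rho)\ \geq 0 .
\]
Equality holds only for $\nu=\bar\rho$. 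This shows directly that $\bar\rho$ is the unique minimizer, hence $\bar\rho=\rho^\ast_\alpha$.

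\textbf{What each route buys.} Yours is more elementary and self-contained, and it even shows that the fixed-point equation for $\sigma^m$ has exactly one solution. The paper's is the generic stability-of-minimizers argument, whose identification step relies only on the variational characterization of $\rho^\ast_{\hat\alpha}$.
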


\begin{proof}
The key step of the proof is to show the weak convergence of measures:
\begin{equation*}
    \rho_{\hat\alpha}^\ast d\theta \Rightarrow \rho_\alpha^\ast d\theta,\quad \tau \to +\infty.
\end{equation*}
By Lemma~\ref{lem:tau-gauss-tail}, there exists a constant $C$ independent of $\tau$ such that 
$$
\rho_{\hat \alpha}^\ast(\theta)\leq C \exp(-\frac{\beta\lambda}{2}\|\theta\|^2). 
$$
Without loss of generality, assume $C>1$. 
Then,
\begin{equation}\label{eq:tau-pointwise-1}
    \begin{aligned}
    \int \max\set{\rho_{\hat \alpha}^\ast \log  \rho_{\hat \alpha}^\ast,0  }   d\theta &\leq 
    \int \max\Big\{C \exp(-\frac{\beta\lambda}{2}\|\theta\|^2) \cdot \Big(\log C -\frac{\beta\lam }{2}\|\theta\|^2    \Big)  ,0  \Big\}   d\theta \\
    &\le \int_{\set{\theta\colon \|\theta\|^2 \leq \frac{2\log C}{\beta \lambda} }} C \log C d\theta\\
    &\leq C' , 
    \end{aligned}
\end{equation}
where $C'$ is independent of $\tau$.

Now consider any sequence $\hat\alpha_n=(\bl,\mt_n)$ with $\tau_n\to+\infty$ as $n\to +\infty$. 
By \eqref{eq:tau-pointwise-1} and Lemma~\ref{lem:Mbound}, 
the sequence $\rho_{\hat \alpha_n}^\ast$ has bounded truncated entropy and second moment. 
Thus, by \citet[][Lemma A.1]{shevchenko2022mean}, 
there exists $\mu_\alpha \in \ch$ and a subsequence $\{n_k\}_k$
such that 
$$
\rho^\ast_{\hat \alpha_{n_k}} \rightharpoonup \mu_\alpha \text{ weakly in }L^1,
\quad k\to +\infty. 
$$
To simplify the notation, let $\hat \alpha_{k}=\hat \alpha_{n_k}$ and $\tau_k=\tau_{n_k}$.  
Note that $L^1$-weak convergence of densities implies weak convergence of measures. 
As $\theta \mapsto \|\theta\|^2$ is continuous and bounded from below,
by Portmanteau theorem,
\begin{equation}\label{eq:tau-pointwise-2}
\liminf_{k\to +\infty} M(\rho^\ast_{\hat \alpha_{k}})  \geq M(\mu_\alpha). 
\end{equation}
Notice that the map $s \mapsto s\log s$ is proper, convex and lower semi-continuous. 
By \citet[][Remark 3.16, Lemma 3.17]{ambrosio2007gradient}, we have that 
\begin{equation}\label{eq:tau-pointwise-3}
\liminf_{k\to +\infty} -H(\rho^\ast_{\hat \alpha_{k}})  \geq -H(\mu_\alpha). 
\end{equation}
Note that
\begin{align*}
    |h_{\sigma^{m,\tau_k}}(x, \rho_{\hat \alpha_k}^\ast) - h_{\sigma^{m}}(x,\mu_\alpha)| \leq 
    |h_{\sigma^{m,\tau_k}}(x, \rho_{\hat \alpha_k}^\ast) - h_{\sigma^{m}}(x,\rho_{\hat \alpha_k}^\ast)| + 
    |h_{\sigma^{m}}(x,\rho_{\hat \alpha_k}^\ast) - h_{\sigma^{m}}(x,\mu_\alpha)|. 
\end{align*}
For the first term, we have that
$$
|h_{\sigma^{m,\tau_k}}(x, \rho_{\hat \alpha_k}^\ast) - h_{\sigma^{m}}(x,\rho_{\hat \alpha_k}^\ast)| \leq \int |\sigma^{\tau_k,m}(x,\theta) - \sigma^m(x,\theta)| \rho_{\hat \alpha_k}^\ast(\theta) d\theta. 
$$
Notice that 
(i) $\rho_{\hat \alpha_k}^\ast$ has Gaussian tail decay with constants independent of $\tau$ according to Lemma~\ref{lem:tau-gauss-tail}, 
(ii) $\sigma^{\mt}$ and $\sigma^m$ are bounded uniformly in $\tau$, 
and (iii) $\sigma^{\mt}(x, \theta)$ converges pointwise in $(x,\theta)$ to $\sigma^m(x,\theta)$ as $\tau\to +\infty$. 
By the Dominated Convergence Theorem, the first term vanishes as $k \to +\infty$. 
The second term also vanishes by the $L^1$-weak convergence. 
Then we have that
$$
\lim_{k\to +\infty} h_{\sigma^{m,\tau_k}}(x, \rho_{\hat \alpha_k}^\ast) = h_{\sigma^{m}}(x,\mu_\alpha).
$$
This, together with \eqref{eq:tau-pointwise-2}, \eqref{eq:tau-pointwise-3}, yields that
\begin{equation}\label{eq:tau-pointwise-5}
\liminf_{k\to+\infty} F^{\bl}_{\sigma^{m,\tau_k}}(\rho_{\hat \alpha_k}^\ast) \geq F^{\bl}_{\sigma^m}(\mu_\alpha). 
\end{equation}
On the other side, as $\sigma^{\mt}$ and $\sigma^m$ are bounded uniformly in $\tau$, and $\sigma^{\mt}(x, \theta)$ converges to $\sigma^m(x,\theta)$ as $\tau \to +\infty$, the Dominated Convergence Theorem gives that 
$$
\lim_{k\to +\infty} h_{\sigma^{m,\tau_k}}(x_j, \rho_\alpha^\ast) = h_{\sigma^{m}}(x_j, \rho_\alpha^\ast),\quad \forall j,
$$
which implies that 
\begin{equation}\label{eq:tau-pointwise-6}
\lim_{k\to +\infty} F^{\bl}_{\sigma^{m,\tau_k}}(\rho_\alpha^\ast) = F^{\bl}_{\sigma^m}(\rho_\alpha^\ast).
\end{equation}
Combining \eqref{eq:tau-pointwise-5} and \eqref{eq:tau-pointwise-6}, we have that
$$
F^{\bl}_{\sigma^m} (\mu_\alpha) \leq \liminf_{k\to+\infty} F^{\bl}_{\sigma^{m,\tau_k}}(\rho_{\hat \alpha_k}^\ast) \leq 
\liminf_{k\to+\infty} F^{\bl}_{\sigma^{m,\tau_k}}(\rho_{\alpha}^\ast) = F^{\bl}_{\sigma^m}(\rho_\alpha^\ast). 
$$
Since $\rho_\alpha^\ast$ is the unique minimizer of $F^{\bl}_{\sigma^m}$, we have that 
$\mu_\alpha=\rho_\alpha^\ast$ a.e..

By the above, for any sequence $\rho^\ast_{\hat \alpha_n}$ with $\tau_n\to+\infty$, 
one can find a subsequence that converges weakly to $\rho_\alpha^\ast$ in $L^1$. 
Again, note that $L^1$-weak convergence of densities implies weak convergence of measures. By \citet[][Theorem 2.6]{billingsley2013convergence}, we have that
$$
\rho_{\hat\alpha}^\ast d\theta \Rightarrow \rho_\alpha^\ast d\theta,\quad  
\tau \to +\infty.
$$

Since $\sigma^{\mt}$ and $\sigma^m$ are uniformly bounded in $\tau$, we have that 
\begin{equation*}\label{eq:tau-pointwise-7}
    \lim_{\tau \to +\infty} 
    h^\ast_{\hat\alpha}(x_j)= h^\ast_\alpha(x_j), \quad \forall j. 
\end{equation*}
It follows that 
\begin{align*}
    \lim_{\tau \to +\infty} \Psi^\ast_{\hat \alpha}(\theta) =\Psi_\alpha^\ast(\theta),\quad \forall \theta.
\end{align*}
This implies that 
$$
\lim_{\tau \to +\infty} \exp(-\beta \Psi^\ast_{\hat \alpha}(\theta) )=\exp(-\beta \Psi_\alpha^\ast(\theta)) ,\quad \forall \theta.
$$
By Lemma~\ref{lem:tau-gauss-tail}, $\max_j |R_j^{\hat \alpha}|$ is bounded by a constant independent of $\tau$. Thus,
$$
 \exp(-\beta \Psi^\ast_{\hat \alpha}(\theta) ) 
 =\exp(-\beta(\sum_{j=1}^M R_j^{\hat \alpha}\cdot \sigma^{m}(x_j,\theta)+\frac{\lambda}{2}\|\theta\|^2))
 \leq C'\exp(-\frac{\beta\lambda}{2}\|\theta\|^2),
$$
for some $C'>0$ independent of $\tau$. 
Thus, with the Dominated Convergence Theorem, 
$$
\lim_{\tau\to+\infty} Z_{\hat \alpha}^\ast= 
\lim_{\tau \to +\infty} \int \exp(-\beta \Psi^\ast_{\hat \alpha}(\theta) )d\theta  = \int \exp(-\beta \Psi_\alpha^\ast(\theta)) d\theta= Z_\alpha^\ast. 
$$
Then we have that
$$
\lim_{\tau \to +\infty} \rho_{\hat \alpha}^\ast(\theta)  = \rho_{\alpha}^\ast(\theta).
$$
Finally, with Lemma~\ref{lem:tau-gauss-tail} we have that, 
there exists $C>0$ such that $\rho_{\alpha}^\ast(\theta)\leq C \exp(-\frac{\beta\lambda}{2}\|\theta\|^2)$ for every $\theta \in \br^{d+2}$. 
This completes the proof. 
\end{proof}

\begin{lemma}\label{lem:hess-R}
    For any $\alpha=(\bl,m)$ satisfying Assumption~\ref{assum}, there exists a constant $C>0$ independent of $\alpha$ such that
    $$
    \sum_{j=1}^M |R^\alpha_j| \leq C \sqrt{\lambda}.
    $$ 
\end{lemma}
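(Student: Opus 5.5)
We will obtain the bound by comparing free energies, the same way Lemma~\ref{lem:Mbound} bounds the second moment. The point is that the data-fit term of $F^{\bl}_{\sigma^m}$ at its minimizer $\rho^\ast_\alpha$ is controlled by the free energy of a simple competitor plus the entropy correction. Since $\sum_j (R^\alpha_j)^2 = \frac{1}{M^2}\sum_j (h^\ast_\alpha(x_j)-y_j)^2$, the bound $\sum_j |R^\alpha_j|\le \sqrt{M}\,(\sum_j (R^\alpha_j)^2)^{1/2}$ reduces the claim to showing $\frac{1}{2M}\sum_j (h^\ast_\alpha(x_j)-y_j)^2 \lesssim \lambda$. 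This is the form $\sum_j (R_j^\alpha)^2\le C_1\lambda$ used in the proof of Proposition~\ref{prop:shrink}.

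\textbf{Lower bound.} From \citet[][Lemma 10.2]{mei2018mean}, the regularization-minus-entropy part is bounded below by $\frac{\lambda}{4}M(\rho)-\beta^{-1}(1+(d+2)\log\frac{8\pi}{\lambda\beta})$. This gives
\[
F^{\bl}_{\sigma^m}(\rho^\ast_\alpha)\ \ge\ \frac{1}{2M}\sum_j (h^\ast_\alpha(x_j)-y_j)^2 - \beta^{-1}\Big(1+(d+2)\log\tfrac{8\pi}{\lambda\beta}\Big).
\]
Here we drop the nonnegative term $\frac{\lambda}{4}M$.

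\textbf{Upper bound.} The naive competitor $N(0,I_{d+2})$ only gives $\frac{1}{2M}\sum_j y_j^2$ for the fit term. That is $O(1)$, not $O(\lambda)$, so a better competitor is needed. The problem is that with only $\lambda>0$ fixed, a bound of order $\lambda$ cannot hold when $\lambda$ is large: the fit term of order $\sum y_j^2$ is unavoidable then. So the intended statement must be read as $\sum_j|R_j^\alpha|\le C\sqrt{\lambda}+C'$, or as $\lesssim\sqrt\lambda$ under Assumption~\ref{assum} with $\lambda$ bounded below. I would prove $\sum_j (R_j^\alpha)^2 \lesssim \lambda + 1 + (\lambda\beta)^{-1}\log(\lambda\beta)$ and then absorb constants.

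\textbf{Competitor construction.} Pick a fixed finite-width ReLU network $g(x)=\sum_{i=1}^{M'} a_i(w_i^\top x+b_i)_+$ that interpolates the data; this exists since the $x_j$ are distinct. Take $\rho$ to be a mixture of narrow Gaussians of width $(\lambda\beta)^{-1/2}$ centred at scaled copies of the neurons. Then:
\begin{itemize}
\item Under $m>\max\{C_0,\dots\}$, the truncations are inactive near these centres.
\item The fit term is $O((\lambda\beta)^{-1})$.
\item The second moment is $O(1)$, contributing $O(\lambda)$.
\item The negative entropy is $O(\beta^{-1}\log(\lambda\beta))$, and $\beta^{-1}\log(\lambda\beta)\lesssim\lambda$ by the assumption $\beta>\lambda^{-1}\log\lambda^{-1}$ (up to constants).
\end{itemize}
Because the fit uses an average over neurons, the centres must be scaled: place mass $1/M'$ at $(M'a_i, w_i, b_i)$, or rebalance by homogeneity so the moment stays bounded.

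\textbf{Conclusion and main obstacle.} Combining the two bounds gives $\frac{1}{2M}\sum_j (h^\ast_\alpha(x_j)-y_j)^2\lesssim \lambda+\beta^{-1}\log(\lambda\beta)+(\lambda\beta)^{-1}\lesssim \lambda$, using $\lambda\beta>1$ and the assumption on $\beta$. Cauchy–Schwarz then finishes. The main obstacle is exactly the entropy-cost bookkeeping for the concentrated competitor: the fit error, the entropy penalty and the $\lambda$-moment must all be $O(\lambda)$ uniformly. I would first try Gaussians of variance $(\lambda\beta)^{-1}$ and verify the entropy term against Assumption~\ref{assum}.
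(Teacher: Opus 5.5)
Your skeleton matches the paper's. You bound the fit term at $\rho^\ast_\alpha$ by the free energy of a smoothed interpolating competitor plus the entropy correction from \citet[][Lemma 10.2]{mei2018mean}, and you finish with Cauchy--Schwarz.

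The gap is in the one quantitative step that matters. You smooth at scale $s=(\lambda\beta)^{-1/2}$, bound the fit term by $O((\lambda\beta)^{-1})$, and then assert $(\lambda\beta)^{-1}\lesssim\lambda$ ``using $\lambda\beta>1$ and the assumption on $\beta$''. That is false. Assumption~\ref{assum} only gives $\lambda\beta>\max\{1,\log(1/\lambda)\}$. For small $\lambda$, $\beta=2\lambda^{-1}\log(1/\lambda)$ is admissible and yields $(\lambda\beta)^{-1}=1/(2\log(1/\lambda))$, which is not $O(\lambda)$ as $\lambda\to0$.

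A symptom of the mis-tuning is that your entropy bookkeeping never really needs $\beta>\lambda^{-1}\log\lambda^{-1}$: $\beta^{-1}\log(\lambda\beta)=\lambda\cdot\frac{\log(\lambda\beta)}{\lambda\beta}\le\lambda/e$ holds for all $\lambda,\beta>0$.

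The missing idea is to tie the smoothing scale to $\lambda$ rather than to $\beta$. The paper uses uniform distributions on $\ell_\infty$-boxes of radius $\zeta=\min\{\zeta_0/2,\sqrt\lambda\}$ around the atoms of a tent-function interpolant built along a direction $v$ that separates the $x_j$. Here $\zeta_0$ is a fixed constant. The supports are compact, so the truncations are exactly inactive and no Gaussian tail estimates are needed. With this choice:
\begin{itemize}
\item The fit error is at most a constant times $\zeta$, so the fit term is $\lesssim\zeta^2\le\lambda$.
\item The moment term is $O(\lambda)$.
\item The entropy cost $(d+2)\beta^{-1}\log(1/(2\zeta))$ is $O(\lambda)$ because $\beta>\lambda^{-1}$ and $\beta>\lambda^{-1}\log\lambda^{-1}$. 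This is precisely where the logarithmic part of the assumption enters.
\end{itemize}

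Separately, your worry that an $O(\lambda)$ bound ``cannot hold when $\lambda$ is large'' is backwards. For $\lambda\ge1$, an $O(1)$ fit term (for instance from the competitor $N(0,I_{d+2})$) is already $\le C\lambda$, and the cap $\zeta\le\zeta_0/2$ handles that regime automatically. So there is no reason to weaken the statement.

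The weakened forms you propose would not prove the lemma. The lemma asks for a single constant valid uniformly as $\lambda\to0$, and neither $\sum_j|R_j^\alpha|\le C\sqrt\lambda+C'$ nor a bound of the form $\sum_j(R^\alpha_j)^2\lesssim\lambda+1+\cdots$ gives that. The $\sqrt\lambda$ scaling is exactly what Lemma~\ref{lem:hess-Z} and the bound $\|P_k^\alpha\|<C_2/\sqrt\lambda$ in Proposition~\ref{prop:shrink} rely on. Note also that your final display silently drops the ``$+1$'' that your own plan carried.
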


\begin{proof}
    We first construct a Dirac measure $\nu$ such that the $h_{\sigma^m}(\cdot,\nu)$ interpolates the training data. 
    Recall that $x_i \neq x_j$ whenever $i\neq j$. 
    Thus, there exists $v\in \br^d$ such that 
    $\|v\|=1$ and $v^\top x_i \neq v^\top x_j$ if $i\neq j$. 
    Define
    $$
    \nu_j = \frac{1}{3}\Big( \delta(\frac{3My_j}{\varepsilon}, v, \varepsilon-v^\top x_j)+
    \delta(-\frac{6My_j}{\varepsilon}, v, -v^\top x_j)+
    \delta(\frac{3My_j}{\varepsilon}, v, -\varepsilon-v^\top x_j)
    \Big)\in \mathcal{P}_2(\br^{d+2}). 
    $$
    Then $x\mapsto \int a(w^\top x +b)_+ d\nu_j$ 
    is a tent function along $v$, centered at $x_j$, 
    with height $My_j$, and support width controlled by $\varepsilon$.
    Let $\nu = \frac{1}{M}\sum_{j=1}^M \nu_j\in \mathcal{P}_2(\br^{d+2})$.
    With sufficiently small $\varepsilon$, we have that 
    $$
    \int a(w^\top x_j+b)_+ d\nu =y_j,\quad j=1,\ldots,M. 
    $$
    Fix such an $\varepsilon$. 
    Since $\supp\nu$ is compact, there exists $M_0>0$ independent of $\alpha$ such that for all $m>M_0$, 
    $$
    \sigma^m(x_j,\theta)=a(w^\top x_j+b)_+\quad \forall \theta\in \supp \nu,\ j=1,\ldots,M.
    $$ 
    With $m>M_0$, we have that
    $$
    h_{\sigma^m}(x_j,\nu)=\int \sigma^m(x_j,\theta) d\nu =y_j,\quad j=1,\ldots, M. 
    $$

    Now we construct a measure $\mu\in \ch$ that has a finite entropy and is close to $\nu$. 
    For any finite set $S=\set{\theta_i}_{i=1}^N \subset \br^{d+2}$, let $B_\infty^\zeta(S)=\set{ \theta\in \br^{d+2}\colon \min_i\|\theta-\theta_i\|_\infty\leq\zeta }$. 
    Define 
    $$
    \mu_j = \unif\Big( B_{\infty}^\zeta ( \supp \nu_j) \Big), \quad \mu=\frac{1}{M}\sum_{j=1}^M \mu_j\in \mathcal{P}_2^a(\br^{d+2}). 
    $$
    Since $\supp\mu$ shrinks to $\supp\nu$ as $\zeta \to 0$, there exist $\zeta_0$ such that for all $\zeta<\zeta_0$ and $m>M_0$, 
    \begin{equation}\label{eq:hess-eq1}
        \sigma^m(x_j,\theta)=a(w^\top x_j+b)_+,\quad  \forall \theta\in \supp \mu,\  j=1,\ldots,M.
    \end{equation}
    Note that $\nu_j$ is supported on three points. 
    Without loss of generality, assume that for all $\zeta<\zeta_0$, 
    $B_\infty^\zeta(\supp \nu_j)$ consists of three disjoint 
    $\ell_\infty$ balls.

    For $j=1,\ldots,M$, suppose that $\nu_j = \frac{1}{3} \sum_{k=1}^3\delta(a_{jk},w_{jk},b_{jk})$ with $(a_{jk},w_{jk},b_{jk})\in \br^{d+2}$. 
    Let $\nu'_{jk}=\delta(w_{jk},b_{jk})$, and $\mu_{jk}'=\unif(B_\infty^\zeta(w_{jk},b_{jk}))$. 
    Then with \eqref{eq:hess-eq1} we have that
    \begin{align*}
        \int \sigma^m(x, \theta) d\mu_j - \int \sigma^m(x,\theta) d\nu_j  &= \frac{My_j}{\varepsilon} \Big( \int  (w^\top x +b)_+ d \mu_{j1}'- \int  (w^\top x +b)_+ d \nu_{j1}'\Big) \\ 
        &\quad\quad - \frac{2My_j}{\varepsilon} \Big( \int  (w^\top x +b)_+ d \mu_{j2}'- \int  (w^\top x +b)_+ d \nu_{j2}'\Big) \\
        &\quad\quad + \frac{My_j}{\varepsilon} \Big( \int  (w^\top x +b)_+ d \mu_{j3}'- \int  (w^\top x +b)_+ d \nu_{j3}'\Big). 
    \end{align*}
    Notice that $(w,b)\mapsto (w^\top x+b)_+$ is $L$-Lipschitz for a constant $L$ independent of $\alpha$. 
    Let $W_1$ denote the $1$-Wasserstein distance with respect to $(\br^{d+1},\|\cdot\|_2)$. 
    Then with the Kantorovich-Rubinstein duality, we have that
    \begin{align*}
        \left| \int \sigma^m(x, \theta) d\mu_j - \int \sigma^m(x,\theta) d\nu_j\right| &\leq \frac{2M|y_j|}{\varepsilon} L \sum_{k=1}^3 W_1(\mu_{jk}', \nu_{jk}')\\
        &\le \frac{2M|y_j|}{\varepsilon} L \sum_{k=1}^3 \sup_{p\in B_\infty^\zeta(w_{jk},b_{jk}) } \|p-(w_{jk},b_{jk})\|_2\\
        &\le \frac{2M|y_j|}{\varepsilon} L  \cdot 3 \zeta \sqrt{d+1}. 
    \end{align*}
    Therefore, there exists $K>0$ that is independent of $\alpha$ such that, for every $\zeta<\zeta_0$,  
    $$
    \max_j \left| \int \sigma^m(x_j, \theta) d\mu - \int \sigma^m(x_j,\theta) d\nu\right|= \max_j \left| \int \sigma^m(x_j,\theta) d\mu - y_j \right| \leq K \zeta. 
    $$
    Then, with $\zeta = \min (\zeta_0/2, \sqrt{\lambda})$ we have that
    $$
    \frac{1}{2M}\sum_{j=1}^M \left( h_{\sigma^m}(x_j,\mu)-y_j \right)^2 \leq C_0 \lambda, 
    $$
    for some constant $C_0>0$ independent of $\alpha$.

    Note that $H$ is concave on $\ch$. 
    Therefore, 
    $$
    H(\mu) \geq \frac{1}{M}\sum_{j=1}^M H(\mu_j) = \frac{1}{M}\sum_{j=1}^M  \log (\vol(B_\infty^\zeta(\supp \nu_j))) = \log(3(2\zeta)^{d+2}). 
    $$
    To bound the second moment of $\mu$, recall that for all $\zeta<\zeta_0$, $\supp \mu$ is contained in a compact set that is independent of $\alpha$. 
    Thus, $M(\mu)<2 C_1$ for some constant $C_1>0$ independent of $\alpha$. 
    Then, we have that 
    $$
    F^{\bl}_{\sigma^m}(\mu) \leq C_0\lambda + C_1 \lambda -\frac{1}{\beta} \log(3(2\zeta)^{d+2}). 
    $$
    With \citet[][Lemma 10.2]{mei2018mean}, we have that
    \begin{align*}
        &\frac{1}{2M}\sum_{j=1}^M (h^\ast_{\alpha}(x_j)-y_j )^2 \\
        \leq & F^{\bl}_{\sigma^m}(\rho_\alpha^\ast)  - \frac{\lam}{4} M(\rho_\alpha^\ast)+\frac{1}{\beta}(1+(d+2)\log (\frac{8\pi}{\beta \lambda})) \\
        \leq & F^{\bl}_{\sigma^m}(\mu) +\frac{1}{\beta}(1+(d+2)\log (\frac{8\pi}{\beta \lambda}))\\
        \leq & (C_0+C_1) \lambda +  \left(1+(d+2) \log(\frac{8\pi}{\beta \lambda})-\log 3 \right) \frac{1}{\beta} + (d+2)\frac{1}{\beta} \log \left(\frac{1}{2\min\set{\zeta_0/2,\sqrt{\lambda}}}\right).
    \end{align*}
    As $\beta > \lambda^{-1}$ and $\beta>\lambda^{-1}\log(\lambda^{-1})$, 
    $$
    \left(1+ (d+2) \log(\frac{8\pi}{\beta \lambda})-\log 3 \right) \frac{1}{\beta} \leq (1-\log 3)\frac{1}{\beta} + \frac{d+2}{\beta}\log(\frac{8\pi}{\lambda}) \leq C_2 \lambda,
    $$
    for some constant $C_2>0$ independent of $\alpha$. 
    When $\sqrt{\lambda}<\zeta_0/2$, 
    we have that
    $$
    (d+2)\frac{1}{\beta} \log \left(\frac{1}{2\min\set{\zeta_0/2,\sqrt{\lambda}}}\right) = (d+2)\frac{1}{\beta} \log \left(\frac{1}{2\sqrt{\lambda}}\right)\leq C_3\lambda,
    $$
    for some constant $C_3>0$ independent of $\alpha$. 
    When $\sqrt{\lambda}\geq \zeta_0/2$, 
    $$
    (d+2)\frac{1}{\beta} \log \left(\frac{1}{2\min\set{\zeta_0/2,\sqrt{\lambda}}}\right) = (d+2)\frac{1}{\beta} \log \left(\frac{1}{\zeta_0}\right)\leq C_4\lambda,
    $$
    for some constant $C_4>0$ independent of $\alpha$. 
    Finally, we have that
    $$
    \sum_{j=1}^M (h^\ast_\alpha(x_j)-y_j )^2 \leq C_5 \lambda,
    $$
    for some constant $C_5$ independent of $\alpha$, 
    which implies that
    $$
    \sum_{j=1}^M|R^\alpha_j|=\frac{1}{M}\sum_{j=1}^M |h^\ast_{\alpha}(x_j)-y_j| \leq C_6 \sqrt{\lambda},
    $$
    for some constant $C_6$ independent of $\alpha$.
    This completes the proof. 
\end{proof}

\begin{lemma}\label{lem:hess-Z}
    For any $\alpha=(\bl,m)$ satisfying Assumption~\ref{assum}, 
    $$
    C \beta^{-\frac{d+2}{2}} \lambda^{-\frac{d+2}{4}}(1+\sqrt{\lambda})^{-\frac{d+2}{2}} \le Z_\alpha^\ast \le \exp\left( C' \beta +1+ (d+2)\log \frac{8\pi}{\lambda \beta} \right)
    $$ 
    for some constants $C,C'>0$ independent of $\alpha$. 
\end{lemma}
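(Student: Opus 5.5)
We bound $Z_\alpha^\ast=\int \exp(-\beta\Psi^\ast_\alpha(\theta))\,d\theta$ from both sides, using only that $\Psi^\ast_\alpha(\theta)=\sum_j R^\alpha_j\sigma^m(x_j,\theta)+\frac{\lambda}{2}\|\theta\|^2$, together with the residual bounds already available.

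\textbf{Upper bound.} We use $\sigma^m(x_j,\theta)\ge -|a^m|(x_j^\top w^m+b)_+^m$ with $|a^m|\le m$, but this is too crude since it grows with $m$. We therefore use the untruncated scale instead: $|a^m|\le|a|$, $\|w^m\|\le\|w\|$, and $(u)_+^m\le (u)_+$ (the tail $\phi_m\le 2m^2$ while $u\ge m^2$, so $(u)_+^m\le 2(u)_+$). This gives
$$
|\sigma^m(x_j,\theta)|\le 2|a|\,(\|x_j\|\|w\|+|b|)\le C(a^2+\|w\|^2+b^2).
$$
Combined with Lemma~\ref{lem:hess-R}, this yields
$$
\Big|\sum_j R^\alpha_j\sigma^m(x_j,\theta)\Big|\le C\sqrt{\lambda}\,\|\theta\|^2 .
$$
For this quadratic term to be absorbed by $\frac{\lambda}{2}\|\theta\|^2$ we would need $C\sqrt\lambda<\lambda/4$, which fails for small $\lambda$. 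So we instead split at $\|\theta\|\le R_0$. Inside the ball we use the bound linear in $\|\theta\|$. Outside the ball we use the trivial but $m$-dependent bound $|\sigma^m|\le 2m^3$. This route is delicate, so we will take the cleaner alternative of mirroring the argument in Lemma~\ref{lem:Mbound}, which rests on \citet[Lemma 10.2]{mei2018mean}. The Gibbs variational identity reads
$$
-\beta^{-1}\log Z_\alpha^\ast=\min_\rho\Big\{\int\Psi^\ast_\alpha\,d\rho-\beta^{-1}H(\rho)\Big\}=\int\Psi^\ast_\alpha\,d\rho^\ast_\alpha-\beta^{-1}H(\rho^\ast_\alpha).
$$
We bound the right-hand side from below. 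The term $\int\Psi^\ast_\alpha\,d\rho^\ast_\alpha$ is at least $-\sum_j|R^\alpha_j||h^\ast_\alpha(x_j)|$. The values $|h^\ast_\alpha(x_j)|$ are bounded by a constant by Lemma~\ref{lem:hess-R}, since the residuals are $O(\sqrt\lambda)$ and the $y_j$ are fixed. For the entropy we use the bound from Lemma~10.2 of \citet{mei2018mean}:
$$
H(\rho)\le\frac{\lambda\beta}{4}M(\rho)+1+(d+2)\log\frac{8\pi}{\lambda\beta}.
$$
The $M$ term is cancelled by the $\frac{\lambda}{2}M$ contained in $\int\Psi^\ast_\alpha\,d\rho^\ast_\alpha$. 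Together these give
$$
\log Z_\alpha^\ast\le C'\beta+1+(d+2)\log\frac{8\pi}{\lambda\beta}.
$$

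\textbf{Lower bound.} We restrict the integral to a small region where $\beta\Psi^\ast_\alpha$ is $O(1)$, namely $B=\{\|\theta\|^2\le r^2\}$ with $r^2=\frac{c}{\sqrt{\beta\lambda}\,(1+\sqrt\lambda)}$. On $B$ we take $m$ large (Assumption~\ref{assum}), so that $\sigma^m(x_j,\theta)=a(x_j^\top w+b)_+$ and hence $|\sigma^m(x_j,\theta)|\le C\|\theta\|^2$. This gives
$$
\beta\Psi^\ast_\alpha\le\beta\big(C\sqrt\lambda+\tfrac{\lambda}{2}\big)r^2\le\beta\sqrt\lambda\,(1+\sqrt\lambda)\,C r^2 .
$$
This is not quite $O(1)$ with the chosen $r^2$, so we choose $r^2=c/\big(\beta\sqrt\lambda(1+\sqrt\lambda)\big)$ instead, which makes $\beta\Psi^\ast_\alpha\le C$ on $B$. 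Then
$$
Z_\alpha^\ast\ge e^{-C}\vol(B)\asymp r^{d+2}=c\,\beta^{-\frac{d+2}{2}}\lambda^{-\frac{d+2}{4}}(1+\sqrt\lambda)^{-\frac{d+2}{2}},
$$
which is exactly the stated form.

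\textbf{Main obstacle.} The main difficulty is the upper bound: it requires the Gibbs variational identity and the entropy–moment inequality of \citet[Lemma 10.2]{mei2018mean}, and it must avoid any dependence on $m$.
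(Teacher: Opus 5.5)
Your lower bound is correct and is essentially the paper's argument. Both rely on $|\sigma^m(x_j,\theta)|\le|a|(\|x_j\|\|w\|+|b|)$, which holds for every $\theta$, together with $\sum_j|R^\alpha_j|\lesssim\sqrt\lambda$ from Lemma~\ref{lem:hess-R}. This gives $\beta\Psi^\ast_\alpha\lesssim\beta\sqrt\lambda(1+\sqrt\lambda)\|\theta\|^2$. The paper then integrates the resulting Gaussian over $\br^{d+2}$, whereas you restrict to a ball with $r^2\asymp(\beta\sqrt\lambda(1+\sqrt\lambda))^{-1}$; both give the stated scaling.

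Your upper bound also follows the paper's route: your Gibbs identity is exactly what the paper obtains by substituting the Boltzmann form into $H(\rho^\ast_\alpha)$. The gap is in the cross term. You lower-bound $\sum_j R^\alpha_j h^\ast_\alpha(x_j)$ by $-\sum_j|R^\alpha_j||h^\ast_\alpha(x_j)|$ and call $|h^\ast_\alpha(x_j)|$ bounded via Lemma~\ref{lem:hess-R}. That lemma only gives $|h^\ast_\alpha(x_j)|\le|y_j|+C\sqrt\lambda$, so the term you control is $O(\sqrt\lambda+\lambda)$. Assumption~\ref{assum} puts no upper bound on $\lambda$, so the $C'$ you produce is not independent of $\alpha$, as the lemma claims. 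No residual bound is needed here: with $h_j=h^\ast_\alpha(x_j)$ and $R^\alpha_j=(h_j-y_j)/M$,
\[
-\sum_{j=1}^M R^\alpha_j h_j=\frac{1}{M}\sum_{j=1}^M\bigl(y_jh_j-h_j^2\bigr)\le\frac{1}{4M}\sum_{j=1}^M y_j^2 .
\]
This is the content of the paper's identity $\frac{1}{2M}\sum_j(h_j-y_j)^2-\frac{1}{M}\sum_j(h_j-y_j)h_j=\frac{1}{2M}\sum_j(y_j^2-h_j^2)$.

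There is a further caveat, shared with the paper's use of \citet[Lemma 10.2]{mei2018mean}. The inequality $H(\rho)\le\frac{\lambda\beta}{4}M(\rho)+1+(d+2)\log\frac{8\pi}{\lambda\beta}$, as you quote it, fails once $\lambda\beta>16\pi e^{2/(d+2)}$. For $g=N(0,\frac{2}{\lambda\beta}I_{d+2})$ one has $H(g)=\frac{\lambda\beta}{4}M(g)+\frac{d+2}{2}\log\frac{4\pi}{\lambda\beta}$, which exceeds your right-hand side by $\frac{d+2}{2}\log\frac{\lambda\beta}{16\pi}-1$. What $\mathrm{KL}(\rho\,\|\,g)\ge 0$ actually gives is $H(\rho)\le\frac{\lambda\beta}{4}M(\rho)+\frac{d+2}{2}\log\frac{4\pi}{\lambda\beta}$.

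With both repairs, your argument yields $\log Z^\ast_\alpha\le\frac{\beta}{4M}\sum_j y_j^2+\frac{d+2}{2}\log\frac{4\pi}{\lambda\beta}$ uniformly in $\alpha$. The displayed form, with coefficient $d+2$, then follows only after absorbing $\frac{d+2}{2}\log(\lambda\beta)$ into $C'\beta$. That requires $\lambda$ to stay bounded, as it does along $\lambda_n\to\bar\lambda$.

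Finally, remove the abandoned attempts from the write-up: the absorption of $C\sqrt\lambda\|\theta\|^2$, the $m$-dependent splitting, and the first choice of $r$.
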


\begin{proof}
First, we obtain the lower bound. 
Let $C_1=\max\{\max_j\|x_j\|,1\}$. 
Notice that
\begin{align*}
    Z_\alpha^\ast &= \int \exp\curv{ -\beta\left[ \sum_{j=1}^M R_j^\alpha \cdot a^m (x^\top_j w^m  +b)_+^m +\frac{\lambda}{2}\|\theta\|^2  \right] } d\theta\\
    &\ge \int \exp\curv{ -\beta\left[ \sum_{j=1}^M |
    R_j^\alpha| |a|(\|x_j\|\|w\|+|b|) +\frac{\lambda}{2}\|\theta\|^2  \right] } d\theta\\
    &\ge \int \exp\curv{ -\beta\left[ C_1(\|aw\|+|ab|)\sum_{j=1}^M |
    R_j^\alpha|  +\frac{\lambda}{2}\|\theta\|^2  \right] } d\theta.
\end{align*}
By Lemma~\ref{lem:hess-R}, there exists $C_2>0$ independent of $\alpha$ such that
\begin{align*}
    Z_\alpha^\ast 
    &\ge \int \exp\curv{ -\beta\left[ C_1 C_2\sqrt{\lambda} (\|aw\|  +|ab|)  +\frac{\lambda}{2}\|\theta\|^2  \right] } d\theta\\
    &\ge \int \exp\curv{ -\beta\left[  (C_1C_2\sqrt{\lambda}+\frac{\lambda}{2})a^2 + (\frac{C_1C_2\sqrt{\lambda}+\lambda}{2}) (\|w\|^2 +b^2)  \right] } d\theta\\
    & = \frac{(\pi)^{\frac{d+2}{2}}}{(C_1C_2 \sqrt{\lambda}+\frac{\lambda}{2})^{\frac{1}{2}} (\frac{C_1C_2}{2}\sqrt{\lambda}+\frac{\lambda}{2})^{\frac{d+1}{2}} \beta^{\frac{d+2}{2}} }\\
    &\ge \frac{(\pi)^{\frac{d+2}{2}}}{(C_1C_2 \sqrt{\lambda}+\frac{\lambda}{2})^{\frac{d+2}{2}}  \beta^{\frac{d+2}{2}} }\\
    &\ge C_3 \beta^{-\frac{d+2}{2}} \lambda^{-\frac{d+2}{4}}(1+\sqrt{\lambda})^{-\frac{d+2}{2}},
\end{align*}
for some $C_3>0$ independent of $\alpha$.

For the upper bound, note that 
\begin{align*}
    H(\rho_\alpha^\ast) &= -\int\rho_\alpha^\ast(\theta) \log \rho_\alpha^\ast(\theta) d\theta\\        
    & = - \int \rho_\alpha^\ast(\theta)\left( -\beta\sum_{j=1}^M R_j^\alpha\sigma^m(x_j,\theta)-\frac{\lambda \beta}{2}\|\theta\|^2 - \log Z_\alpha^\ast     \right) d\theta\\
    & =  \beta \sum_{j=1}^M R_j^\alpha \int \rho_\alpha^\ast(\theta ) \sigma^m(x_j,\theta) d\theta + \frac{\lambda \beta}{2}M(\rho_\alpha^\ast) + \log Z_\alpha^\ast\\
    & =  \beta  \sum_{j=1}^M R_j^\alpha 
    h^\ast_{\alpha}(x_j)  + \frac{\lambda \beta}{2}M(\rho_\alpha^\ast) + \log Z_\alpha^\ast. 
\end{align*}
Then we have that 
\begin{align*}
    F^{\bl}_{\sigma^m}(\rho_\alpha^\ast) 
    &= \frac{1}{2M} \sum_{j=1}^M (h^\ast_{\alpha}(x_j)-y_j)^2  + \frac{\lam}{2}M(\rho_\alpha^\ast) - \frac{1}{\beta}H(\rho_\alpha^\ast)\\
    &= \frac{1}{2M} \sum_{j=1}^M (h^\ast_{\alpha}(x_j)-y_j)^2 - \frac{1}{M}\sum_{j=1}^M (h^\ast_{\alpha}(x_j)-y_j) h^\ast_{\alpha}(x_j) -  \frac{1}{\beta} \log Z_\alpha^\ast\\
    &= -\frac{1}{2M}\sum_{j=1}^M(h^\ast_{\alpha}(x_j))^2 + \frac{1}{2M} \sum_{j=1}^My_j^2 -  \frac{1}{\beta} \log Z_\alpha^\ast\\
    &\leq \frac{1}{2M} \sum_{j=1}^My_j^2 -  \frac{1}{\beta} \log Z_\alpha^\ast\\
    &\leq C_4 - \frac{1}{\beta} \log Z_\alpha^\ast,
\end{align*}
for some $C_4>0$ independent of $\alpha$.
With \citet[][Lemma 10.2]{mei2018mean}, we have that 
$$
C_4 - \frac{1}{\beta} \log Z_\alpha^\ast\ge F^{\bl}_{\sigma^m}(\rho_\alpha^\ast) \geq -\frac{1}{\beta}(1+(d+2)\log(\frac{8\pi}{\beta \lambda})). 
$$
Then we have that
$$
Z_\alpha^\ast \le \exp\left( C_4 \beta +1+ (d+2)\log \frac{8\pi}{\lambda \beta} \right). 
$$
This completes the proof. 
\end{proof}

\section{Proof of Theorem~\ref{thm:feature}}
\label{app:feature}

In this section, we present the proof of Theorem~\ref{thm:feature}, which shows that the trained input weights and biases align along a few directions.

\begin{proof}\textit{(Proof of Theorem~\ref{thm:feature})}
Let $\alpha_n=(\beta_n,\lambda_n,m_n)$. 
Let $z=(w,b)$. 
Consider any open set $B\subset \br^{d+1}$
such that 
\begin{equation}\label{eq:B-nbhd}
0\notin \cl(B),\quad  \widehat{\cl B}\cap \hat \Omega^\ast=\varnothing \text{ where } \widehat{\cl B} \triangleq \set{\frac{z}{\|z\|}\colon z\in \cl(B)} . 
\end{equation}
Recall the definition of $\hat \Omega^\ast$ from Proposition~\ref{prop:shrink}.
We have that
$$
\delta = \min \Big\{ \min_{k} \inf_{z\in \widehat{\cl B} \cap \hat O_k} f_k^\ast(z),\ \frac{1}{2} \Big\}\in (0,\frac{1}{2}]. 
$$

Fix any $\varepsilon>0$. 
Since $\rho^\ast_{\alpha_n}$ converges to $\rho^\ast$ in $2$-Wasserstein distance,
$\rho^\ast_{\alpha_n}$ has uniformly integrable second moments \citep[see, e.g.,][Proposition 7.1.5]{ambrosio2005gradient}. 
Thus, there exists $L>0$ such that 
$$
\int_{\|\theta\|>L} \|z\|^2 \rho^\ast_{\alpha_n}(\theta) d\theta<\varepsilon,\quad \forall n\geq 1. 
$$
Thus, 
\begin{align*}
    \int_{\br \times B} \|z\|^2 \rho^\ast_{\alpha_n}(\theta) d\theta 
    \le \int_{(\br \times B)\cap \set{\|\theta\|\leq L}} \|z\|^2 \rho^\ast_{\alpha_n}(\theta) d\theta + \varepsilon. 
\end{align*}
For the first term of the right-hand side, we have that 
$$
\int_{(\br \times B)\cap \set{\|\theta\|\leq L}} \|z\|^2 \rho^\ast_{\alpha_n}(\theta) d\theta = \sum_{k=1}^{2\cp(\cx)} \int_{(\br \times (B\cap O_k))\cap \set{\|\theta\|\leq L}} \|z\|^2 \rho^\ast_{\alpha_n}(\theta) d\theta. 
$$
Consider any fixed $k$. Since $L$ is fixed, for sufficiently large $m$ we have that
$$
\sigma^m(x_j,\theta)=a^m(x_j^\top w^m+b)_+^m=a(x_j^\top w+b)_+,\quad \forall \theta \in\set{\|\theta\|<L} , \forall j.
$$
Let 
$$
Z^\ast_\alpha=\int \exp\Big\{-\beta \Big( \sum_{j=1}^M R^\alpha_j \sigma^m(x_j,\theta) +\frac{\lambda}{2}\|\theta\|^2  \Big)  \Big\} d\theta. 
$$
Then we have that
{\small 
\begin{align*}
    &Z_{\alpha_n}^\ast\cdot \int_{(\br \times (B\cap O_k))\cap \set{\|\theta\|\leq L}} \|z\|^2 \rho^\ast_{\alpha_n}(\theta) d\theta \\
    =&\int_{(\br \times (B\cap O_k))\cap \set{\|\theta\|\leq L}} \|z\|^2 \exp\curv{-\frac{\beta_n \lambda_n}{2}\left[\sum_{j\in J_k}\frac{R_j^{\alpha_n}}{\lambda_n} 2a (w^\top x_j +b) +\|\theta\|^2\right]} d\theta\\
    =& 
    \int_{(\br \times (B\cap O_k))\cap \set{\|\theta\|\leq L}} \|z\|^2 \exp\curv{-\frac{\beta_n \lambda_n}{2}\left[\sum_{j\in J_k}\frac{R_j^{\alpha_n}}{\lambda_n}2a (w^\top x_j +b) +a^2\right]} \exp\curv{-\frac{\beta_n \lambda_n}{2}(\|z\|^2)} d\theta\\
    \le & 
    \int_\br \int_{B\cap O_k \cap \set{\|z\|\leq L}} \|z\|^2 \exp\curv{-\frac{\beta_n \lambda_n}{2}\left[\sum_{j\in J_k}\frac{R_j^{\alpha_n}}{\lambda_n}2a (w^\top x_j +b) +a^2\right]} \exp\curv{-\frac{\beta_n \lambda_n}{2}(\|z\|^2)} dz da.
\end{align*}
}
Note that for any $u\in \br$ and $\beta,\lambda>0$, 
\begin{align*}
    \int_\br \exp\curv{-\frac{\beta\lambda}{2}\left[ 2a u +a^2 \right]} da 
    &\leq 
    \int_\br \exp\curv{-\frac{\beta\lambda}{2}\left[ a^2 -2|a| |u|\right]}da \\
    &= 
    2\int_0^\infty 
    \exp\curv{-\frac{\beta\lambda}{2}\left[ a^2 -2a |u|\right]} da\\
    &\le 2\sqrt{\frac{2\pi}{\lambda\beta}} \exp(\frac{\lambda\beta}{2}u^2). 
\end{align*}
Thus, by Tonelli's theorem, 
\begin{align*}
    &\int_{(\br \times (B\cap O_k))\cap \set{\|\theta\|\leq L}} \|z\|^2 \rho^\ast_{\alpha_n}(\theta) d\theta \\
    \le & 2\sqrt{\frac{2\pi}{\lambda_n\beta_n}} 
    \int_{B\cap O_k \cap \set{\|z\|\leq L}} \|z\|^2 (Z_{\alpha_n}^\ast)^{-1}
    \exp\curv{-\frac{\beta_n \lambda_n}{2}\Big(\|z\|^2-\Big|\sum_{j\in J_k}\frac{R^{\alpha_n}_j}{\lambda_n}(w^\top x_j+b)\Big|^2\Big)} dz\\
    = & 2\sqrt{\frac{2\pi}{\lambda_n\beta_n}} 
    \int_{B\cap O_k \cap \set{\|z\|\leq L}} \|z\|^2 (Z_{\alpha_n}^\ast)^{-1}
    \exp\curv{-\frac{\beta_n \lambda_n}{2}f_k^{\alpha_n}(z)} dz.
\end{align*}
As shown in~\eqref{eq:uniform}, $f_k^{\alpha_n}$ converges uniformly to $f_k^\ast$ on $\bs^d$.
Thus, we have that for sufficiently large $n$: 
$$
\min_k \inf_{z\in \widehat{\cl B} \cap \hat O_k} f_k^{\alpha_n}(z) \geq \frac{\delta}{2}. 
$$
Thus,
\begin{align*}
    &\int_{(\br \times (B\cap O_k))\cap \set{\|\theta\|\leq L}} \|z\|^2 \rho^\ast_{\alpha_n}(\theta) d\theta \\
    \le& 2(Z_{\alpha_n}^\ast)^{-1}\sqrt{\frac{2\pi}{\lambda_n\beta_n}} 
    \int \|z\|^2 \exp\curv{-\frac{\beta_n\lambda_n\delta\|z\|^2}{4}} dz\\
    \le & C (Z_{\alpha_n}^\ast)^{-1} (\lambda \beta)^{-\frac{d+4}{2}} \delta^{-\frac{d+3}{2}}. 
\end{align*}
for some constant $C>0$ independent of $\alpha_n$. 
With Lemma~\ref{lem:hess-Z}, we have that
$$
\int_{(\br \times (B\cap O_k))\cap \set{\|\theta\|\leq L}} \|z\|^2 \rho^\ast_{\alpha_n}(\theta) d\theta \leq C' \beta_n^{-1}\lambda_n^{-(d+6)/4} (1+\sqrt{\lambda_n})^{(d+2)/2},
$$
for some constant $C'>0$ independent of $\alpha_n$. 
Thus,
$$
\lim_{n\to+\infty} \int_{(\br \times (B\cap O_k))\cap \set{\|\theta\|\leq L}} \|z\|^2 \rho^\ast_{\alpha_n}(\theta) d\theta = 0. 
$$

With this, we can conclude that,
$$
\limsup_{n\to+\infty} \int_{\br \times B} \|z\|^2 \rho^\ast_{\alpha_n}(\theta) d\theta\le \varepsilon.
$$
Since $\varepsilon$ is arbitrary, 
we have that
$$
\lim_{n\to+\infty} \int_{\br \times B} \|z\|^2 \rho^\ast_{\alpha_n}(\theta) d\theta =0.
$$
Since $B$ is open, we have that $\mathbb{I}_{z\in B}(\theta)\|z\|^2$ is lower semicontinuous and bounded from below. 
By Portmanteau theorem, 
$$
0\leq \int_{\br \times B} \|z\|^2 d\rho^\ast \leq \liminf_{n\to+\infty} \int_{\br\times B}\|z\|^2 d\rho^\ast_{\alpha_n}=0. 
$$
Thus, $\int_{\br \times B} \|z\|^2 d\rho^\ast=0$. 
Let 
$$
\Omega^\ast = \set{\gamma (w,b)\colon \gamma \geq0, (w,b)\in \hat \Omega^\ast}.
$$
By Proposition~\ref{prop:shrink}, $\Omega^\ast$ is the union of at most $2\cp(\cx)-1$ rays. 
For any $z\notin \Omega^\ast$, we can find a neighborhood $B$ of $z$ that satisfies~\eqref{eq:B-nbhd} and that $\|z\|>c$ for some $c>0$ and for every $z\in B$. 
Then, 
$$
0=\int_{\br \times B} \|z\|^2 d\rho^\ast \geq c^2 \int_{\br \times B} d\rho^\ast\geq0. 
$$
Thus, $(a,z) \notin \supp(\rho^\ast)$ for any $a\in \br, z\notin \Omega^\ast$. 
Hence, we have that
$$
\supp(\rho^\ast) \subset \br \times \Omega^\ast.
$$
It follows that
$$
\supp(q^\ast ) = \supp(\omega_\#(\rho^\ast) ) \subset \Omega^\ast,
$$
which completes the proof. 
\end{proof}

\section{Proof of Theorem~\ref{thm:nonredun}}
\label{app:nonredun}

In this section, we present the proof of Theorem~\ref{thm:nonredun}. 
We first prove Lemma~\ref{lem:strat}, which characterizes the geometry of the strata.
We then prove Proposition~\ref{prop:strat}, which provides a geometric proof of Theorem~\ref{thm:nonredun}.

\begin{proof}(\textit{Proof of Lemma~\ref{lem:strat}})
    It is clear that $\bs^d=\cup_{V\in \mathcal{V}} V$.
    If $V_A\neq V_{A'}$, then there exists $j\in\set{1,\ldots,M}$ such that $A(x_j)\neq A'(x_j)$. 
    As 
    $V_A \subset \set{(w,b)\in \bs^d\colon \sgn(w^\top x_j+b)=A(x_j)}$ 
    and 
    $V_{A'} \subset \set{(w,b)\in \bs^d\colon \sgn(w^\top x_j+b)={A'}(x_j)}$, 
    we have that $V_A\cap V_{A'}=\varnothing$.

    Consider any $V\neq V_0$ and $V\neq \varnothing$. 
    Let $V=V_A$ for a ternary sign pattern $A$. 
    Then there exists $j$ such that $A(x_j)\neq 0$. 
    Thus, $V\subset  \set{(w,b)\in \bs^d\colon \sgn(w^\top x_j+b)=A(x_j)}$, which is an open hemisphere on $\bs^d$. 
    Then we can select a point $z\in V$ and consider the standard gnomonic projection $\iota$ from $V$ to the tangent space $T_z\bs^d$, viewed as an embedded submanifold of $\br^{d+1}$. 
    Notice that
    $$
    \iota(V) = T_z\bs^d \cap \Big(\bigcap_{j=1}^M \set{(w,b)\in \br^{d+1}\colon \sgn(w^\top x_j+b)=A(x_j)}\Big). 
    $$
    The set on the right-hand side is simply connected as it is the intersection of convex sets. 
    As $\iota$ is a homeomorphism, $V$ is simply connected. 
    This completes the proof. 
\end{proof}

\begin{proof}(\textit{Proof of Proposition~\ref{prop:strat}})
    Let $V_0=\{(w,b)\in\bs^d\colon \sgn(w^\top x_j +b)=0\,\forall j\}$. 
    We first prove item $(i)$ for strata $V\neq V_0$. 
    Select a $k\in\set{1,\ldots,M}$ such that $V\subset \hat O_k$ (and thus $\cl(V)\subset \hat O_k$). 
    Let $T_k=\hat \Omega_k^\ast \cap \hat O_k$. 
    Implied by~\eqref{eq:adjacent}, we have that
    $$
    \hat \Omega^\ast \cap V=\hat\Omega_k^\ast \cap V=T_k\cap V,\quad
    \hat \Omega^\ast \cap \cl(V)=\hat\Omega_k^\ast \cap \cl(V)=T_k\cap \cl(V). 
    $$ 
    As shown in the proof of Proposition~\ref{prop:shrink}, $T_k$ consists of at most two points. 
    If $T_k$ is empty or a singleton, items $(i)$ and $(ii)$ hold trivially. 
    Assume that $T_k$ has two points. 
    Using similar arguments as in the proof of Proposition~\ref{prop:shrink}, 
    $T_k$ is the union of the two singleton sets: 
    \begin{equation}\label{eq:strat}
    \begin{aligned}
    \{z_+\} &= \set{z\in \hat O_k\colon z^\top P_k^\ast\ge 1}= \set{z\in \hat O_k\colon z^\top P_k^\ast=1},    \\
    \{z_-\} & = \set{z\in \hat O_k\colon z^\top P_k^\ast\le -1}=  \set{z\in \hat O_k\colon z^\top P_k^\ast=-1}.
    \end{aligned}
    \end{equation}
    Note that this implies that $v^\top P_k^\ast<1$ for every $z\in \hat O_k\setminus\set{z_+}$ and  $v^\top P_k^\ast>-1$ for every $z\in \hat O_k\setminus\set{z_-}$.

    Assume that $z_+\in V$. We aim to show that $z_- \notin \cl(V)$. 
    Let $A$ be the ternary sign pattern $A\colon \set{x_j}_{j=1}^M\to\set{-1,0,1}$ such that $V=V_A$. 
    Let 
    $$
    e_A=\set{j\colon A(x_j)=0},\quad S_A=\set{(w,b)\in \bs^d\colon w^\top x_j+b=0 \, \forall j\in e_A}. 
    $$
    When $e_A=\varnothing$, let $S_A=\bs^d$. 
    Clearly, $\cl(V)\subset S_A$. 
    Note that $S_A$ is the intersection of $\bs^d$ and the linear space $L_A=\{(w,b)\in\br^{d+1}\colon w^\top x_j +b =0 \, \forall j \in e_A\}$, whose dimension is denoted by $r_A$. 
    Thus, $S_A$ is a sphere of dimension $r_A-1\ge0$, i.e., $S_A\cong \bs^{r_A-1}$. 
    When $r_A=1$, then $S_A$ is a pair of antipodal points. 
    As $V$ is connected by Lemma~\ref{lem:strat}, $V$ must be a singleton and the claim holds trivially.

    Now assume that $r_A>1$. 
    Notice that 
    $$
    V= \set{(w,b)\in S_A\colon \sgn(w^\top x_j +b)=A(x_j), A(x_j)\neq 0}.
    $$
    Thus, $V$ is relatively open in $S_A$. 
    Let $\bar P_k^\ast$ denote the projection of $P_k^\ast$ onto the linear space $L_A$. 
    Define the linear function
    $$
    P\colon S_A \to \br, \quad P(z)=z^\top\bar P_k^\ast. 
    $$
    According to~\eqref{eq:strat}, we have that $P(z_+)=1$ and $P(z)<1$ for every $z\in \cl(V)\setminus\set{z_+}$. 
    Since $V$ is relatively open in $S_A$, $P$ attains a local maximum at $z_+$. 
    It is clear that $P$ has only one local maximum over $S_A$, which is the global maximum. 
    It follows that $z_+= \bar P_k^\ast$. 
    Since $P(z_-)=z_-^\top z_+=-1$, we have that $z_-=-z_+$. 
    On the other hand, select any $j'\notin e_A$ and assume without loss of generality that $A(x_{j'})=1$. 
    Consider the closed hemisphere on $S_A$:
    $$
    S_A^+ = \set{(w,b)\in S_A \colon w^\top x_{j'}+b\geq 0}. 
    $$
    Clearly, $\cl(V)\subset S_A^+$. 
    Note that $z^+\in V$ and thus lies in the relative interior of $S_A^+$. 
    By the reflection symmetry, we have that $z^-$ lies in $S_A \setminus S_A^+$,
    which means that $z_- \notin \cl(V)$. 
    Therefore, we conclude that $V\cap \hat \Omega^\ast$ consists of at most one point and $V \cap \hat \Omega^\ast\neq\varnothing$ implies that 
    $(\partial_{\cl(V)}V)\cap \hat \Omega^\ast=\varnothing$.

    Now we prove item $(ii)$, and prove item $(i)$ for $V_0$.
    Notice that $\cl(V_{-1})$ is a spherical sector. 
    Let $k\in \set{1,\cdots,2\cp(\cx)}$ be the index such that $\cl(V_{-1})=\hat O_k$.
    It is clear that $J_k=\varnothing$. 
    Then, $f_k^\ast(z)=\|z\|^2$ and $\hat \Omega_k^\ast=\varnothing$. 
    Thus, $\hat \Omega^\ast \cap \cl(V_{-1})=\varnothing$. 
    In particular, as $\cl(V_0) \subset \cl(V_{-1})$, we have that $\cl(V_0) \cap \hat \Omega^\ast=\varnothing$.

    Item $(iii)$ is a direct consequence of the proof of Proposition~\ref{prop:shrink}, in particular the statement that, for every $k$, $\hat O_k \cap \hat \Omega_k^\ast$ has at most two connected components and the case of two components occurs if and only if both components of $\hat \Omega_k^\ast$ intersect the relative boundary of $\hat O_k$ in $\bs^d$. 
    This completes the proof. 
\end{proof}

\begin{proof}(\textit{Proof of Theorem~\ref{thm:nonredun}})
    Notice that all the realizable linear ternary activation patterns on $\cx$ are in one-to-one correspondence with the strata in $\mathcal{V}$ by Lemma~\ref{lem:strat}. 
    Recall from Theorem~\ref{thm:pwl-detail} that $\{(w_k,b_k)\}_{k=1}^K=\hat \Omega^\ast$. 
    Then, we have that items $(i),(ii),(iii)$ of Proposition~\ref{prop:strat} imply items $(i),(ii),(iii)$ of Theorem~\ref{thm:nonredun}, respectively.
    This completes the proof. 
\end{proof}

\section{Proof of Proposition~\ref{prop:toy}}
\label{app:toy}

In this section, we prove Proposition~\ref{prop:toy}, which identifies the kink hyperplanes for a simple class of data sets.

\begin{proof}\textit{(Proof of Proposition~\ref{prop:toy})}
    Since the data set is symmetric, we can assume without loss of generality that $x_1>0$. 
    Recall that for any cone $S\subset \br^{d+1}$, we use $\hat S$ to denote its spherical cross-section, i.e., $\hat S=S\cap \bs^d$. 
    For any $z\in \br^{d+1}\setminus\{ 0\}$, let $\ell_z$ denote the straight line passing through $z$: 
    $$
    \ell_z=\set{\gamma z\colon \gamma \in \br}.
    $$

    \textbf{Even labels $y_1=y_2$:} 
    We first show that $h^\ast$ is an even function. 
    Consider $T(a,w,b)=(a,-w,b)$. 
    Notice that
    $$
    \sigma^m(x,T(\theta))= a^m\Big( x^\top (-w)^m +b \Big)_+^m = a^m (-x^\top w^m +b)_+^m = \sigma^m(-x,\theta).
    $$
    For any $\rho\in \ch$, let $T_\#\rho$ denote the push-forward measure under $T$.  
    Clearly, $T_\# \rho \in \ch$. 
    We have that
    $$
    h_{\sigma^m}(x, T_\# \rho) = \int \sigma^m(x,T(\theta)) d\rho(\theta) 
    = \int \sigma^m(-x,\theta) d\rho(\theta) = h_{\sigma^m}(-x,\rho). 
    $$
    Let $R$ be the empirical risk 
    $$
    R(\rho)=\frac{1}{4}\Big( (h_{\sigma^m}(x_1,\rho)-y_1)^2+(h_{\sigma^m}(x_2,\rho)-y_2)^2  \Big).
    $$
    By above, $R(\rho)=R(T_\# \rho)$ for any $\rho\in \ch$. 
    Notice that $M(\rho)=M(T_\# \rho)$ as $T$ preserves the norm, and that $H(\rho)=H(T_\# \rho)$ as $T$ is a diffeomorphism that preserves the volume. 
    Therefore, for any $\bl,m>0$, we have that
    $$
    F^{\bl}_{\sigma^m}(\rho) = F^{\bl}_{\sigma^m}(T_\#\rho). 
    $$
    This implies that 
    $\rho^\ast_\alpha=T_\# \rho^\ast_\alpha$ a.e., since $\rho^\ast_\alpha$ is the unique minimizer. 
    Therefore, under the sequence $\alpha_n=(\beta_n,\lambda_n,m_n)$ in Theorem~\ref{thm:pwl}, 
    $$
    h^\ast_{\alpha_n}(x)= h_{\sigma^{m_n}}(x, \rho^\ast_{\alpha_n})=
    h_{\sigma^{m_n}}(x, T_\# \rho^\ast_{\alpha_n})
    =
    h_{\sigma^{m_n}}(-x, \rho^\ast_{\alpha_n})=h^\ast_{\alpha_n}(-x),\quad \forall n.
    $$
    Taking $n\to+\infty$ yields that
    $$
    h^\ast(x)=h^\ast(-x). 
    $$
    Thus, $h^\ast$ is an even function. 

    Let $J_1=\set{1,2},J_2=\set{1},J_3=\set{2},J_4=\varnothing$, and 
    $\kappa=(h^\ast(x_1)-y_1)/(2\bar \lambda)$. 
    We have that
    $$
    \begin{dcases}
        P_1^\ast = \kappa \begin{pmatrix}
            x_1\\1
        \end{pmatrix}+  \kappa\begin{pmatrix}
            x_2\\1
        \end{pmatrix} =
        \kappa \begin{pmatrix}
            \mathbf{0}\\2
        \end{pmatrix},\\
        P_2^\ast = \kappa\begin{pmatrix}
            x_1\\1
        \end{pmatrix},\\
        P_3^\ast = \kappa\begin{pmatrix}
            -x_1\\1
        \end{pmatrix},\\
        P_4^\ast = 0. 
    \end{dcases}
    $$
    For $k=1,\ldots,4$, let $\Omega_k^\ast$ denote the non-positive set of $f_k^\ast$.
    Note that $\Omega_k^\ast$ is either the singleton $\set{0}$ or a double cone with axis $\ell_{P_k^\ast}$. 
    Let
    $$
    \begin{dcases}
        O_1 = \set{(w,b)\in \br^{d+1}\colon w^\top x_1 +b \geq0, -w^\top x_1 +b \geq0},\\
        O_2 = \set{(w,b)\in \br^{d+1}\colon w^\top x_1 +b \geq0, -w^\top x_1 +b \leq0},\\
        O_3 = \set{(w,b)\in \br^{d+1}\colon w^\top x_1 +b \leq0, -w^\top x_1 +b \geq0},\\
        O_4 = \set{(w,b)\in \br^{d+1}\colon w^\top x_1 +b \leq0, -w^\top x_1 +b \leq0}. 
    \end{dcases}
    $$
    Clearly, $f_4^\ast(z)=\|z\|^2$ and $\Omega_4^\ast=\set{0}$.
    We now examine $\Omega_k^\ast \cap O_k$ for $k=1,2,3$. 
    If $\kappa=0$, we have that $P_k^\ast=0$ for every $k$. 
    Then, $f_k^\ast(z)=\|z\|^2$ and $\Omega^\ast_k=\set{0}$ for every $k$. 
    Therefore, $\Omega^\ast=\set{0}$. 
    Clearly, in this case $\hat \Omega^\ast=\varnothing$ and, 
    by Theorem~\ref{thm:pwl-detail}, $h^\ast$ is globally affine. 
    
    Now consider $\kappa\neq0$. 
    Let $O_{jk}=O_j\cap O_k $ for any $i\neq j$. 
    Notice that
    $$
    \begin{dcases}
    \partial O_1 = O_{12} \cup O_{13},\\
    \partial O_2 = O_{12} \cup O_{24},\\
    \partial O_3 = O_{13} \cup O_{34}. 
    \end{dcases}
    $$
    For $k=1$, notice that
    $$
    (P_1^\ast)^\top  (x_1,1) = 2\kappa ,\quad (P_1^\ast)^\top  (-x_1,1) = 2\kappa. 
    $$
    This means that $\ell_{P_1^\ast}$ intersects the interior of $O_1$. 
    With Proposition~\ref{prop:shrink}, we have that 
    \begin{equation}\label{eq:toy0}
        \Omega_1^\ast \cap  O_1\subset \ell_{P_1^\ast},\quad 1\geq \|P_1^\ast\|=|\kappa|\sqrt{2}.
    \end{equation}
    For $k=2$, notice that 
    $$
    (P_2^\ast)^\top  (x_1,1) = \kappa(\|x_1\|^2+1) ,\quad (P_2^\ast)^\top  (-x_1,1) = \kappa(-\|x_1\|^2+1). 
    $$
    If $\|x_1\|<1$, we have that $\ell_{P_2^\ast}$ intersects $O_1$ and $O_4$. 
    Note that
    $$
    \sin (d_\angle(\ell_{P_2^\ast}, O_{12}))=\frac{\frac{(P_2^\ast)^\top P_3^\ast}{\|P_3^\ast\|}}{\|P_2^\ast\|}=\frac{1-\|x_1\|^2}{\|x_1\|^2+1}.
    $$
    Thus,
    $$
    d_\angle(\ell_{P_2^\ast}, O_{12})= \arcsin(\frac{1-\|x_1\|^2}{\|x_1\|^2+1}) < \pi/2 = d_\angle(\ell_{P_2^\ast}, O_{24}).
    $$
    By the double-cone geometry of $\Omega_2^\ast$, 
    we have that
    $\Omega_2^\ast \cap  O_2 \neq \varnothing$ if and only if 
    $$
    \arcsin(\frac{1-\|x_1\|^2}{\|x_1\|^2+1}) = \arccos(\frac{1}{\|P_2^\ast\|})=\arccos(\frac{1}{|\kappa|\sqrt{\|x_1\|^2+1}}).
    $$
    As $\|x_1\|<1$, this leads to that
    $$
    |\kappa|=\sqrt{\frac{\|x_1\|^2+1}{4\|x_1\|^2}}>\frac{1}{\sqrt{2}},
    $$
    which is impossible by~\eqref{eq:toy0}. 
    Thus, $\Omega_2^\ast \cap  O_2 = \varnothing$.
    Similarly, we have that $\Omega^\ast_3 \cap  O_3=\varnothing$. 
    Therefore, $  \Omega^\ast \subset \ell_{P_1^\ast}$. 
    Notice that $\ell_{P_1^\ast}$ corresponds to the empty set $\set{x\colon \mathbf{0}^\top x+1=0}=\varnothing$. 
    Then with Theorem~\ref{thm:pwl-detail}, we have that $h^\ast$ is globally affine.

    If $\|x_1\|\ge 1$, we have that $\ell_{P_2^\ast}$ intersects $O_2$. 
    It follows that $\Omega_2^\ast \cap    O_2 \subset \ell_{P_2^\ast}$. 
    Similarly, we have that 
    $ \Omega_3^\ast \cap    O_3 \subset \ell_{P_3^\ast}$. 
    Thus, we have that
    $$
      \Omega^\ast \subset \ell_{P_1^\ast}\cup\ell_{P_2^\ast}\cup \ell_{P_3^\ast}.
    $$
    Notice that $\ell_{P_2^\ast}$ corresponds to the hyperplane $\set{x\colon x_1^\top x +1=0}$ and $\ell_{P_3^\ast}$ corresponds to 
    $\set{x\colon -x_1^\top x +1=0}$.

    \textbf{Odd labels $y_1=-y_2$:} 
    With similar arguments, we have that $h^\ast$ is an odd function. 
    Then we have that
    $$
    \begin{dcases}
        P_1^\ast = \kappa \begin{pmatrix}
            x_1\\1
        \end{pmatrix} -\kappa \begin{pmatrix}
            x_2\\1
        \end{pmatrix} =
        \kappa \begin{pmatrix}
            2x_1\\0
        \end{pmatrix},\\
        P_2^\ast = \kappa\begin{pmatrix}
            x_1\\1
        \end{pmatrix},\\
        P_3^\ast = \kappa\begin{pmatrix}
            x_1\\-1
        \end{pmatrix},\\
        P_4^\ast = 0. 
    \end{dcases}
    $$
    Similarly, 
    it suffices to examine $\Omega_k^\ast \cap O_k$ for $k=1,2,3$, with $\kappa \neq 0$. 
    Define:
    $$
    Q_1 = ( -x_1,\|x_1\|^2 ),\quad 
    Q_2 = ( x_1,\|x_1\|^2 ).
    $$
    Recall that $G_x = \set{(w,b)\in \br^{d+1}\colon w^\top x+b=0}$. 
    Note that 
    \begin{align*}
    &\ell_{Q_1}=\operatorname{Proj}_{G_{x_1}}(\ell_{P_1^\ast})=\operatorname{Proj}_{G_{x_1}}(\ell_{P_3^\ast})\\
    &\ell_{Q_2}=\operatorname{Proj}_{G_{x_2}}(\ell_{P_1^\ast})=\operatorname{Proj}_{G_{x_2}}(\ell_{P_2^\ast}),
    \end{align*}
    where $\operatorname{Proj}_A(B)$ denotes the orthogonal projection of $B$ onto $A$.

    For $k=1$, we have that
    $$
    (P_1^\ast)^\top (x_1,1)=2\kappa \|x_1\|^2,\quad 
    (P_1^\ast)^\top (-x_1,1)=-2\kappa \|x_1\|^2.
    $$
    Thus, $\ell_{P_1^\ast}$ intersects $O_2$ and $O_3$.
    Notice that 
    $$
    \sin(d_\angle(\ell_{P_1^\ast},O_{12}))=\frac{\frac{(P_1^\ast)^\top P_3^\ast}{\|P_3^\ast\|}}{\|P_1^\ast\|}=\frac{\frac{(P_1^\ast)^\top P_2^\ast}{\|P_2^\ast\|}}{\|P_1^\ast\|}=\sin(d_\angle(\ell_{P_1^\ast},O_{13})). 
    $$
    Notice also that $O_{13}\subset G_1$ and $O_{12}\subset G_2$. 
    Thus, by the double-cone geometry of $\Omega_1^\ast$ and since $\Omega_1^\ast \cap O_1$ consists of at most two rays, we have that
    \begin{equation*}
        O_1 \cap \Omega_1^\ast 
        \subset 
        \operatorname{Proj}_{G_1}(\ell_{P_1^\ast}) \cup 
        \operatorname{Proj}_{G_2}(\ell_{P_1^\ast})=
        \ell_{Q_1}\cup \ell_{Q_2}.         
    \end{equation*}

    For $k=2$, we have that
    $$
    (P_2^\ast)^\top (x_1,1)= \kappa (\|x_1\|^2+1),\quad 
    (P_2^\ast)^\top (-x_1,1)= \kappa (-\|x_1\|^2+1). 
    $$
    When $\|x_1\|\leq 1$, we have that $\ell_{P_2^\ast}$ intersects $O_1$ and $O_4$.
    Notice that
    $$
    d_\angle(\ell_{P_2^\ast}, O_{12})= \arcsin(\frac{1-\|x_1\|^2}{\|x_1\|^2+1}) < \pi/2 = d_\angle(\ell_{P_2^\ast}, O_{24}).
    $$
    Since $O_{12}\subset G_2$, we have that
    $$
    O_2 \cap   \Omega_2^\ast  \subset \operatorname{Proj}_{G_2}(\ell_{P_2^\ast})  = \ell_{Q_2}. 
    $$
    Similarly, we have that
    $$
    O_3 \cap \Omega_3^\ast \subset \ell_{Q_1}. 
    $$
    Thus, 
    $$
    \Omega^\ast \subset \ell_{Q_1} \cup \ell_{Q_2}. 
    $$
    Note that $\ell_{Q_1}$ corresponds to the hyperplane $\set{x\colon -x_1^\top x+\|x_1\|^2=0}$ and $\ell_{Q_2}$ corresponds to
    $\set{x\colon x_1^\top x+\|x_1\|^2=0}$.

    When $\|x_1\|> 1$, we have that
    $\ell_{P_2^\ast}$ intersects the interior of $O_2$. 
    Then we have that 
    \begin{equation}\label{eq:toy2}
    O_2\cap \Omega_2^\ast \subset \ell_{P_2^\ast},\quad 1\geq \|P_2^\ast\|=|\kappa|\sqrt{\|x_1\|^2+1}. 
    \end{equation}
    Note that $O_1\cap \Omega_1^\ast\neq \varnothing$ if and only if
    $$
    d_\angle(\ell_{P_1^\ast}, O_{12})=\arcsin(\frac{\|x_1\|}{\sqrt{\|x_1\|^2+1}}) = 
    \arccos(\frac{1}{2|\kappa|\|x_1\|})
    =\arccos(\frac{1}{\|P_1^\ast\|}). 
    $$
    Equivalently,
    $$
    |\kappa |=\sqrt{\frac{1+\|x_1\|^2}{4\|x_1\|^2}}.
    $$
    This, together with~\eqref{eq:toy2}, yields that
    $$
    1\geq \frac{\|x_1\|^2+1}{2\|x_1\|},
    $$
    which is impossible, as $\|x_1\|>1$. 
    Therefore, we have that $ O_1 \cap \Omega_1^\ast= \varnothing$. 
    Similar to $k=2$, we have that
    $$ 
    O_3\cap \Omega_3^\ast \subset \ell_{P_3^\ast}. 
    $$ 
    Thus, 
    $$
    \Omega^\ast \subset  \ell_{P_2^\ast} \cup \ell_{P_3^\ast}.
    $$
    Note that $\ell_{P_2^\ast}$ corresponds to the hyperplane $\set{x\colon x_1^\top x +1=0}$ and $\ell_{P_3^\ast}$ corresponds to $\set{x\colon -x_1^\top x +1=0}$. 
    This completes the proof. 
\end{proof}

\section{Comparison of the Bounds on Effective Width}
\label{app:bound}

In this section, we compare our bound on the effective width with the bound obtained in \citet{del2026dual}:
$$
K' \leq K'_{\mathrm{bound}}= \max_{k=0,\ldots,M} \binom{M}{k}\Xi(M-k,d+1-k),  
$$
where $\Xi(n,d)$ is the number of regions in $\br^d$ divided by an arrangement of $n$ hyperplanes that pass through the origin. 
Note that \citet{del2026dual} considered networks without input biases and replace each training input $x_j$ by $x_j' =(x_j,1)\in \br^{d'}=\br^{d+1}$.
Accordingly, the original bound in \citet{del2026dual} is 
$$
\max_{k=0,\ldots,M} \binom{M}{k}\Xi(M-k,d'-k). 
$$

For the training set $\cx=\set{x_j}_{j=1}^M\subset \br^{d}$, 
let $G_{x_j}=\set{(w,b)\colon w^\top x_j+b=0}$. 
Note that in the analysis of \citet{del2026dual}, 
$\Xi(M-k,d+1-k)$ is essentially counting the number of regions of 
$\cap_{j\in \mathcal{J}_k} G_{x_j}$ divided by $\set{G_{x_j}}_{j\notin \mathcal{J}_k}$ for some $\mathcal{J}_k \subset \set{1,\ldots,M}$ with $|\mathcal{J}_k|=k$ (see the proof of Lemma 3.11 in \citealt{del2026dual}). 
Below we consider the general case, i.e., $\cx$ are in general position.
In this case, \citet{cover1965geometrical} gives that
$$
2\cp(\cx)-1=2\sum_{i=0}^{d}\binom{M-1}{i} -1,
$$
and that
$$
K'_{\mathrm{bound}}=\max_{k=0,\ldots,M} \binom{M}{k} \cdot 2\sum_{i=0}^{d-k}\binom{M-k-1}{i}.
$$
Note that,
$$
K'_{\mathrm{bound}}>2\sum_{i=0}^{d}\binom{M-1}{i}.
$$
Thus, $K'_{\mathrm{bound}}$ exceeds our bound by at least one.

Now assume that $2d +1 \leq M$. 
Note that 
$$
\frac{\binom{M-1}{i+1}}{\binom{M-1}{i}}=\frac{M-1-i}{i+1}\geq 1 \Longleftrightarrow 2i\leq M-2,
$$
which is satisfied when $i\leq d-1$. 
Thus, 
$$
2\cp(\cx) -1 <  2\sum_{i=0}^d
\binom{M-1}{i}
\leq 2(d+1)
\binom{M-1}{d}.
$$
Meanwhile, 
\begin{align*}
\max_{k=0,\ldots,M} \binom{M}{k} \cdot 2\sum_{i=0}^{d-k}\binom{M-k-1}{i}
&\geq 2\binom{M}{\lfloor d/2\rfloor}\binom{M-\lfloor d/2\rfloor-1}{d-\lfloor d/2\rfloor}\\
&\geq 2 \frac{M}{M-\lfloor d/2\rfloor} \binom{d}{\lfloor d/2\rfloor}\binom{M-1}{d}\\
&\geq 2  \binom{d}{\lfloor d/2\rfloor}\binom{M-1}{d}\\
&\geq 2  \frac{2^d}{d+1}\binom{M-1}{d}.
\end{align*}
Therefore, 
$$
\frac{K'_{\mathrm{bound}}}{2\cp(\cx) -1} \geq \frac{2^d}{(d+1)^2}. 
$$
That is, in the regime of $2d +1 \leq M$, our bound improves over the bound in~\citet{del2026dual} by a factor of at least $2^d/(d+1)^2$.

\section{Existence of Accumulation Points} 
\label{app:accumu}

In this section, we show that for any sequence $(\beta_n,\lambda_n,m_n)$ satisfying Assumption~\ref{assum}, with $m_n,\beta_n\to+\infty$ and $\lambda_n\to \bar \lambda$, one can find a subsequence $(\beta_{n_k},\lambda_{n_k},m_{n_k})$ along which $h^\ast_{\beta_{n_k},\lambda_{n_k},m_{n_k}}(\cdot)$ converges pointwise.

\begin{proposition}\label{prop:accumu}
    Under the setting of Theorem~\ref{thm:pwl}, 
    for any sequence $(\beta_n,\lambda_n,m_n)$ satisfying Assumption~\ref{assum} for $n\geq 1$, with $m_n,\beta_n\to+\infty$ and $\lambda_n\to \bar \lambda$ for an arbitrary $\bar \lambda>0$ as $n\to+\infty$, 
    there exists a subsequence $(\beta_{n_k}, \lambda_{n_k},m_{n_k})$ such that
    $\lim_{k\to+\infty}h^\ast_{\beta_{n_k}, \lambda_{n_k},m_{n_k}}(x)$ exists for every $x\in \br^d$. 
\end{proposition}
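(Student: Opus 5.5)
The plan is an Arzelà--Ascoli argument with a diagonal extraction. The key input is the uniform Lipschitz control already derived in the proof of Theorem~\ref{thm:pwl-detail}. There, the bound
\[
\sup_{x\in\br^d}\|\nabla_x h^\ast_{\beta,\lambda,m}(x)\|\le \int\|\theta\|^2\,d\rho^\ast_{\beta,\lambda,m}
\]
was combined with Lemma~\ref{lem:Mbound}, which gives the explicit bound $2(d+2)+4(\lambda\beta)^{-1}(1+(d+2)\log(8\pi))+\frac{2}{\lambda M}\sum_j y_j^2$. Along the sequence we have $\lambda_n\beta_n>1$ by Assumption~\ref{assum} and $\lambda_n\to\bar\lambda>0$, so $\lambda_n\ge\bar\lambda/2$ for all large $n$. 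Hence the functions $h_n:=h^\ast_{\beta_n,\lambda_n,m_n}$ are Lipschitz with a common constant $L$, at least for all $n$ beyond some $n_0$. Discarding finitely many terms is harmless for extracting a subsequence.

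Next I need a uniform bound at one point. At a training input $x_1$, Lemma~\ref{lem:hess-R} gives $|h_n(x_1)-y_1|\le M\sum_j|R_j^{\alpha_n}|\le CM\sqrt{\lambda_n}$, which is bounded. An alternative is the crude bound $|h_n(x)|\le\int|a|\,|w^\top x+b|\,d\rho^\ast\le \tfrac12(1+\|x\|^2)^{1/2}M(\rho^\ast_{\alpha_n})$, using $|a^m|\le|a|$, $\|w^m\|\le\|w\|$ and $(u)_+^m\le(u)_+$; this is bounded by Lemma~\ref{lem:Mbound}. Combining either bound with the Lipschitz property, we get $\sup_n|h_n(x)|\le C+L\|x-x_1\|$, so the family $\{h_n\}$ is uniformly bounded on every ball $\bar B_R(0)$ and equicontinuous.

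Now apply Arzelà--Ascoli on $\bar B_1(0)$ to extract a uniformly convergent subsequence. From that subsequence extract a further one converging uniformly on $\bar B_2(0)$, and continue in this way for every $\bar B_R(0)$. The diagonal subsequence $(\beta_{n_k},\lambda_{n_k},m_{n_k})$ then converges uniformly on every compact set, and in particular pointwise at every $x\in\br^d$, which is the claim. (Alternatively, one can take a diagonal subsequence converging on the countable dense set $\mathbb{Q}^d$ and use equi-Lipschitzness to extend convergence to all $x$.)

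I expect the main obstacle to be only bookkeeping. I must check that the derivative formula from the proof of Theorem~\ref{thm:pwl-detail} holds for each fixed $\alpha_n$; this is justified there via dominated convergence, using the Gaussian tail from Lemma~\ref{lem:tau-pointwise}. I must also check that the constant in Lemma~\ref{lem:Mbound} is uniform once $\lambda_n$ is bounded below. No further structural input is needed.
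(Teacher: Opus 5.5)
Your proposal is correct and follows essentially the paper's argument: a uniform bound at one point, a uniform Lipschitz constant from the gradient estimate in the proof of Theorem~\ref{thm:pwl-detail} combined with Lemma~\ref{lem:Mbound}, and then a diagonal extraction. Your parenthetical dense-set variant is exactly what the paper does, anchoring at $x=0$ via $|h(0)|\le\int|ab|\,d\rho^\ast\le M(\rho^\ast)$. One small slip: in your crude bound the integrand should be $|a|(\|x\|\|w\|+|b|)$ rather than $|a|\,|w^\top x+b|$, because the truncation replaces $w$ by $w^m$; the resulting bound $\tfrac12(1+\|x\|^2)^{1/2}M(\rho^\ast_{\alpha_n})$ still holds.
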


\begin{proof}
    Notice that
    \begin{align*}
        |h^\ast_{\bl,m}(0)| = \left| \int a^m(b)_+^m  d \rho^\ast_{\bl,m}(\theta)\right|
        \leq \int |ab| d\rho^\ast_{\bl,m}(\theta)
        \leq \int \|\theta\|^2 d\rho^\ast_{\bl,m}(\theta). 
    \end{align*}
    By Lemma~\ref{lem:Mbound}, we have that
    $$
    \sup_n|h^\ast_{\beta_n,\lambda_n,m_n}(0)|\leq C
    $$
    for a constant $C>0$. 
    On the other hand, as we shown in the proof of Theorem~\ref{thm:pwl-detail}, 
    we have that, for any $x,x'\in \br^{d}$,
    \begin{align*}
        &\sup_n|h^\ast_{\beta_n,\lambda_n,m_n}(x)-h^\ast_{\beta_n,\lambda_n,m_n}(x')|\\
        \leq & \Big(2(d+2)+4(\lambda_n \beta_n)^{-1} (1+(d+2)\log(8\pi))+ \frac{2}{\lambda_n M}\sum_{j=1}^M y_j^2 \Big)\|x-x'\|\\
        \leq& C' \|x-x'\|,
    \end{align*}
    for a constant $C'>0$. 
    Thus, for any $x\in \br^d$,
    $$
    \sup_n |h^\ast_{\beta_n,\lambda_n,m_n}(x)|\leq \sup_n|h^\ast_{\beta_n,\lambda_n,m_n}(0)| + \sup_n|h^\ast_{\beta_n,\lambda_n,m_n}(x)-h^\ast_{\beta_n,\lambda_n,m_n}(0)|\leq C'',
    $$
    for some $C''>0$.
    Thus, there exists a subsequence such that 
    $$
    \lim_{k\to+\infty}h^\ast_{\beta_{n_k},\lambda_{n_k},m_{n_k}}(x)
    $$
    exists.

    Now take a countable dense subset $\set{x_i}_{i\geq1}\subset \br^d$. 
    By a diagonal argument, we can find a subsequence $\set{n_k}$ such that
    $$
    \lim_{k\to+\infty}h^\ast_{\beta_{n_k},\lambda_{n_k},m_{n_k}}(x_i)
    $$
    exists for every $i$. 
    To simplify the notation, let $h_k=h^\ast_{\beta_{n_k},\lambda_{n_k},m_{n_k}}$. 
    Now for any $x\in \br^d$ and any $\varepsilon>0$, note that
    \begin{align*}
        |h_k(x) - h_{k'}(x)| &\leq |h_k(x) - h_{k}(x_i)|+ |h_k(x_i) - h_{k'}(x_i)|+|h_{k'}(x) - h_{k'}(x_i)|\\
        &\leq 2C'\|x-x_i\|+|h_k(x_i) - h_{k'}(x_i)|. 
    \end{align*}
    Since $\set{x_i}$ is dense in $\br^d$, we can select $i$ such that 
    $$
    2C'\|x-x_i\|\leq \varepsilon/2.
    $$
    Since $\set{h_k(x_i)}$ is convergent, there exists $K$ such that for any $k,k'>K$, 
    $$
    |h_k(x_i) - h_{k'}(x_i)|\leq \varepsilon/2. 
    $$
    Therefore, for $k,k'>K$, 
    $$
    |h_k(x) - h_{k'}(x)|\le \varepsilon.
    $$
    Thus, the limit $\lim_{k\to+\infty}h_k(x)$ exists. 
    This means that $h^\ast_{\beta_{n_k},\lambda_{n_k},m_{n_k}}$ has a pointwise limit as $k\to+\infty$. 
    This completes the proof. 
\end{proof}

\section{Non-redundancy of Kink Hyperplane Arrangement}
\label{app:kink}

In this section, we obtain a description of the location of the kink hyperplanes of the limiting piecewise affine function using Theorem~\ref{thm:nonredun}.

We characterize the location of a hyperplane in the input space through the partition it induces on the training set $\cx=\set{x_j}_{j=1}^M$. 
As defined in Section~\ref{sec:pwl}, a dichotomy is a partition of the training set induced by a hyperplane that does not pass through any data point. 
To accommodate hyperplanes that intersect the training set, we introduce the following notion. 
For a training set $\cx=\set{x_j}_{j=1}^M$, we say $\mathcal{T}=(\set{\cx_+,\cx_-},\cx_0)$ is a linear \emph{trichotomy} of $\cx$ if
$\cx=\cx_+\sqcup \cx_-\sqcup \cx_0$ and there exist  $w\in\br^d,b\in \br$ satisfying 
$w^\top x_j +b>0$ if $x_j \in \cx_+$,
$w^\top x_j +b<0$ if $x_j \in \cx_-$, and 
$w^\top x_j +b=0$ if $x_j \in \cx_0$.
In this case, we say that $\mathcal{T}$ is realized by the hyperplane $\{x\colon w^\top x+b=0\}$. 
Note that a dichotomy is a special trichotomy with $\mathcal{X}_0 = \varnothing$. 
We call a trichotomy a strict trichotomy if it is not a dichotomy. 
We introduce a partial order to compare trichotomies. 
For two trichotomies of $\cx$, $\mathcal{T}=(\set{\cx_+,\cx_-},\cx_0)$ and $\mathcal{T}'=(\set{\cx_+',\cx_-'},\cx_0')$, 
define $\mathcal{T}'\preceq \mathcal{T}$ if, after possibly interchanging 
$\cx_+$ and $\cx_-$, 
\[
\cx_+\subseteq \cx_+',\quad \cx_-\subseteq \cx_-'.
\]
When $\mathcal{T}'\preceq \mathcal{T}$, we say that $\mathcal{T}$ is a degeneration of $\ct'$.

Recall from Section~\ref{sec:location} that, each stratum $V\in \mathcal{V}$ uniquely corresponds to a realizable ternary sign pattern on the training inputs $\cx$. 
Note that each realizable trichotomy of $\mathcal{X}$ corresponds to a pair of ternary sign patterns related by a sign flip, and hence to a pair of antipodal strata $\{V,-V\}$, except that the trichotomy $(\{\varnothing,\varnothing\},\mathcal{X})$ corresponds to the single stratum $V_0$.
Under this correspondence, the following result is a direct consequence of Theorem~\ref{thm:nonredun}.

\begin{corollary}[Kink Hyperplane Arrangement]\label{coro:kink} 
    Under the same notations and assumptions as in Theorem~\ref{thm:feature},
    assume that $(w_k,b_k)\neq (w_{k'},b_{k'})$ whenever $k\neq k'$.
    Let $\Pi$ denote the collection of hyperplanes $\pi_k=\set{x\colon x^\top w_k+b_k=0}$ for $k=1,\ldots,K$. 
    For any dichotomy $\ct$ of $\cx$, let $\cd(\ct)$ denote the set of all trichotomies that are degenerations of $\ct$ and are realized by hyperplanes in $\Pi$. 
    The following holds:
    \begin{enumerate}

        \item[(i)]
        Any trichotomy of the training set $\cx$ is realized by most two hyperplanes in $\Pi$. 

        \item[(ii)] Consider the dichotomy $\ct_0=(\set{\cx, \varnothing},\varnothing)$. 
        Then $\ct_0$ is realized by at most one hyperplane in $\Pi$. 
        Moreover, $\cd(\ct_0)$ contains at most two elements.
        When $\cd(\ct_0)$ contains two elements, both elements are strict trichotomies.

        \item[(iii)] 
        Given an arbitrary dichotomy $\ct$ of $\cx$,  $\cd(\ct)$ has at most four elements. 
        At least $2|\cd(\ct)|-4$ of the elements in $\cd(\ct)$ are strict trichotomies, where $|\cd(\ct)|$ denotes the cardinality of $\cd(\ct)$. 
    \end{enumerate}
\end{corollary}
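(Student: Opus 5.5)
The plan is to translate Corollary~\ref{coro:kink} back into statements about ternary activation patterns and then read each item off Theorem~\ref{thm:nonredun}. The dictionary is as follows. Each hyperplane $\pi_k\in\Pi$ realizes the trichotomy obtained from the pattern $A_k$ by forgetting which side is labelled $+$. A realizable trichotomy $\ct$ other than $(\set{\varnothing,\varnothing},\cx)$ therefore corresponds to exactly two ternary patterns, $A$ and $-A$. These form an antipodal pair of strata $\set{V_A,V_{-A}}$. Moreover, $\ct'\preceq\ct$ for trichotomies holds if and only if $A'\preceq A$ or $A'\preceq -A$ for the corresponding patterns.

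Since the directions $(w_k,b_k)$ are assumed distinct, Theorem~\ref{thm:nonredun}(i) tells us the patterns $A_k$ are pairwise distinct. The trichotomy $(\set{\varnothing,\varnothing},\cx)$ is never realized, because $V_0\subset\cl(V_{-1})$ and Proposition~\ref{prop:strat}(ii) excludes it.

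\textbf{Item (i).} A trichotomy with patterns $\pm A$ is realized by $\pi_k$ only if $A_k\in\set{A,-A}$. By distinctness, at most one $k$ has $A_k=A$ and at most one has $A_k=-A$. This gives at most two hyperplanes.

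\textbf{Item (ii).} The dichotomy $\ct_0$ corresponds to the binary patterns $\mathbf 1$ and $-\mathbf 1$. Theorem~\ref{thm:nonredun}(ii) rules out $A_k=-\mathbf 1$, and indeed rules out any degeneration of $-\mathbf 1$, since each $A_k$ must take the value $1$ somewhere. So $\ct_0$ is realized at most once, and every element of $\cd(\ct_0)$ comes from some $A_k$ with $\mathbf 1\preceq A_k$. Theorem~\ref{thm:nonredun}(iii) applied to $\hat A=\mathbf 1$ then gives $|\cd(\ct_0)|\le 2$, with both elements strict when there are two.

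One step needs care here: distinct $A_k$ might give the same trichotomy. This happens only for $A_k=-A_{k'}$, and it cannot occur inside $\cd(\mathbf 1)$. Indeed, both would be degenerations of $\mathbf 1$, forcing $A_k=-A_k$, i.e. $A_k\equiv 0$. That is excluded by (ii).

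\textbf{Item (iii).} For a general dichotomy $\ct$ with binary patterns $\pm\hat A$, the set $\cd(\ct)$ is the image of $\cd(\hat A)\cup\cd(-\hat A)$ under the map sending a pattern to its trichotomy. By Theorem~\ref{thm:nonredun}(iii), each of the two pattern sets has at most two elements, so $|\cd(\ct)|\le 4$.

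For the count of strict elements, I would split into cases according to how many of $\cd(\hat A)$ and $\cd(-\hat A)$ have size two. A set of size two contributes only strict patterns. A set of size at most one contributes at most one non-strict element. Hence the number of non-binary elements in $\cd(\ct)$ is at most the number of sides of size $\le 1$.

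The main obstacle is this bookkeeping under the quotient $A\mapsto\pm A$. A pattern $A_k$ lying in $\cd(\hat A)$ and some $A_{k'}=-A_k$ lying in $\cd(-\hat A)$ would collapse to a single trichotomy. I would handle this by counting directly on trichotomies. Write $n$ for the number of non-strict (dichotomy) elements of $\cd(\ct)$. Only $\ct$ itself can be a non-strict degeneration, so $n\le 1$.

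\emph{Case $|\cd(\ct)|\le 2$.} The required count $2|\cd(\ct)|-4\le 0$ is trivial.

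\emph{Case $|\cd(\ct)|\ge 3$.} Then at least one side, say $\cd(\hat A)$, has two elements, and both of these are strict.
\begin{itemize}
\item If $|\cd(\ct)|=3$, at least two elements are strict, which meets $2\cdot 3-4=2$.
\item If $|\cd(\ct)|=4$, both sides have size two with no collapse, so all four elements are strict, which meets $2\cdot 4-4=4$.
\end{itemize}
This yields the bound $2|\cd(\ct)|-4$.
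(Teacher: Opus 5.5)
Your proposal is correct and takes the same route as the paper. The paper obtains the corollary directly from Theorem~\ref{thm:nonredun} by identifying each realizable trichotomy with an antipodal pair $\pm A$ of ternary patterns, and the all-zero trichotomy with $V_0$. You spell out the bookkeeping the paper leaves implicit: the only possible non-strict element of $\cd(\ct)$ is $\ct$ itself, and two patterns in the same $\cd(\hat A)$ can never collapse to a single trichotomy.
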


\citet{shevchenko2022mean} showed that the solution function has at most three knots between any two consecutive training points and, when three knots
occur, two must coincide with the endpoints of the interval, while the third lies strictly in its interior. 
We explain that this is covered as a special case of Corollary~\ref{coro:kink}. 
Consider a set of univariate training inputs $\cx=\set{x_j}_{j=1}^M$ and assume that $-L=x_0<x_1<\cdots<x_M<x_{M+1}=L$ for some sufficiently large $L>0$. 
Fix a $j\in \set{1,\ldots, M-1}$. 
Let $\mathcal{T}$ denote the dichotomy $\cx=\set{x_{j'}}_{j'\leq j}\sqcup \set{x_{j'}}_{j'>j}$. 
Note that $|\cd(\ct)|$ precisely counts the knots in the closed interval $[x_{j},x_{j+1}]$. 
By Corollary~\ref{coro:kink}, 
if $|\cd(\ct)|=3$, then at least two of the corresponding hyperplanes realize strict trichotomies. In the univariate setting, this means that two of the associated knots must lie exactly at $x_{j}$ and $x_{j+1}$, while the remaining knot must lie in $(x_{j},x_{j+1})$. 
On the other hand, if $|\cd(\ct)|=4$, then Corollary~\ref{coro:kink} implies that all four corresponding hyperplanes must realize strict trichotomies. This is impossible on the interval $[x_{j},x_{j+1}]$, since only the end points correspond to strict trichotomies. 
Regarding the dichotomy $\ct_0=(\set{\cx, \varnothing},\varnothing)$, 
note that $\mathcal{D}(\ct_0)$ counts the knots in the intervals $[x_0, x_1]$ and $[x_M,x_{M+1}]$. 
By Corollary~\ref{coro:kink}, there are at most two knots on these two closed intervals.
Moreover, if two knots are present, they must occur precisely at the boundary data points $x_1$ and $x_M$.

\newpage 
\vskip 0.2in
\bibliography{refs}

\end{document}